\DeclareFontFamily{OMX}{MnSymbolE}{}
\DeclareSymbolFont{MnLargeSymbols}{OMX}{MnSymbolE}{m}{n}
\DeclareFontShape{OMX}{MnSymbolE}{m}{n}{
    <-6>  MnSymbolE5
    <6-7>  MnSymbolE6
    <7-8>  MnSymbolE7
    <8-9>  MnSymbolE8
    <9-10> MnSymbolE9
    <10-12> MnSymbolE10
    <12->   MnSymbolE12
}{}
\DeclareFontShape{OMX}{MnSymbolE}{b}{n}{
    <-6>  MnSymbolE-Bold5
    <6-7>  MnSymbolE-Bold6
    <7-8>  MnSymbolE-Bold7
    <8-9>  MnSymbolE-Bold8
    <9-10> MnSymbolE-Bold9
    <10-12> MnSymbolE-Bold10
    <12->   MnSymbolE-Bold12
}{}
\numberwithin{equation}{section}
\newcommand{\indep}{\perp \!\!\! \perp}
\newcommand{\bftheta}{\boldsymbol{\theta}}
\newcommand{\bfepsilon}{\boldsymbol{\epsilon}}
\newcommand{\bfa}{\mathbf{a}}
\newcommand{\bfA}{\mathbf{A}}
\newcommand{\bfm}{\mathbf{m}}
\newcommand{\bfo}{\mathbf{o}}
\newcommand{\bfO}{\mathbf{O}}
\newcommand{\bfs}{\mathbf{s}}
\newcommand{\bfu}{\mathbf{u}}
\newcommand{\bfv}{\mathbf{v}}
\newcommand{\bfx}{\mathbf{x}}
\newcommand{\bfX}{\mathbf{X}}
\newcommand{\bfy}{\mathbf{y}}
\newcommand{\bfY}{\mathbf{Y}}
\newcommand{\bfz}{\mathbf{z}}
\newcommand{\bfzero}{\mathbf{0}}
\newcommand{\bbA}{\mathbb{A}}
\newcommand{\bbB}{\mathbb{B}}
\newcommand{\bbE}{\mathbb{E}}
\newcommand{\bbI}{\mathbb{I}}
\newcommand{\bbL}{\mathbb{L}}
\newcommand{\bbN}{\mathbb{N}}
\newcommand{\bbO}{\mathbb{O}}
\newcommand{\bbP}{\mathbb{P}}
\newcommand{\bbR}{\mathbb{R}}
\newcommand{\bbS}{\mathbb{S}}
\newcommand{\bbX}{\mathbb{X}}
\newcommand{\bbY}{\mathbb{Y}}
\newcommand{\calA}{\mathcal{A}}
\newcommand{\calC}{\mathcal{C}}
\newcommand{\calD}{\mathcal{D}}
\newcommand{\calF}{\mathcal{F}}
\newcommand{\calG}{\mathcal{G}}
\newcommand{\calH}{\mathcal{H}}
\newcommand{\calI}{\mathcal{I}}
\newcommand{\calL}{\mathcal{L}}
\newcommand{\calM}{\mathcal{M}}
\newcommand{\calN}{\mathcal{N}}
\newcommand{\calR}{\mathcal{R}}
\newcommand{\calV}{\mathcal{V}}
\newcommand{\calZ}{\mathcal{Z}}
\newtheorem{defi}{Definition}[section]
\newtheorem{thm}{Theorem}[section]
\newtheorem{cor}{Corollary}[section]
\newtheorem{prop}{Proposition}[section]
\newtheorem{lemma}{Lemma}[section]
\newtheorem{assumption}{Assumption}
\newcommand{\RNum}[1]{\uppercase\expandafter{\romannumeral #1\relax}}
\DeclareMathOperator*{\argmin}{argmin}
\let\llangle\@undefined
\let\rrangle\@undefined
\DeclareMathDelimiter{\llangle}{\mathopen}%
                     {MnLargeSymbols}{'164}{MnLargeSymbols}{'164}
\DeclareMathDelimiter{\rrangle}{\mathclose}%
                     {MnLargeSymbols}{'171}{MnLargeSymbols}{'171}
\begin{document}
 
\title{Off-policy estimation with adaptively collected data:\\ the power of online learning}

\author{Jeonghwan Lee\footnote{E-mail: \href{mailto:jhlee97@uchicago.edu}{\texttt{jhlee97@uchicago.edu}}.} \ and Cong Ma\footnote{E-mail: \href{mailto:congm@uchicago.edu}{\texttt{congm@uchicago.edu}}.}}
\affil{Department of Statistics,  University of Chicago}

\maketitle

\begin{abstract}
We consider estimation of a linear functional of the treatment effect using adaptively collected data. This task finds a variety of applications including the off-policy evaluation (\textsf{OPE}) in contextual bandits, and estimation of the average treatment effect (\textsf{ATE}) in causal inference. While a certain class of augmented inverse propensity weighting (\textsf{AIPW}) estimators enjoys desirable asymptotic properties including the semi-parametric efficiency, much less is known about their non-asymptotic theory with adaptively collected data. To fill in the gap, we first establish generic upper bounds on the mean-squared error of the class of AIPW estimators that crucially depends on a sequentially weighted error between the treatment effect and its estimates. Motivated by this, we also propose a general reduction scheme that allows one to produce a sequence of estimates for the treatment effect via online learning to minimize the sequentially weighted estimation error. To illustrate this, we provide three concrete instantiations in (\romannumeral 1) the tabular case; (\romannumeral 2) the case of linear function approximation; and (\romannumeral 3) the case of general function approximation for the outcome model. We then provide a local minimax lower bound to show the instance-dependent optimality of the \textsf{AIPW} estimator using no-regret online learning algorithms.
\end{abstract}

\section{Introduction}
\label{sec:intro}

Estimating a linear functional of the treatment effect is of great importance in both the literature of causal inference and reinforcement learning (\textsf{RL}). For instance, in causal inference, one is interested in estimating the average treatment effect (\textsf{ATE}) \cite{hirano2003efficient} or their weighted variants, and in the bandits and \textsf{RL} literature, one is interested in estimating the expected reward of a target policy \cite{li2015toward, wang2017optimal, ma2022minimax,li2023sharp}. Two main challenges arise when tackling this problem:
\begin{itemize}
    \item \textbf{Off-policy estimation}: Oftentimes, one needs to estimate a linear functional of the treatment effect based on observational dataset collected from a behavior policy. This behavior policy may not match the desired distribution specified by the linear functional \cite{mou2022off};
    \item \textbf{Adaptive data collection mechanism}: It is increasingly common for observational datasets to be adaptively collected due to the use of online algorithms (e.g., contextual bandit algorithms \cite{thompson1933likelihood, lai1985asymptotically, agrawal2013thompson, russo2018tutorial, lattimore2020bandit}) in experimental design \cite{zhan2021off}. 
\end{itemize}

\indent In this paper, we deal with two challenges simultaneously by investigating the estimation task of a linear functional of the treatment effect from an adaptively collected dataset. When the observational dataset is collected non-adaptively, i.e., in an i.i.d. manner, there is an extensive line of work \cite{robins1995analysis, robins1995semiparametric, dudik2011doubly, imbens2015causal, agarwal2017effective, kallus2021optimal, narita2019efficient, chernozhukov2018double, armstrong2021finite, wang2017optimal, ma2022minimax} investigating the asymptotic and non-asymptotic theory of a variety of estimators. Most notably are the study \cite{chernozhukov2018double} that establishes the asymptotic efficiency of a family of semi-parametric estimators, and a more recent work \cite{mou2022off} that undertakes a finite-sample analysis unveiling the importance of a certain weighted $\ell_2$-norm for estimation of a linear functional of the treatment effect. On the other hand, when it comes to adaptively collected data, most prior works \cite{hadad2019confidence, zhan2021off} focus on the asymptotic normality of the estimators, and do not discuss the finite-sample analysis of the estimators. In this paper, we aim to fill in this gap.

\subsection{Main contributions}
\label{subsec:main_contributions}
More specifically, we make the following three main contributions in this paper: 
\begin{itemize}
    \item First, we present generic finite-sample upper bounds on the mean-squared error (\textsf{MSE}) of the class of \emph{augmented inverse propensity weighting} (\textsf{AIPW}) estimators that crucially depends on a sequentially weighted error between the treatment effect and its estimates. This sequentially weighted estimation error demonstrates a clear effect of history-dependent behavior policies;
    \item Second, motivated by previous observations, we propose a general reduction scheme that allows one to form a sequence of estimates for the treatment effect via online learning to minimize the sequentially weighted estimation error. In order to demonstrate this, we provide three concrete instantiations in (\romannumeral 1) the tabular case; (\romannumeral 2) the case of linear function approximation; and (\romannumeral 3) the case of general function approximation for the outcome model;
    \item In the end, we provide a local minimax lower bound to showcase the instance-dependent optimality of the \textsf{AIPW} estimator using no-regret online learning algorithms in the large-sample regime.
\end{itemize}

\subsection{Related works}
\label{subsec:related_works}

\paragraph{Off-policy estimation with observational data}
Off-policy estimation in observational settings has been a central topic in statistics, operations research, causal inference, and \textsf{RL}. Here, we group a few prominent off-policy estimators into the following three categories: (\romannumeral 1) \emph{Model-based estimator}: it is often dubbed as the \emph{direct method} (\textsf{DM}), whose key idea is to utilize observational data to learn a regression model that predicts outcomes for each state-action pair, and then average these model predictions \cite{kang2007demystifying, dudik2011doubly, dudik2014doubly, little2019statistical}. Due to model mis-specification, the \textsf{DM} typically has a low variance but might lead to highly biased estimation results. (\romannumeral 2) \emph{Inverse propensity weighting} (\textsf{IPW}): for the \textsf{OPE} task, \textsf{IPW} uses importance weighting to account for the distribution mismatch between the behavioral policy and the target policy \cite{horvitz1952generalization, strehl2010learning}. If the behavioral policy differs significantly from the target policy, then the \textsf{IPW} can have an overly large variance (known as the \emph{low overlap} issue) \cite{imbens2004nonparametric}. Typical remedies for this issue include propensity clipping \cite{ionides2008truncated, su2019cab} or self-normalization \cite{hesterberg1995weighted, swaminathan2015self}. (\romannumeral 3) \emph{Hybrid estimator}: some estimators (e.g., the doubly-robust (\textsf{DR}) estimator \cite{dudik2011doubly}) combine \textsf{DM} and \textsf{IPW} together to leverage their complementary strengths \cite{robins2007comment, dudik2011doubly, dudik2014doubly, thomas2016data, farajtabar2018more, su2020doubly, wang2017optimal}. A central asymptotic results in \textsf{OPE} is that the cross-fitted \textsf{DR} estimator is $\sqrt{n}$-consistent and asymptotically efficient (that is, it attains the lowest possible asymptotic variance), even for the case where nuisance parameters are estimated at rates slower than $\sqrt{n}$-rates \cite{chernozhukov2018double}. However, these methods still might be vulnerable to the low overlap issue especially for large or continuous action spaces. Hence, there has been a line of recent explorations on \textsf{OPE} for large action spaces \cite{felicioni2022off, saito2022off, peng2023offline, saito2023off} and \textsf{OPE} for continuous action space \cite{kallus2018policy, lee2022local, wang2024oracle}.

\paragraph{Off-policy estimation with adaptively collected data}
A recent strand of works studied asymptotic theory of adaptive variants of the \textsf{IPW} and the \textsf{DR} estimators (e.g., asymptotic normality, semi-parametric efficiency, and confidence intervals) \cite{kato2020efficient, dai2024clip, cook2024semiparametric} for adaptively collected data. However, in adaptive experiments, overlap between the behavioral policies and the target policy can deteriorate since the experimenter shifts the behavioral policies in response to what the learner observes (well-known as the \emph{drifting overlap}) \cite{zhan2021off}. It may engender unacceptably large variances of the \textsf{IPW} and \textsf{DR} estimators. To address this large variance problem, there has been a recent strand of studies investigating variance reduction strategies for the \textsf{DR} estimator based on shrinking importance weights toward one \cite{bottou2013counterfactual, wang2017optimal, su2019cab, su2020doubly}, local stabilization \cite{luedtke2016statistical, zhang2020inference}, and adaptive weighting \cite{hadad2021confidence, zhan2021off}. Recent studies on policy learning with adaptively collected data \cite{zhan2023policy, jin2022policy} explored the adaptive weighting \textsf{DR} estimator for policy learning. In contrast with the majority of existing works on off-policy estimation using adaptively collected data that focus on asymptotic results, this paper aims at establishing non-asymptotic theory of the problem. While several researchers have been recently explored non-asymptotic results of the problem with an emphasis on uncertainty quantification \cite{karampatziakis2021off, waudby2024anytime}, we focus on analyses of estimation procedures of the off-policy value. As most existing standard objects for uncertainty quantification, such as a confidence interval (\textsf{CI}), take a very static view of the world (e.g., it holds for a fixed sample size and is not designed for interactive/adaptive data collection procedures), the aforementioned two papers \cite{karampatziakis2021off, waudby2024anytime} instead study a more suitable statistical tool for such cases called a \emph{confidence sequence}.

\section{Problem formulation}
\label{sec:problem_formulation}

We formulate our problem with the language of contextual bandits: let $\bbX$, $\bbA$, and $\bbY \subseteq \bbR$ denote the \emph{context space}, the \emph{action space}, and the \emph{outcome space}, respectively. Denote by $\bbO := \bbX \times \bbA \times \bbY$ the space of all possible context-action-outcome triples. In an adaptive experiment, we observe $n$ samples $\left\{ \left( X_i, A_i, Y_i \right) \in \bbO : i \in [n] \right\}$ produced by the following data generating procedure \cite{jin2022policy, zhan2023policy}: At each stage $i \in [n]$, 
\begin{enumerate} [label = (\roman*)]
    \item A context $X_i \in \bbX$ is independently sampled from a fixed \emph{context distribution} $\Xi^* (\cdot) \in \Delta (\bbX)$;
    \item There exists a \emph{behavioral policy} $\Pi_{i}^* (\cdot, \cdot): \bbX \times \bbO^{i-1} \to \Delta (\bbA)$ that selects the $i$-th action as $A_i \left| X_i, \bfO_{i-1} \right. \sim \Pi_{i}^* \left( \cdot \left| X_i, \bfO_{i-1} \right. \right)$, where $\bfO_i := \left( X_1, A_1, Y_1, \cdots, X_i, A_i, Y_i \right) \in \bbO^i$ for $i \in [n]$. Since $\Pi_{i}^* \left( \cdot \left| X_i, \bfO_{i-1} \right. \right)$ may depend on previous observations, $\left\{ \left( X_i, A_i, Y_i \right): i \in [n] \right\}$ are no longer i.i.d.;
    \item Given a Markov kernel $\Gamma^* (\cdot, \cdot): \bbX \times \bbA \to \Delta (\bbY)$, we assume that the outcome is generated according to $Y_i \sim \Gamma^* \left( \cdot \left| X_i, A_i \right. \right)$. Moreover, the conditional mean of the outcome $Y_i \in \bbY$ is specified as
    \[
        \bbE \left[ Y_i \left| X_i, A_i \right. \right] = \int_{\bbY} y \Gamma^* \left( \mathrm{d} y \left| X_i, A_i \right. \right) = \mu^* \left( X_i, A_i \right),
    \]
    where the function $\mu^* (\cdot, \cdot): \bbX \times \bbA \to \bbR$ is called the \emph{treatment effect} (in the literature of causal inference) or the \emph{reward function} (in bandit and \textsf{RL} literature). We note that the treatment effect $\mu^*$ is not revealed to the statistician. We also define the conditional variance function $\sigma^2 (\cdot, \cdot): \bbX \times \bbA \to \left[ 0, +\infty \right]$ defined by $\sigma^2 \left( x, a \right) := \bbE \left[ \left. \left\{ Y - \mu^* \left( X, A \right) \right\}^2 \right| \left( X, A \right) = \left( x, a \right) \right]$, which is assumed to satisfy $\sigma^2 (x, a) < +\infty$ for all state-action pairs $(x, a) \in \bbX \times \bbA$.
\end{enumerate}

\indent We embark on our discussion after assuming the existence of $\sigma$-finite base measures $\lambda_{\bbX} (\cdot)$, $\lambda_{\bbA} (\cdot)$, and $\lambda_{\bbY} (\cdot)$ over $\bbX$, $\bbA$, and $\bbY$, respectively, such that $\Xi^* (\cdot) \ll \lambda_{\bbX} (\cdot)$, $\Pi_{i}^* \left( \cdot \left| x, \bfo_{i-1} \right. \right) \ll \lambda_{\bbA} (\cdot)$ for every $\left( x, \bfo_{i-1} \right) \in \bbX \times \bbO^{i-1}$ and $i \in [n]$, and $\Gamma^* \left( \cdot \left| x, a \right. \right) \ll \lambda_{\bbY} (\cdot)$ for all state-action pairs $(x, a) \in \bbX \times \bbA$. The notation $\ll$ stands for the \emph{absolute continuity} between measures. Our main objective is to estimate the \emph{off-policy value} for any given target evaluation function $g (\cdot, \cdot): \bbX \times \bbA \to \bbR$ defined as
\begin{equation}
    \label{eqn:off_policy_value_functional}
    \begin{split}
        \tau^* = \tau \left( \calI^* \right) := \bbE_{X \sim \Xi^*} \left[ \left\langle g (X, \cdot), \mu^* (X, \cdot) \right\rangle_{\lambda_{\bbA}} \right],
    \end{split}
\end{equation}
where $\calI^* := \left( \Xi^*, \Gamma^* \right) \in \bbI := \Delta (\bbX) \times \left( \bbX \times \bbA \to \Delta (\bbY) \right)$ refers to the ground-truth \emph{problem instance}, and given any pair of functions $(f, g) \in \left( \bbA \to \bbR \right) \times \left( \bbA \to \bbR \right)$ such that $fg \in \bbL^1 \left( \bbA, \lambda_{\bbA} \right)$, we define the inner product
\[
    \left\langle f, g \right\rangle_{\lambda_{\bbA}} := \int_{\bbA} f (a) g (a) \mathrm{d} \lambda_{\bbA} (a).
\]
In this paper, we assume that the propensity scores $\left\{ \pi_{i}^* \left( X_i, \bfO_{i-1}; A_i \right) : i \in [n] \right\}$ are known to the statistician, where $\pi_{i}^* \left( x, \bfo_{i-1}; \cdot \right) := \frac{\mathrm{d} \Pi_{i}^* \left( \cdot \left| x, \bfo_{i-1} \right. \right)}{\mathrm{d} \lambda_{\bbA}}: \bbA \to \bbR_{+}$.
\medskip

\indent As we mentioned earlier in Section \ref{sec:intro}, the estimation problem of a linear functional of the treatment effect $\mu^*$ turns out to be useful in both causal inference and \textsf{RL} literature in the following sense:

\begin{itemize}[leftmargin=10pt]
    \item \textbf{Estimation of average treatment effects}: Consider the binary action space $\bbA = \left\{ 0, 1 \right\}$ equipped with the counting measure over $\bbA$. The \emph{average treatment effect} (\textsf{ATE}) in our problem setting is defined as the linear functional
    \[
        \begin{split}
            \textsf{ATE} := \bbE_{\calI^*} \left[ Y_{i} (1) - Y_{i} (0) \right] = \bbE_{X \sim \Xi^*} \left[ \mu^* \left( X, 1 \right) - \mu^* \left( X, 0 \right) \right].
        \end{split}
    \]
    Once we take the target evaluation function as $g (x, a) = 2a - 1$, the \textsf{ATE} boils down to a particular case of the equation \eqref{eqn:off_policy_value_functional};
    \item \textbf{Off-policy evaluation for contextual bandits}: We assume that a \emph{target policy} $\Pi^{\textnormal{target}} (\cdot): \bbX \to \Delta (\bbA)$ is given such that $\Pi^{\textnormal{target}} \left( \cdot \left| x \right. \right) \ll \lambda_{\bbA} (\cdot)$ for every context $x \in \bbX$. For simplicity, we denote $\pi^{\textnormal{target}} \left( x, \cdot \right) := \frac{\mathrm{d} \Pi^{\textnormal{target}} \left( \cdot \left| x \right. \right)}{\mathrm{d} \lambda_{\bbA}}: \bbA \to \bbR_{+}$ by the density function of the target policy for each context $x \in \bbX$. Once we take $g (x, a) = \pi^{\textnormal{target}} (x, a)$, then the linear functional \eqref{eqn:off_policy_value_functional} corresponds to the value of the target policy $\Pi^{\textnormal{target}}$. This task has been widely studied in the literature of bandits and \textsf{RL}, known as the \emph{off-policy evaluation} (\textsf{OPE}).
\end{itemize}

\noindent We conclude this section by introducing notations that will be useful in later sections: let $\bbP_{\calI}^{i} (\cdot) \in \Delta \left( \bbO^{i} \right)$ denote the law of the sample trajectory $\bfO_{i}$ under the sampling mechanism with a problem instance $\calI = \left( \Xi, \Gamma \right) \in \bbI$. We denote the density function of $\bbP_{\calI}^{i} (\cdot) \in \Delta \left( \bbO^i \right)$ with respect to the base measure $\left( \lambda_{\bbX} \otimes \lambda_{\bbA} \otimes \lambda_{\bbY} \right)^{\otimes i}$ by $p_{\calI}^{i} (\cdot): \bbO^i \to \bbR_{+}$. Lastly, we define the \emph{$k$-th weighted $\ell_2$-norm} for $k \in [n]$ as
\begin{equation}
    \label{eqn:weighed_l2_norm}
    \left\| \varphi \right\|_{(k)}^2 := \frac{1}{k} \sum_{i=1}^{k} \bbE_{\calI^*} \left[ \frac{g^2 \left( X_i, A_i \right) \varphi^2 \left( X_i, A_i \right)}{\left( \pi_{i}^* \right)^2 \left( X_i, \bfO_{i-1}; A_i \right)} \right]
\end{equation}
for any function $\varphi (\cdot, \cdot): \bbX \times \bbA \to \bbR$, together with the \emph{$k$-th weighted $\ell_2$-space} by
\[
    \bbL_{(k)}^2 := \left\{ \varphi (\cdot, \cdot) \in \left( \bbX \times \bbA \to \bbR \right): \left\| \varphi \right\|_{(k)} < +\infty \right\}.
\]

\section{A class of \textsf{AIPW} estimators and its non-asymptotic guarantees}
\label{sec:AIPW_non_asymptotic_guarantees}

The main objective of this section is to develop a \emph{meta-algorithm} to tackle the estimation problem of the off-policy value \eqref{eqn:off_policy_value_functional}, followed by a key rationale of the proposed procedure as a variance-reduction scheme of the standard \emph{inverse propensity weighting} (\textsf{IPW}) estimator.

\subsection{How can we reduce the variance of the \textsf{IPW} estimator?}
\label{subsubsec:variance_reduction}

Similar to the prior work \cite{mou2022off}, we consider a class of two-stage estimators obtained from simple perturbations of the \textsf{IPW} estimator. Given any collection $f := \left( f_i : \bbX \times \bbO^{i-1} \times \bbA \to \bbR : i \in [n] \right)$ of auxiliary functions, we consider the following \emph{perturbed} \textsf{IPW} \emph{estimator} $\hat{\tau}_{n}^{f} \left( \cdot \right): \bbO^n \to \bbR$:
\[
    \hat{\tau}_{n}^{f} \left( \bfo_n \right) := \frac{1}{n} \sum_{i=1}^{n} \left\{ \frac{g \left( x_i, a_i \right) y_i}{\pi_{i}^* \left( x_i, \bfo_{i-1}; a_i \right)} - f_i \left( x_i, \bfo_{i-1}, a_i \right) + \left\langle f_i \left( x_i, \bfo_{i-1}, \cdot \right), \pi_{i}^* \left( x_i, \bfo_{i-1}; \cdot \right) \right\rangle_{\lambda_{\bbA}} \right\}.
\]
For each $i \in [n]$, let $\nu_i (\cdot) \in \Delta \left( \bbX \times \bbO^{i-1} \times \bbA \right)$ denote the joint distribution of $\left( X_i, \bfO_{i-1}, A_i \right)$ induced by the adaptive data collection scheme described in Section \ref{sec:problem_formulation}. Then, we arrive at the following result whose proof is deferred to Appendix \ref{subsec:proof_prop:mean_variance_perturbed_ipw}:

\begin{prop}
\label{prop:mean_variance_perturbed_ipw}
For any given collection $f := \left( f_i \in L^2 \left( \nu_i \right) : i \in [n] \right)$ of auxiliary deterministic functions, it holds that $\bbE_{\calI^*} \left[ \hat{\tau}_{n}^{f} \left( \bfO_n \right) \right] = \tau \left( \calI^* \right)$. Furthermore, if
\begin{equation}
    \label{eqn:prop:mean_variance_perturbed_ipw_v1}
    \begin{split}
        \left\langle f_i \left( x, \bfo_{i-1}, \cdot \right), \pi_{i}^* \left( x, \bfo_{i-1}; \cdot \right) \right\rangle_{\lambda_{\bbA}} = 0,\ \forall \left( x, \bfo_{i-1} \right) \in \bbX \times \bbO^{i-1}
    \end{split}
\end{equation}
for each $i \in [n]$, then
\begin{align}
    \label{eqn:prop:mean_variance_perturbed_ipw_v2}
    & n \cdot \textnormal{\textsf{Var}}_{\calI^*} \left[ \hat{\tau}_{n}^{f} \left( \bfO_n \right) \right] 
    = \textnormal{\textsf{Var}}_{X \sim \Xi^*} \left[ \left\langle g ( X, \cdot ), \mu^* ( X, \cdot ) \right\rangle_{\lambda_{\bbA}} \right] + \left\| \sigma \right\|_{(n)}^2 \\
    &+ \frac{1}{n} \sum_{i=1}^{n} \bbE_{\calI^*} \left[ \left\{ \frac{g \left( X_i, A_i \right) \mu^* \left( X_i, A_i \right)}{\pi_{i}^* \left( X_i, \bfO_{i-1}; A_i \right)} - \left\langle g \left( X_i, \cdot \right), \mu^* \left( X_i, \cdot \right) \right\rangle_{\lambda_{\bbA}} - f_i \left( X_i, \bfO_{i-1}, A_i \right) \right\}^2 \right]. \nonumber
\end{align}
\end{prop}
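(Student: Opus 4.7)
The plan is to prove unbiasedness by iteratively taking conditional expectations down the natural filtration, and then to prove the variance identity by first identifying the summands of $\hat\tau_n^f$ as a centered martingale difference sequence (so that cross-covariances vanish) and subsequently applying the law of total variance twice on each summand.

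For unbiasedness, I would work with the filtration $\calF_{i-1} := \sigma(\bfO_{i-1})$ and then enlarge it to $\sigma(X_i,\bfO_{i-1})$ and $\sigma(X_i,\bfO_{i-1},A_i)$. For the $i$-th summand, I would first take $\bbE[\,\cdot\,|X_i,\bfO_{i-1},A_i]$, which replaces $Y_i$ by $\mu^*(X_i,A_i)$ by construction of $\Gamma^*$ and leaves the two $f_i$-terms untouched. Next I would take $\bbE[\,\cdot\,|X_i,\bfO_{i-1}]$, integrating over $A_i$ with density $\pi_i^*(X_i,\bfO_{i-1};\cdot)$: the first term yields $\langle g(X_i,\cdot),\mu^*(X_i,\cdot)\rangle_{\lambda_\bbA}$, and the second term yields $\langle f_i(X_i,\bfO_{i-1},\cdot),\pi_i^*(X_i,\bfO_{i-1};\cdot)\rangle_{\lambda_\bbA}$, which exactly cancels the third term. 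Finally, since $X_i \sim \Xi^*$ is independent of $\bfO_{i-1}$, the outer expectation gives $\tau(\calI^*)$. This part is routine provided one is careful with the order of integrations.

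For the variance, set $Z_i := g(X_i,A_i)Y_i/\pi_i^*(X_i,\bfO_{i-1};A_i) - f_i(X_i,\bfO_{i-1},A_i)$ (the orthogonality condition \eqref{eqn:prop:mean_variance_perturbed_ipw_v1} kills the third term in the estimator). The key step is to show $\bbE[Z_i\mid \calF_{i-1}] = \tau(\calI^*)$ for every $i$: a repeat of the tower argument above, now crucially using \eqref{eqn:prop:mean_variance_perturbed_ipw_v1} to annihilate the $f_i$-contribution at the $\bbE[\,\cdot\,|X_i,\bfO_{i-1}]$ step, reduces $\bbE[Z_i|X_i,\bfO_{i-1}]$ to $\langle g(X_i,\cdot),\mu^*(X_i,\cdot)\rangle_{\lambda_\bbA}$, whose $\Xi^*$-expectation is $\tau(\calI^*)$. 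Hence $\{Z_i-\tau(\calI^*)\}$ is a martingale difference sequence, so $\mathrm{Cov}(Z_i,Z_j)=0$ for $i\ne j$ and $n^2\,\mathsf{Var}[\hat\tau_n^f]=\sum_{i=1}^n \mathsf{Var}[Z_i]$.

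It then remains to decompose $\mathsf{Var}[Z_i]$ into the three advertised pieces, which is the main technical obstacle (the bookkeeping, not the ideas). I would apply the law of total variance first with respect to $\sigma(X_i,\bfO_{i-1},A_i)$: the conditional variance picks out only the randomness of $Y_i$, producing $g^2(X_i,A_i)\sigma^2(X_i,A_i)/(\pi_i^*)^2(X_i,\bfO_{i-1};A_i)$, whose unconditional expectation summed/averaged over $i$ is exactly $\|\sigma\|_{(n)}^2$ by definition \eqref{eqn:weighed_l2_norm}. For the remaining $\mathsf{Var}\bigl[\bbE[Z_i|X_i,\bfO_{i-1},A_i]\bigr]$, I apply the law of total variance a second time, conditioning on $\sigma(X_i,\bfO_{i-1})$: the inner conditional variance, by the identity $\bbE[Z_i|X_i,\bfO_{i-1}]=\langle g(X_i,\cdot),\mu^*(X_i,\cdot)\rangle_{\lambda_\bbA}$ derived above, equals the conditional second moment of $g(X_i,A_i)\mu^*(X_i,A_i)/\pi_i^*(X_i,\bfO_{i-1};A_i)-\langle g(X_i,\cdot),\mu^*(X_i,\cdot)\rangle_{\lambda_\bbA}-f_i(X_i,\bfO_{i-1},A_i)$, matching the third term on the right-hand side of \eqref{eqn:prop:mean_variance_perturbed_ipw_v2}. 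The outer variance is $\mathsf{Var}_{X\sim\Xi^*}\bigl[\langle g(X,\cdot),\mu^*(X,\cdot)\rangle_{\lambda_\bbA}\bigr]$, again because this quantity depends only on $X_i\sim\Xi^*$ and is independent of $\bfO_{i-1}$. Summing over $i$, dividing by $n$, and collecting terms yields \eqref{eqn:prop:mean_variance_perturbed_ipw_v2}.
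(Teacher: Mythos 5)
Your proposal is correct and follows essentially the same route as the paper: iterated conditioning for unbiasedness, vanishing cross-covariances via the tower property and the independence of $X_j$ from $\bfO_{j-1}$ (the paper computes the covariances directly rather than invoking martingale-difference language, but it is the same fact), and the paper's "step (a)" is exactly your second application of the law of total variance conditioning on $\sigma\left( X_i, \bfO_{i-1} \right)$. No gaps.
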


\indent From the decomposition \eqref{eqn:prop:mean_variance_perturbed_ipw_v2} of the variance of the perturbed \textsf{IPW} estimate $\hat{\tau}_{n}^{f} \left( \bfO_n \right)$, one can observe that the only term which depends on the collection of auxiliary functions $f$ is the third term. 
More importantly, the third term is equal to zero if and only if
\begin{equation}
    \label{eqn:optimal_auxiliary_functions}
    \begin{split}
        f_{i} \left( x, \bfo_{i-1}, a \right) = f_{i}^* \left( x, \bfo_{i-1}, a \right) := \frac{g \left( x, a \right) \mu^* \left( x, a \right)}{\pi_{i}^* \left( x, \bfo_{i-1}; a \right)} - \left\langle g (x, \cdot), \mu^* (x, \cdot) \right\rangle_{\lambda_{\bbA}}.
    \end{split}
\end{equation}
The collection of functions $f^* := \left( f_{i}^* \in L^2 \left( \nu_i \right) : i \in [n] \right)$ minimizing the third term in the right-hand side of the equation \eqref{eqn:prop:mean_variance_perturbed_ipw_v2} yields the \emph{oracle estimator} $\hat{\tau}_{n}^{f^*} \left( \cdot \right): \bbO^n \to \bbR$ 
\begin{equation}
    \label{eqn:oracle_IPW_estimator}
    \begin{split}
        \hat{\tau}_{n}^{f^*} \left( \bfO_n \right) = \frac{1}{n} \sum_{i=1}^{n} \left\{ \frac{g \left( X_i, A_i \right) \left\{ Y_i - \mu^* \left( X_i, A_i \right) \right\}}{\pi_{i}^* \left( X_i, \bfO_{i-1}; A_i \right)} + \left\langle g \left( X_i, \cdot \right), \mu^* \left( X_i, \cdot \right) \right\rangle_{\lambda_{\bbA}} \right\},
    \end{split}
\end{equation}
whose variance is given by
\begin{equation}
    \label{eqn:optimal_variance}
    \begin{split}
        n \cdot \textsf{Var}_{\calI^*} \left[ \hat{\tau}_{n}^{f^*} \left( \bfO_n \right) \right] = v_{*}^2 := \textsf{Var}_{X \sim \Xi^*} \left[ \left\langle g ( X, \cdot ), \mu^* ( X, \cdot ) \right\rangle_{\lambda_{\bbA}} \right] + \left\| \sigma \right\|_{(n)}^2.
    \end{split}
\end{equation}

\subsection{The class of augmented \textsf{IPW} estimators}
\label{subsubsec:aipw_estimator}

Since the treatment effect $\mu^*$ is not revealed to the statistician in \eqref{eqn:oracle_IPW_estimator}, it is impossible to exactly compute the oracle estimate $\hat{\tau}_{n}^{f^*} \left( \bfO_n \right)$ using only the observational dataset $\bfO_n$. Therefore, one natural remedy would be the following two-stage procedure, which is referred to as the \emph{augmented inverse propensity weighting} (\textsf{AIPW}) estimator or the \emph{doubly-robust} (\textsf{DR}) estimator \cite{dudik2011doubly, robins1994estimation, van2008construction, hadad2021confidence, zhan2021off, howard2021time}: (\romannumeral 1) we first compute a sequence of estimates $\left\{ \hat{\mu}_{i} \left( \bfO_{i-1} \right) \in \left( \bbX \times \bbA \to \bbR \right) : i \in [n] \right\}$ of the treatment effect; and then (\romannumeral 2) we plug-in these estimates to the equation \eqref{eqn:oracle_IPW_estimator} to construct an approximation to the ideal estimate $\hat{\tau}_{n}^{f^*} \left( \bfO_n \right)$. We summarize this two-stage procedure in Algorithm \ref{alg:aipw_estimator}.
\medskip

\begin{algorithm}[t]
\caption{Meta-algorithm: augmented inverse propensity weighting (\textsf{AIPW}) estimator.}
\label{alg:aipw_estimator}
\begin{algorithmic}[1]
    \Require{the dataset $\calD = \left\{ \left( X_i, A_i, Y_i \right) \in \bbO: i \in [n] \right\}$ and an evaluation function $g: \bbX \times \bbA \to \bbR$.}
    \State For each step $i \in [n]$, we compute an estimate $\hat{\mu}_{i} \left( \bfO_{i-1} \right) \in \left( \bbX \times \bbA \to \bbR \right)$ of the treatment effect based on the sample trajectory $\bfO_{i-1}$ up to the $(i-1)$-th step. \textcolor{blue}{\texttt{// Implement Algorithm \ref{alg:online_regression_v1} as a subroutine;}}
    \State Consider the \textsf{AIPW} estimator (a.k.a., the \emph{doubly-robust} (\textsf{DR}) estimator) $\hat{\tau}_{n}^{\textsf{AIPW}} \left( \cdot \right): \bbO^n \to \bbR$:
    \begin{equation}
        \label{eqn:alg:aipw_estimator_v1}
        \begin{split}
            \hat{\tau}_{n}^{\textsf{AIPW}} \left( \bfo_n \right) := \ & \frac{1}{n} \sum_{i=1}^{n} \hat{\Gamma}_{i} \left( \bfo_{i} \right),
        \end{split}
    \end{equation}
    where the objects being averaged are the \textsf{AIPW} scores $\hat{\Gamma}_{i} (\cdot): \bbO^{i} \to \bbR$ is defined by
    \begin{equation}
        \label{eqn:alg:aipw_estimator_v2}
        \begin{split}
            \hat{\Gamma}_{i} \left( \bfo_{i} \right) := \frac{g \left( x_i, a_i \right)}{\pi_{i}^* \left( x_i, \bfo_{i-1}; a_i \right)} \left\{ y_i - \hat{\mu}_{i} \left( \bfo_{i-1} \right) \left( x_i, a_i \right) \right\} + \left\langle g \left( x_i, \cdot \right), \hat{\mu}_{i} \left( \bfo_{i-1} \right) \left( x_i, \cdot \right) \right\rangle_{\lambda_{\bbA}}.
        \end{split}
    \end{equation}
    \State \Return the \textsf{AIPW} estimate $\hat{\tau}_{n}^{\textsf{AIPW}} \left( \bfO_n \right)$.
\end{algorithmic}
\end{algorithm}

\indent We pause here to compare the setting of our problem and algorithms with the most relevant work \cite{mou2022off}. We focus on off-policy estimation with adaptively collected data, which is technically more challenging compared to i.i.d.~data considered in \cite{mou2022off}. In the case of i.i.d.~data collection scheme, \cite{mou2022off} proposed a natural approach to construct a class of two-stage estimators as follows: (a) compute an estimate $\hat{\mu}$ of the treatment effect $\mu^*$ utilizing part of the dataset; and (b) substitute this estimate in the equation \eqref{eqn:oracle_IPW_estimator} of the oracle estimator. Note that the authors use the \emph{cross-fitting approach} \cite{chernozhukov2017double, chernozhukov2018double}, which allows to make full use of data to maintain efficiency and statistical power of machine learning algorithms for estimation of nuisance parameters while reducing over-fitting bias. However, the cross-fitting strategy heavily relies on the i.i.d.~nature of the data collection mechanism and therefore one cannot use it in the setting with adaptively collected data. Instead, one constructs an estimate $\hat{\mu}_i$ of the treatment effect $\mu^*$ based on the sample trajectory $\bfO_{i-1}$ at each stage, and then substitute these estimates into the equation \eqref{eqn:oracle_IPW_estimator}. This is one of main contributions to address the adaptive nature of our data generating mechanism. We will make use of the framework of online learning to construct a sequence of estimates for the treatment effect $\mu^*$.

\subsection{Theoretical guarantees of Algorithm \ref{alg:aipw_estimator}}
\label{subsec:theoretical_guarantees_aipw_estimator}

Throughout this section, we provide statistical guarantees for the class of \textsf{AIPW} estimators for dealing with estimation of the off-policy value \eqref{eqn:off_policy_value_functional}. The main result of this section can be summarized as the following non-asymptotic upper bound on the mean-squared error (\textsf{MSE}) of Algorithm \ref{alg:aipw_estimator}:

\begin{thm} [Non-asymptotic upper bound on the \textsf{MSE} of the \textsf{AIPW} estimator]
\label{thm:mse_aipw}
Given any sequence of estimates $\left\{ \hat{\mu}_{i} \left( \bfO_{i-1} \right) \in \left( \bbX \times \bbA \to \bbR \right): i \in [n] \right\}$ for the treatment effect $\mu^*$, the \textnormal{\textsf{AIPW}} estimator \eqref{eqn:alg:aipw_estimator_v1} has the \textnormal{\textsf{MSE}} bounded above by
\begin{equation}
    \label{eqn:thm:mse_aipw_v1}
    \begin{split}
        \bbE_{\calI^*} \left[ \left\{ \hat{\tau}_{n}^{\textnormal{\textsf{AIPW}}} \left( \bfO_n \right) - \tau \left( \calI^* \right) \right\}^2 \right] \leq \frac{1}{n} \left\{ v_{*}^2 + \frac{1}{n} \sum_{i=1}^{n} \bbE_{\calI^*} \left[ \frac{g^2 \left( X_i, A_i \right) \left\{ \hat{\mu}_{i} \left( \bfO_{i-1} \right) \left( X_i, A_i \right) - \mu^* \left( X_i, A_i \right) \right\}^2}{\left( \pi_{i}^* \right)^2 \left( X_i, \bfO_{i-1}; A_i \right)} \right] \right\}.
    \end{split}
\end{equation}
\end{thm}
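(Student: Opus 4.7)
The plan is to exploit the martingale-difference structure of the summands $\hat{\Gamma}_i(\bfO_i)$ under the natural filtration induced by the adaptive data collection, and then apply the law of total variance iteratively inside each conditional variance. Let $\calF_{i-1} := \sigma(\bfO_{i-1})$, with respect to which both $\hat{\mu}_{i}(\bfO_{i-1})$ and the propensity density $\pi_{i}^*(\cdot,\bfO_{i-1};\cdot)$ are measurable. The first step is to verify that $\bbE_{\calI^*}\!\left[\hat{\Gamma}_i(\bfO_i) \mid \calF_{i-1}\right] = \tau(\calI^*)$: integrating the \textsf{AIPW} score \eqref{eqn:alg:aipw_estimator_v2} against the conditional law of $(X_i,A_i,Y_i)$ given $\calF_{i-1}$, the $Y_i$-integration replaces $Y_i$ by $\mu^*(X_i,A_i)$, the $A_i$-integration against $\pi_i^*(X_i,\bfO_{i-1};\cdot)$ cancels the inverse-propensity factor and kills the $\hat{\mu}_i$ contribution, and the remaining $X_i$-integration leaves $\bbE_{X \sim \Xi^*}\!\left[\langle g(X,\cdot),\mu^*(X,\cdot)\rangle_{\lambda_{\bbA}}\right] = \tau(\calI^*)$. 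Consequently $\{\hat{\Gamma}_i(\bfO_i) - \tau(\calI^*)\}_{i=1}^n$ is a martingale-difference sequence, cross terms vanish in expectation, and I obtain
\[
    \bbE_{\calI^*}\!\left[\left\{\hat{\tau}_n^{\textsf{AIPW}}(\bfO_n) - \tau(\calI^*)\right\}^2\right] = \frac{1}{n^2} \sum_{i=1}^n \bbE_{\calI^*}\!\left[\textsf{Var}\!\left[\hat{\Gamma}_i(\bfO_i) \mid \calF_{i-1}\right]\right].
\]

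Next, I would upper bound each conditional variance by peeling off the randomness of $X_i$, then $A_i$, then $Y_i$. Conditioning on $(X_i,\calF_{i-1})$, the same cancellation used for unbiasedness yields $\bbE\!\left[\hat{\Gamma}_i \mid X_i,\calF_{i-1}\right] = \langle g(X_i,\cdot),\mu^*(X_i,\cdot)\rangle_{\lambda_{\bbA}}$, so the variance of this conditional mean over $X_i \sim \Xi^*$ contributes exactly $\textsf{Var}_{X\sim\Xi^*}[\langle g(X,\cdot),\mu^*(X,\cdot)\rangle_{\lambda_{\bbA}}]$. Conditioning further on $(X_i,A_i,\calF_{i-1})$, only $Y_i$ is random and its variance contribution is $g^2(X_i,A_i)\sigma^2(X_i,A_i)/(\pi_i^*)^2(X_i,\bfO_{i-1};A_i)$; integrating this against $A_i \sim \pi_i^*(X_i,\bfO_{i-1};\cdot)$ and $X_i \sim \Xi^*$, then averaging over $i$, produces exactly $\|\sigma\|_{(n)}^2$ by the definition \eqref{eqn:weighed_l2_norm}. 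Together these two pieces account for the $v_*^2$ term.

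The remaining contribution is the variance-in-$A_i$ of $\bbE\!\left[\hat{\Gamma}_i \mid X_i, A_i, \calF_{i-1}\right] = \frac{g(X_i,A_i)}{\pi_i^*(X_i,\bfO_{i-1};A_i)}\{\mu^* - \hat{\mu}_i\}(X_i,A_i) + \langle g(X_i,\cdot), \hat{\mu}_i(X_i,\cdot)\rangle_{\lambda_{\bbA}}$; since the second summand is constant in $A_i$, this reduces to the $A_i$-variance of the inverse-propensity-weighted residual $g(X_i,A_i)\{\mu^* - \hat{\mu}_i\}(X_i,A_i)/\pi_i^*(X_i,\bfO_{i-1};A_i)$, which I would upper bound by its second moment $\bbE_{A_i}\!\left[g^2(X_i,A_i)\{\hat{\mu}_i - \mu^*\}^2(X_i,A_i)/(\pi_i^*)^2\right]$. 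After taking outer expectations, summing over $i$, and dividing by $n^2$, this yields precisely the extra error term on the right-hand side of \eqref{eqn:thm:mse_aipw_v1}. The main technical subtlety — and the only place where adaptivity enters non-trivially — lies in the bookkeeping around the history-dependent quantities $\pi_i^*(X_i,\bfO_{i-1};\cdot)$ and $\hat{\mu}_i(\bfO_{i-1})$: their $\calF_{i-1}$-measurability must be invoked repeatedly to treat them as deterministic inside the inner conditional expectations, while every $A_i$-integration must use the correct history-dependent density.
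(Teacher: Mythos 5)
Your proposal is correct and follows essentially the same route as the paper: you establish the (conditional) unbiasedness of each \textsf{AIPW} score, use the resulting martingale-difference structure to kill the cross terms (the paper phrases this as $\textnormal{\textsf{Cov}}_{\calI^*}[\hat{\Gamma}_i, \hat{\Gamma}_j] = 0$ in Lemma \ref{lemma:aipw_v1}, which is equivalent), and then peel the conditional variance of each score via the law of total variance into the $X_i$-, $Y_i$-, and $A_i$-contributions, bounding the last by its second moment exactly as in \eqref{eqn:lemma:aipw_v1_v2}. No gaps.
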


\indent We notice that the non-asymptotic upper bound \eqref{eqn:thm:mse_aipw_v1} on the \textsf{MSE} for the class of \textsf{AIPW} estimators \eqref{eqn:alg:aipw_estimator_v1} consists of two terms, both of which have natural interpretations. Here, the first term $v_{*}^2$ corresponds to the optimal variance \eqref{eqn:optimal_variance} achievable by the oracle estimator, and the second term
\begin{equation}
    \label{eqn:treatment_effect_estimation_error}
    \begin{split}
        \frac{1}{n} \sum_{i=1}^{n} \bbE_{\calI^*} \left[ \frac{g^2 \left( X_i, A_i \right) \left\{ \hat{\mu}_{i} \left( \bfO_{i-1} \right) \left( X_i, A_i \right) - \mu^* \left( X_i, A_i \right) \right\}^2}{\left( \pi_{i}^* \right)^2 \left( X_i, \bfO_{i-1}; A_i \right)} \right]
    \end{split}
\end{equation}
measures the average estimation error of the estimates $\left\{ \hat{\mu}_{i} \left( \bfO_{i-1} \right) \in \left( \bbX \times \bbA \to \bbR \right): i \in [n] \right\}$ of $\mu^*$. Of primary interest to us is a subsequent upper bounding argument based on the upper bound \eqref{eqn:thm:mse_aipw_v1} on the \textsf{MSE} in the finite-sample regime: in particular, to minimize the right-hand side of the bound \eqref{eqn:thm:mse_aipw_v1}, one needs to choose a sequence of estimates $\left\{ \hat{\mu}_{i} \left( \bfO_{i-1} \right) \in \left( \bbX \times \bbA \to \bbR \right): i \in [n] \right\}$ which minimizes the second term \eqref{eqn:treatment_effect_estimation_error}.

\subsection{Reduction to online non-parametric regression}
\label{subsec:reduction_online_np_regression}

We now focus on constructing a sequence of estimates $\left\{ \hat{\mu}_{i} \left( \bfO_{i-1} \right) \in \left( \bbX \times \bbA \to \bbR \right): i \in [n] \right\}$ for the treatment effect, and upper bounding the estimation error \eqref{eqn:treatment_effect_estimation_error} in the \textsf{MSE} bound \eqref{eqn:thm:mse_aipw_v1}. To this end, we borrow ideas from the literature of online non-parametric regression \cite{rakhlin2014online}.
\medskip

\indent To begin with, we consider a turn-based game with $n$ rounds between the learner and the environment: see Algorithm \ref{alg:online_regression_v1} for the details.  Then, one can readily observe for any $\mu (\cdot, \cdot) : \bbX \times \bbA \to \bbR$, we have 
\begin{equation}
    \label{eqn:property_online_regression_v1_loss}
    \begin{split}
        \bbE_{\calI^*} \left[ \left. l_i (\mu) \right| \left( \calH_{i-1}, X_i, A_i \right) \right] = \frac{g^2 \left( X_i, A_i \right)}{\left( \pi_{i}^* \right)^2 \left( X_i, \bfO_{i-1}; A_i \right)} \left[ \sigma^2 \left( X_i, A_i \right) + \left\{ \mu \left( X_i, A_i \right) - \mu^* \left( X_i, A_i \right) \right\}^2 \right].
    \end{split}
\end{equation}
In the current turn-based game framework, our natural goal is to minimize the learner's static regret against the \emph{best fixed action in hindsight} belonging to a pre-specified function class $\calF \subseteq \left( \bbX \times \bbA \to \bbR \right)$:
\begin{equation}
    \label{eqn:defi_regret_v1}
    \begin{split}
        \textsf{Regret} \left( n, \calF; \calA \right) := \sum_{i=1}^{n} l_i \left\{ \hat{\mu}_i \left( \bfO_{i-1} \right) \right\} - \inf \left\{ \sum_{i=1}^{n} l_i (\mu): \mu \in \calF \right\},
    \end{split}
\end{equation}
where $\calA$ denotes the learner's online non-parametric regression algorithm that returns a sequence of estimates $\left\{ \hat{\mu}_i \left( \bfO_{i-1} \right): i \in [n]\right\}$ for the treatment effect $\mu^*$. Then, one can prove the following oracle inequality that demystifies a relationship between the estimation problem of the off-policy value and the online non-parametric regression protocol. See Appendix \ref{subsec:proof_thm:online_regression_oracle_ineq_v1} for the proof.

\begin{algorithm}[t]
\caption{Online non-parametric regression protocol for estimation of the treatment effect.}
\label{alg:online_regression_v1}
\begin{algorithmic}[1]
    \Require{the number of rounds $n \in \bbN$.}
    \For{$i = 1, 2, \cdots, n$,}
        \State The learner selects a point $\hat{\mu}_{i} \left( \bfO_{i-1} \right) \in \left( \bbX \times \bbA \to \bbR \right)$ based on the sample trajectory $\bfO_{i-1}$;
        \State The environment then picks a loss function $l_i (\cdot): \left( \bbX \times \bbA \to \bbR \right) \to \bbR$ defined as
        \begin{equation}
            \label{eqn:online_regression_v1_loss}
            l_i (\mu) := \frac{g^2 \left( X_i, A_i \right)}{\left( \pi_{i}^* \right)^2 \left( X_i, \bfO_{i-1}; A_i \right)} \left\{ Y_i - \mu \left( X_i, A_i \right) \right\}^2,\ \forall \mu (\cdot, \cdot) \in \left( \bbX \times \bbA \to \bbR \right).
        \end{equation}
    \EndFor
    \State \Return the sequence of estimates $\left\{ \hat{\mu}_i \left( \bfO_{i-1} \right)\in \left( \bbX \times \bbA \to \bbR \right): i \in [n] \right\}$ of the treatment effect.
\end{algorithmic}
\end{algorithm}

\begin{thm} [Oracle inequality for the class of \textsf{AIPW} estimators]
\label{thm:online_regression_oracle_ineq_v1}
The \textnormal{\textsf{AIPW}} estimator \eqref{eqn:alg:aipw_estimator_v1} utilizing the sequence of estimates $\left\{ \hat{\mu}_i \left( \bfO_{i-1} \right) \in \left( \bbX \times \bbA \to \bbR \right): i \in [n] \right\}$ for the treatment effect $\mu^*$ produced by the online non-parametric regression algorithm $\calA$ enjoys the following upper bound on the \textnormal{\textsf{MSE}}:
\begin{equation}
    \label{eqn:thm:online_regression_oracle_ineq_v1_v1}
    \begin{split}
        \bbE_{\calI^*} \left[ \left\{ \hat{\tau}_{n}^{\textnormal{\textsf{AIPW}}} \left( \bfO_n \right) - \tau \left( \calI^* \right) \right\}^2 \right] \leq \frac{1}{n} \left( v_{*}^2 + \frac{1}{n} \bbE_{\calI^*} \left[ \textnormal{\textsf{Regret}} \left( n, \calF; \calA \right) \right] + \inf \left\{ \left\| \mu - \mu^* \right\|_{(n)}^2 : \mu \in \calF \right\} \right).
    \end{split}
\end{equation}
\end{thm}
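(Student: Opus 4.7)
The plan is to combine the non-asymptotic MSE bound of Theorem~\ref{thm:mse_aipw} with the per-round conditional identity~\eqref{eqn:property_online_regression_v1_loss} so as to relate the cumulative weighted squared estimation error to the regret of the online regression algorithm $\calA$ and an approximation error over the class $\calF$. The first step is simply to apply Theorem~\ref{thm:mse_aipw}, which reduces the desired oracle inequality to establishing
\begin{equation*}
\sum_{i=1}^{n} \bbE_{\calI^*} \left[ \frac{g^2(X_i, A_i) \left\{ \hat{\mu}_i(\bfO_{i-1})(X_i, A_i) - \mu^*(X_i, A_i) \right\}^2}{(\pi_i^*)^2(X_i, \bfO_{i-1}; A_i)} \right] \leq \bbE_{\calI^*}[\textsf{Regret}(n, \calF; \calA)] + n \inf_{\mu \in \calF} \|\mu - \mu^*\|_{(n)}^2.
\end{equation*}

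The key mechanism is the cancellation of the noise contribution in~\eqref{eqn:property_online_regression_v1_loss}. I would fix an arbitrary comparator $\mu \in \calF$ and apply that conditional identity twice: once to the learner's action $\hat{\mu}_i(\bfO_{i-1})$, which is $\calH_{i-1}$-measurable and therefore deterministic given the conditioning information $(\calH_{i-1}, X_i, A_i)$, and once to the fixed $\mu$. Subtracting the two, the common term $g^2(X_i, A_i)\sigma^2(X_i, A_i)/(\pi_i^*)^2(X_i, \bfO_{i-1}; A_i)$ cancels, and the tower property yields
\begin{equation*}
\bbE_{\calI^*}\!\left[l_i(\hat{\mu}_i(\bfO_{i-1})) - l_i(\mu)\right] = \bbE_{\calI^*}\!\left[ \frac{g^2(X_i, A_i)\left\{\hat{\mu}_i(\bfO_{i-1})(X_i, A_i) - \mu^*(X_i, A_i)\right\}^2 - g^2(X_i, A_i)\left\{\mu(X_i, A_i) - \mu^*(X_i, A_i)\right\}^2}{(\pi_i^*)^2(X_i, \bfO_{i-1}; A_i)} \right].
\end{equation*}
Summing over $i \in [n]$ and recognizing the comparator's contribution as $n \|\mu - \mu^*\|_{(n)}^2$ via the definition~\eqref{eqn:weighed_l2_norm} of the weighted $\ell_2$-norm, the quantity I need to control equals $\sum_{i=1}^n \bbE_{\calI^*}\!\left[l_i(\hat{\mu}_i(\bfO_{i-1})) - l_i(\mu)\right] + n \|\mu - \mu^*\|_{(n)}^2$.

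In the final step I would dominate the first summand by the expected regret. Since $\sum_{i=1}^n l_i(\mu) \geq \inf_{\mu' \in \calF} \sum_{i=1}^n l_i(\mu')$ holds pathwise, the definition~\eqref{eqn:defi_regret_v1} of regret gives $\sum_{i=1}^n\{l_i(\hat{\mu}_i(\bfO_{i-1})) - l_i(\mu)\} \leq \textsf{Regret}(n, \calF; \calA)$, and taking expectations yields the desired control. Because the resulting inequality holds for every $\mu \in \calF$, I can pass to the infimum over $\calF$ on the right-hand side and combine with Theorem~\ref{thm:mse_aipw} to conclude. I do not expect any genuinely hard step: the only mildly delicate point is keeping track of the adaptive filtration during the noise cancellation, which amounts to applying the tower property with respect to $\sigma(\calH_{i-1}, X_i, A_i)$ and using that $Y_i$ is conditionally independent of $\bfO_{i-1}$ given $(X_i, A_i)$ under the Markov-kernel structure of the outcome model.
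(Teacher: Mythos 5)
Your proposal is correct and follows essentially the same route as the paper's proof: both rest on applying Theorem \ref{thm:mse_aipw}, using the conditional identity \eqref{eqn:property_online_regression_v1_loss} to cancel the noise term $g^2\sigma^2/(\pi_i^*)^2$, and invoking the regret definition together with the interchange $\bbE_{\calI^*}\left[\inf_{\mu \in \calF}\sum_i l_i(\mu)\right] \leq \inf_{\mu \in \calF}\bbE_{\calI^*}\left[\sum_i l_i(\mu)\right]$. Your fixed-comparator phrasing (bounding $\sum_i\{l_i(\hat{\mu}_i) - l_i(\mu)\}$ pathwise by the regret and then taking the infimum over $\mu$ at the end) is just a repackaging of that same interchange, so there is no substantive difference.
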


\indent A few remarks are in order. Apart from the optimal variance $v_{*}^2$, the right-hand side of the bound \eqref{eqn:thm:online_regression_oracle_ineq_v1_v1} contains two additional terms: (\romannumeral 1) the expected regret relative to the number of rounds $n$, where the expected value is taken over $\bfO_n \sim \bbP_{\calI^*}^n (\cdot)$; and (\romannumeral 2) the approximation error under the $\left\| \cdot \right\|_{(n)}$-norm. For any fixed function class $\calF \subseteq \left( \bbX \times \bbA \to \bbR \right)$, if we consider the large sample size regime, that is, the sample size $n$ is sufficiently large, then one can see that the asymptotic variance of the \textsf{AIPW} estimator \eqref{eqn:alg:aipw_estimator_v1} is asymptotically the same as $v_{*}^2 + \inf \left\{ \left\| \mu - \mu^* \right\|_{(n)}^2 : \mu \in \calF \right\}$, when the online non-parametric regression algorithm $\calA$ exhibits a \emph{no-regret learning dynamics}, i.e., $\bbE_{\calI^*} \left[ \textsf{Regret} \left( n, \calF; \calA \right) \right] = o (n)$ as $n \to \infty$. Consequently, the \textsf{AIPW} estimator \eqref{eqn:alg:aipw_estimator_v1} might suffer from an efficiency loss which depends on how well the (unknown) treatment effect $\mu^*$ can be approximated by a member of the function class $\calF \subseteq \left( \bbX \times \bbA \to \bbR \right)$ under the $\left\| \cdot \right\|_{(n)}$-norm. Therefore, any contribution to the \textsf{MSE} bound of the \textsf{AIPW} estimator \eqref{eqn:alg:aipw_estimator_v1} \emph{in addition to} the efficient variance $v_{*}^2$ primarily relies on the approximation error associated with approximating the treatment effect $\mu^*$ utilizing a provided function class $\calF$.

\subsection{Consequences for particular outcome models}
\label{subsec:consequences_particular_outcome_models}

The main goal of this section is to illustrate the consequences of our general theory developed in Section \ref{sec:AIPW_non_asymptotic_guarantees} so far for several concrete classes of outcome models. Throughout this section, we consider the case for which $\bbY = \left[ -L, L \right]$ for some constant $L \in \left( 0, +\infty \right)$, and impose the following condition:

\begin{assumption} [Strict overlap condition]
\label{assumption:strict_overlap_condition}
\normalfont{
The likelihood ratios are uniformly bounded by a universal constant $B \in \left( 0, +\infty \right)$, i.e., for every $i \in [n]$,
\begin{equation}
    \label{eqn:strict_overlap_condition}
    \begin{split}
        \left| \frac{g \left( X_i, A_i \right)}{\pi_{i}^* \left( X_i, \bfO_{i-1}; A_i \right)} \right| \leq B \quad \bbP_{\calI^*}^n\textnormal{-almost surely.}
    \end{split}
\end{equation}
}
\end{assumption}

We note that Assumption \ref{assumption:strict_overlap_condition} is often referred to as the \emph{strict overlap condition} in the literature of causal inference \cite{hirano2003efficient, khan2010irregular, yang2018asymptotic, lei2021distribution, d2021overlap}. At this point, we emphasize that Assumption \ref{assumption:strict_overlap_condition} is necessary to produce main consequences of the oracle inequality for the class of \textsf{AIPW} estimators (Theorem \ref{thm:online_regression_oracle_ineq_v1}) that we discuss in the ensuing subsections: Theorems \ref{thm:regret_bound_ogd_v1}, \ref{thm:regret_bound_ogd_v2}, and the arguments throughout Appendix \ref{subsec:general_function_approximation}.

\subsubsection{Tabular case of the outcome model}
\label{subsubsec:tabular_outcome_model}

We embark on our discussion about consequences of our theory established in Sections \ref{subsec:theoretical_guarantees_aipw_estimator} and \ref{subsec:reduction_online_np_regression} for one of the simplest case of the outcome model satisfying the following assumption.

\begin{assumption} [Tabular setting of the outcome model]
\label{assumption:finite_state_action_space}
\normalfont{
The state-action space $\bbX \times \bbA$ is a finite set. 
}
\end{assumption}

\indent If we compute the gradient of the loss function \eqref{eqn:online_regression_v1_loss}, we have
\begin{equation}
    \label{eqn:gradient_online_regression_v1_loss}
    \begin{split}
        \nabla l_i (\mu) = \frac{2 g^2 \left( X_i, A_i \right)}{\left( \pi^* \right)^2 \left( X_i, \bfO_{i-1}; A_i \right)} \left\{ \mu \left( X_i, A_i \right) - Y_i \right\} \delta_{\left( X_i, A_i \right)},\ \forall \mu \in \bbR^{\bbX \times \bbA},
    \end{split}
\end{equation}
where $\delta_{\left( X_i, A_i \right)} \in \bbR^{\bbX \times \bbA}$ denotes the point-mass vector at the $i$-th state-action pair in the sample trajectory, i.e., $\delta_{\left( X_i, A_i \right)} (x, a) := 1$ if $\left( x, a \right) = \left( X_i, A_i \right)$; $\delta_{\left( X_i, A_i \right)} (x, a) := 0$ otherwise.

\begin{algorithm}[h!]
\caption{Online gradient descent (\textsf{OGD}) algorithm for the finite state-action space.}
\label{alg:ogd_v1}
\begin{algorithmic}[1]
    \Require{the function class $\calF \subseteq \left[ -L, L \right]^{\bbX \times \bbA}$, the total number of rounds $n \in \bbN$, and a sequence of learning rates $\left\{ \eta_i \in \left( 0, +\infty \right): i \in [n-1] \right\}$.}
    \State{We first choose an initial point $\hat{\mu}_1 (\varnothing) \in \calF$ arbitrarily;}
    \For{$i = 1, 2, \cdots, n-1$,}
        \State Observe a triple $\left( X_i, A_i, Y_i \right) \in \bbO$; 
        \State Update $\hat{\mu}_{i+1} \left( \bfO_i \right) \in \calF$ according to the following \textsf{OGD} update rule:
        \begin{equation}
            \label{eqn:ogd_v1_update_rule}
            \begin{split}
                \hat{\mu}_{i+1} \left( \bfO_i \right) = \ & \Pi_{\calF} \left[ \hat{\mu}_i \left( \bfO_{i-1} \right) - \eta_i \nabla l_i \left\{ \hat{\mu}_i \left( \bfO_{i-1} \right) \right\} \right] \\
                = \ & \Pi_{\calF} \left[ \hat{\mu}_i \left( \bfO_{i-1} \right) - \frac{2 \eta_i \cdot g^2 \left( X_i, A_i \right)}{\left( \pi_{i}^* \right)^2 \left( X_i, \bfO_{i-1}; A_i \right)} \left\{ \hat{\mu}_i \left( \bfO_{i-1} \right) - Y_i \right\} \delta_{\left( X_i, A_i \right)} \right],
            \end{split}
        \end{equation}
        where $\Pi_{\calF} [ \cdot ]: \bbR^{\bbX \times \bbA} \to \calF$ denotes the projection map of $\bbR^{\bbX \times \bbA}$ onto the function space $\calF$.
    \EndFor
    \State \Return the sequence of estimates $\left\{ \hat{\mu}_i \left( \bfO_{i-1} \right) \in \calF : i \in [n] \right\}$ of the treatment effect $\mu^*$.
\end{algorithmic}
\end{algorithm}

Now, it's time to put forward an online contextual learning algorithm aimed at producing a sequence of estimates for the treatment effect with a no-regret learning guarantee. The online non-parametric regression problem can be resolved through standard online convex optimization (\textsf{OCO}) algorithms for the tabular case. In particular, we make use of the online gradient descent (\textsf{OGD}) algorithm (see Algorithm \ref{alg:ogd_v1}) as a sub-routine of Algorithm \ref{alg:aipw_estimator}. By leveraging standard results on regret analysis of \textsf{OCO} algorithms, one can establish the following regret bound, which guarantees a no-regret learning dynamics of Algorithm \ref{alg:ogd_v1}.

\begin{thm} [Regret guarantee of Algorithm \ref{alg:ogd_v1}]
\label{thm:regret_bound_ogd_v1}
With Assumptions \ref{assumption:strict_overlap_condition} and \ref{assumption:finite_state_action_space} in hand, the \textnormal{\textsf{OGD}} algorithm (Algorithm \ref{alg:ogd_v1}) with learning rates $\left\{ \eta_i := \frac{\textnormal{diam} (\calF)}{4 LB^2 \sqrt{i}}: i \in [n] \right\}$ guarantees
\begin{equation}
    \label{eqn:thm:regret_bound_ogd_v1_v1}
    \begin{split}
        \textnormal{\textsf{Regret}} \left( n, \calF; \textnormal{\textsf{OGD}} \right) \leq 6 L B^2 \textnormal{\textsf{diam}} (\calF) \cdot \sqrt{n} \quad \bbP_{\calI^*}^n\textnormal{-almost surely,}
    \end{split}
\end{equation}
where $\textnormal{\textsf{diam}} (\calF) := \sup \left\{ \left\| \mu \right\|_2 : \mu \in \calF \right\}$ denotes the diameter of $\calF \subseteq \left[ -L, L \right]^{\bbX \times \bbA}$.
\end{thm}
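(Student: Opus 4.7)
}
The plan is to reduce the result to Zinkevich's standard regret bound for online gradient descent on convex losses over a convex domain with a time-varying step-size schedule of the form $\eta_i = D/(G \sqrt{i})$. Three ingredients need to be assembled: (i) convexity of each loss $l_i$, (ii) a uniform almost-sure bound on $\|\nabla l_i(\mu)\|_2$, and (iii) the diameter bound on the iterates, which is automatic because each $\hat{\mu}_i(\bfO_{i-1}) \in \calF$ and $\textnormal{\textsf{diam}}(\calF)$ controls pairwise distances on $\calF$.

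First, I would verify convexity. Under Assumption \ref{assumption:finite_state_action_space} we identify $\mu$ with a vector in $\bbR^{\bbX \times \bbA}$, and the loss \eqref{eqn:online_regression_v1_loss} is a nonnegative scalar multiple of $(\mu(X_i, A_i) - Y_i)^2$, hence a quadratic with nonnegative Hessian. This also yields the gradient formula \eqref{eqn:gradient_online_regression_v1_loss}. Second, I would bound $\|\nabla l_i(\mu)\|_2$ almost surely: since $\delta_{(X_i, A_i)}$ has unit Euclidean norm,
\[
    \| \nabla l_i(\mu) \|_2 = \frac{2\, g^2(X_i,A_i)}{(\pi_{i}^*)^2(X_i, \bfO_{i-1}; A_i)}\, | \mu(X_i, A_i) - Y_i |.
\]
Assumption \ref{assumption:strict_overlap_condition} gives $g^2/(\pi_i^*)^2 \leq B^2$ almost surely, while $\calF \subseteq [-L, L]^{\bbX \times \bbA}$ and $\bbY = [-L, L]$ give $|\mu(X_i, A_i) - Y_i| \leq 2L$. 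Thus $\|\nabla l_i(\mu)\|_2 \leq G$ for every $\mu \in \calF$ with $G := 4 L B^2$, $\bbP_{\calI^*}^n$-almost surely.

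Third, I would apply the standard OGD analysis. Using the projection inequality $\|\hat{\mu}_{i+1} - \mu\|_2^2 \leq \|\hat{\mu}_i - \eta_i \nabla l_i(\hat{\mu}_i) - \mu\|_2^2$ for any $\mu \in \calF$ and convexity of $l_i$, the telescoping argument gives, for any comparator $\mu \in \calF$,
\[
    \sum_{i=1}^{n}\bigl[ l_i(\hat{\mu}_i) - l_i(\mu)\bigr] \leq \sum_{i=1}^{n} \frac{\|\hat{\mu}_i - \mu\|_2^2 - \|\hat{\mu}_{i+1} - \mu\|_2^2}{2\eta_i} + \frac{1}{2}\sum_{i=1}^{n} \eta_i \|\nabla l_i(\hat{\mu}_i)\|_2^2.
\]
Because $\eta_i$ is decreasing and $\|\hat{\mu}_i - \mu\|_2 \leq \textnormal{\textsf{diam}}(\calF) =: D$, Abel summation bounds the first sum by $D^2/(2\eta_n) = \tfrac{1}{2} D G \sqrt{n}$. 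For the second sum, $\tfrac{1}{2}\sum_{i=1}^{n} \eta_i G^2 = \tfrac{1}{2} D G \sum_{i=1}^n i^{-1/2} \leq D G \sqrt{n}$, using the standard integral estimate $\sum_{i=1}^n i^{-1/2} \leq 2\sqrt{n}$. Adding the two contributions, taking the infimum over $\mu \in \calF$, and substituting $G = 4LB^2$ yields $\textnormal{\textsf{Regret}}(n, \calF; \textnormal{\textsf{OGD}}) \leq \tfrac{3}{2} D G \sqrt{n} = 6 L B^2 \textnormal{\textsf{diam}}(\calF)\sqrt{n}$.

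I do not expect a genuine obstacle: the argument is the textbook Zinkevich analysis, and the only subtlety is that the loss $l_i$ itself is random (via $(X_i, A_i, Y_i)$ and the history-dependent propensity $\pi_i^*$). This is handled by noting that convexity of $l_i$ in $\mu$ and the gradient bound both hold $\bbP_{\calI^*}^n$-almost surely, and the OGD recursion is purely deterministic given the realized sequence of losses; consequently the regret bound holds almost surely rather than in expectation.
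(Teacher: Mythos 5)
Your proposal is correct and follows essentially the same route as the paper: the paper verifies the same almost-sure gradient bound $\|\nabla l_i(\mu)\|_2 \leq G := 4LB^2$ from Assumptions \ref{assumption:strict_overlap_condition}--\ref{assumption:finite_state_action_space} and then simply invokes the standard OGD regret theorem (Theorem 3.1 of Hazan's monograph) with step sizes $\eta_i = D/(G\sqrt{i})$ to get $\tfrac{3}{2}GD\sqrt{n} = 6LB^2\,\textnormal{\textsf{diam}}(\calF)\sqrt{n}$, whereas you unpack that black box via the usual projection-plus-telescoping argument. The only cosmetic wrinkle (shared with the paper) is that $\textnormal{\textsf{diam}}(\calF)$ is defined as $\sup_{\mu\in\calF}\|\mu\|_2$ rather than as a bound on pairwise distances, so the step $\|\hat{\mu}_i-\mu\|_2\leq\textnormal{\textsf{diam}}(\calF)$ is really using the pairwise-diameter convention of the cited OGD theorem.
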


See Appendix \ref{subsec:proof_thm:regret_bound_ogd_v1} for the proof of Theorem \ref{thm:regret_bound_ogd_v1}. By combining the regret guarantee \eqref{eqn:thm:regret_bound_ogd_v1_v1} of Algorithm \ref{alg:ogd_v1} together with the \textsf{MSE} upper bound \eqref{eqn:thm:online_regression_oracle_ineq_v1_v1} in Theorem \ref{thm:online_regression_oracle_ineq_v1}, one can establish a concrete upper bound on the \textsf{MSE} of the \textsf{AIPW} estimator \eqref{eqn:alg:aipw_estimator_v1} by using Algorithm \ref{alg:ogd_v1} to produce a sequence of estimates for the treatment effect $\mu^*$. 

\subsubsection{Linear function approximation}
\label{subsubsec:linear_function_approximation}

We move on to outcome models where the state-action space $\bbX \times \bbA$ can be infinite. We begin with the simplest case: the class of linear outcome functions. We let $\phi (\cdot, \cdot): \bbX \times \bbA \to \bbR^d$ be a \emph{known feature map} such that $\sup \left\{ \left\| \phi (x, a) \right\|_{2}: (x, a) \in \bbX \times \bbA \right\} \leq 1$, and then consider the functions that are linear in this representation: $f_{\bftheta} (\cdot, \cdot): \bbX \times \bbA \to \bbR$, where $f_{\bftheta} (x, a) := \bftheta^{\top} \phi (x, a)$ for some parameter vector $\bftheta \in \bbR^d$. Given a radius $R > 0$, we define the function class
\begin{equation}
    \label{eqn:linear_function_class}
    \begin{split}
        \calF_{\textsf{lin}} := \left\{ f_{\bftheta} (\cdot, \cdot) \in \left( \bbX \times \bbA \to \bbR \right): \bftheta \in \Theta := \overline{\bbB \left( \bfzero_d; R \right)} \right\},
    \end{split}
\end{equation}
where $\overline{\bbB \left( \bfzero_d; R \right)} := \left\{ \bfu \in \bbR^d: \left\| \bfu \right\|_2 \leq R \right\}$. With this linear function approximation framework, let us consider the following \textsf{OCO} model: at the $i$-th stage,
\begin{enumerate} [label = (\roman*)]
    \item the learner first chooses a point $\hat{\theta}_i \left( \bfO_{i-1} \right) \in \Theta$;
    \item the environment then picks a loss function $\calL_i (\cdot): \Theta \to \bbR$ defined as
    \begin{equation}
        \label{eqn:linear_approx_online_learning_loss}
        \begin{split}
            \calL_i (\bftheta) := \frac{g^2 \left( X_i, A_i \right)}{\left( \pi_{i}^* \right)^2 \left( X_i, \bfO_{i-1}; A_i \right)} \left\{ Y_i - \bftheta^{\top} \phi \left( X_i, A_i \right) \right\}^2, \ \forall \bftheta \in \Theta.
        \end{split}
    \end{equation}
\end{enumerate}
Here, our goal is to produce a sequence of estimates $\left\{ \hat{\mu}_i \left( \bfO_{i-1} \right) := \left\{ \hat{\bftheta}_i \left( \bfO_{i-1} \right) \right\}^{\top} \phi \in \calF_{\textnormal{lin}}: i \in [n] \right\}$ for the treatment effect $\mu^*$ after $n$ rounds of the above-mentioned \textsf{OCO} protocol which minimizes the learner's regret against the \emph{best fixed action in hindsight}:
\[
    \begin{split}
        \textsf{Regret} \left( n, \calF_{\textsf{lin}}; \calA \right) = \ & \sum_{i=1}^{n} l_i \left\{ \hat{\mu}_i \left( \bfO_{i-1} \right) \right\} - \inf \left\{ \sum_{i=1}^{n} l_i (\mu) : \mu \in \calF \right\} \\
        = \ & \sum_{i=1}^{n} \calL_i \left\{ \hat{\bftheta}_i \left( \bfO_{i-1} \right) \right\} - \inf \left\{ \sum_{i=1}^{n} \calL_i (\bftheta) : \bftheta \in \Theta \right\},
    \end{split}
\]
where $\calA$ is the learner's \textsf{OCO} algorithm whose output is a sequence $\left\{ \hat{\bftheta}_i \left( \bfO_{i-1} \right) \in \Theta: i \in [n]\right\}$ of parameters. If we compute the gradient of the loss function \eqref{eqn:linear_approx_online_learning_loss}, one has
\begin{equation}
    \label{eqn:gradient_linear_approx_online_learning_loss}
    \begin{split}
        \nabla_{\bftheta} \calL_i (\bftheta) = \frac{2 g^2 \left( X_i, A_i \right)}{\left( \pi_{i}^* \right)^2 \left( X_i, \bfO_{i-1}; A_i \right)} \left\{ \bftheta^{\top} \phi \left( X_i, A_i \right) - Y_i \right\} \phi \left( X_i, A_i \right).
    \end{split}
\end{equation}
For the current linear function approximation setting, we implement the \textsf{OGD} algorithm (Algorithm \ref{alg:ogd_v2}) as a sub-routine of Algorithm \ref{alg:aipw_estimator}. Using the same arguments as in Section \ref{subsubsec:tabular_outcome_model}, one can reproduce the following regret guarantee of Algorithm \ref{alg:ogd_v2} whose proof is available at Appendix \ref{subsec:proof_thm:regret_bound_ogd_v2}.

\begin{algorithm}[t]
\caption{Online gradient descent (\textsf{OGD}) algorithm for linear function approximation.}
\label{alg:ogd_v2}
\begin{algorithmic}[1]
    \Require{the radius $R \in \left( 0, +\infty \right)$ of the parameter space $\Theta$, the number of rounds $n \in \bbN$, and a sequence of learning rates $\left\{ \eta_i \in \left( 0, +\infty \right): i \in [n-1] \right\}$.}
    \State{We first choose an arbitrary initial point $\hat{\bftheta}_1 (\varnothing) \in \Theta$, where $\Theta := \overline{\bbB \left( \bfzero_d; R \right)}$;}
    \For{$i = 1, 2, \cdots, n-1$,}
        \State Observe a triple $\left( X_i, A_i, Y_i \right) \in \bbO$; 
        \State Update $\hat{\bftheta}_{i+1} \left( \bfO_i \right) \in \Theta$ according to the following \textsf{OGD} update rule:
        \begin{equation}
            \label{eqn:ogd_v2_update_rule}
            \begin{split}
                \hat{\bftheta}_{i+1} \left( \bfO_i \right) = \Pi_{\Theta} \left[ \hat{\bftheta}_i \left( \bfO_{i-1} \right) - \eta_i \nabla_{\bftheta} \calL_i \left\{ \hat{\bftheta}_i \left( \bfO_{i-1} \right) \right\} \right],
            \end{split}
        \end{equation}
        where $\Pi_{\Theta} [ \cdot ]: \bbR^d \to \Theta$ denotes the projection map of $\bbR^d$ onto the parameter space $\Theta$.
    \EndFor
    \State \Return the estimates $\left\{ \hat{\mu}_i \left( \bfO_{i-1} \right) := \left\{ \hat{\bftheta}_i \left( \bfO_{i-1} \right) \right\}^{\top} \phi \in \calF_{\textnormal{lin}}: i \in [n] \right\}$ of the treatment effect.
\end{algorithmic}
\end{algorithm}

\begin{thm} [Regret guarantee of Algorithm \ref{alg:ogd_v2}]
\label{thm:regret_bound_ogd_v2}
With Assumption \ref{assumption:strict_overlap_condition}, the \textnormal{\textsf{OGD}} algorithm (Algorithm \ref{alg:ogd_v2}) with learning rates $\left\{ \eta_i := \frac{R}{B^2 (L+R) \sqrt{i}}: i \in [n] \right\}$ guarantees
\begin{equation}
    \label{eqn:thm:regret_bound_ogd_v2_v1}
    \begin{split}
        \textnormal{\textsf{Regret}} \left( n, \calF_{\textnormal{\textsf{lin}}}; \textnormal{\textsf{OGD}} \right) \leq 6 B^2 R (L+R) \sqrt{n} \quad \bbP_{\calI^*}^n\textnormal{-almost surely.}
    \end{split}
\end{equation}
\end{thm}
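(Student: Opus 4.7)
The plan is to deduce Theorem~\ref{thm:regret_bound_ogd_v2} from the classical regret bound for projected online gradient descent applied to convex Lipschitz losses on a compact convex set, carried out sample-path-wise so that the conclusion holds $\bbP_{\calI^*}^n$-almost surely. Three structural facts do all the work: (i)~each realized loss $\calL_i$ defined in \eqref{eqn:linear_approx_online_learning_loss} is a nonnegative multiple of a squared affine function in $\bftheta$, hence convex on $\bbR^d$; (ii)~the constraint set $\Theta = \overline{\bbB(\bfzero_d; R)}$ is closed, convex, and compact with $D := \textsf{diam}(\Theta) = 2R$; and (iii)~the projection $\Pi_{\Theta}$ is nonexpansive.

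The first concrete step is a uniform Lipschitz bound along the trajectory. Using the gradient formula \eqref{eqn:gradient_linear_approx_online_learning_loss}, $\|\phi(X_i, A_i)\|_2 \leq 1$, and Assumption~\ref{assumption:strict_overlap_condition} (which yields $g^2(X_i, A_i)/(\pi_i^*)^2(X_i, \bfO_{i-1}; A_i) \leq B^2$ $\bbP_{\calI^*}^n$-a.s.), I would derive
\[
    \|\nabla_{\bftheta} \calL_i(\bftheta)\|_2 \leq 2 B^2 \left| \bftheta^\top \phi(X_i, A_i) - Y_i \right| \leq 2 B^2(L + R) =: G,
\]
where the last step combines $|Y_i| \leq L$ with $|\bftheta^\top \phi(X_i, A_i)| \leq \|\bftheta\|_2 \cdot \|\phi(X_i, A_i)\|_2 \leq R$ for $\bftheta \in \Theta$. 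Notice that the prescribed learning rates $\eta_i = R / \{ B^2 (L+R) \sqrt{i} \}$ coincide exactly with the canonical tuning $\eta_i = D / (G \sqrt{i})$.

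The core of the proof is then the standard Zinkevich-style telescoping argument. For any fixed comparator $\bftheta^\star \in \Theta$, nonexpansiveness of $\Pi_{\Theta}$ together with the update rule \eqref{eqn:ogd_v2_update_rule} gives
\[
    \|\hat{\bftheta}_{i+1} - \bftheta^\star\|_2^2 \leq \|\hat{\bftheta}_i - \bftheta^\star\|_2^2 - 2 \eta_i \langle \nabla_{\bftheta} \calL_i(\hat{\bftheta}_i), \hat{\bftheta}_i - \bftheta^\star \rangle + \eta_i^2 \|\nabla_{\bftheta} \calL_i(\hat{\bftheta}_i)\|_2^2.
\]
Rearranging, invoking convexity $\calL_i(\hat{\bftheta}_i) - \calL_i(\bftheta^\star) \leq \langle \nabla_{\bftheta} \calL_i(\hat{\bftheta}_i), \hat{\bftheta}_i - \bftheta^\star \rangle$, summing over $i \in [n]$, and performing summation by parts on the telescoping difference with $\|\hat{\bftheta}_i - \bftheta^\star\|_2^2 \leq D^2$ at every index (exploiting the monotonicity $\eta_i \downarrow 0$), one reaches the familiar inequality
\[
    \sum_{i=1}^{n} \left\{ \calL_i(\hat{\bftheta}_i) - \calL_i(\bftheta^\star) \right\} \leq \frac{D^2}{2 \eta_n} + \frac{G^2}{2} \sum_{i=1}^{n} \eta_i.
\]
Substituting $\eta_i = D/(G \sqrt{i})$ and using $\sum_{i=1}^{n} i^{-1/2} \leq 2 \sqrt{n}$ produces $\tfrac{1}{2} D G \sqrt{n} + D G \sqrt{n} = \tfrac{3}{2} D G \sqrt{n} = 6 B^2 R (L+R) \sqrt{n}$, and taking the infimum over $\bftheta^\star \in \Theta$ on the left-hand side delivers precisely \eqref{eqn:thm:regret_bound_ogd_v2_v1}.

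There is no genuine obstacle here. The only point requiring care is to observe that every inequality above is a deterministic statement on each realized sample path: Assumption~\ref{assumption:strict_overlap_condition} controls $g^2/(\pi_i^*)^2$ uniformly in $\omega$, the hypothesis $\bbY = [-L, L]$ controls $|Y_i|$ uniformly, and both convexity and nonexpansiveness are pathwise facts. This is exactly why the bound upgrades from ``in expectation'' to $\bbP_{\calI^*}^n$-almost surely; the step-size schedule is tuned so that the two terms $D^2/(2 \eta_n)$ and $(G^2/2) \sum_i \eta_i$ balance into the advertised constant $6 B^2 R (L+R)$.
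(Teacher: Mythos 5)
Your proposal is correct and follows essentially the same route as the paper: both reduce the claim to the pathwise gradient bound $\|\nabla_{\bftheta}\calL_i(\bftheta)\|_2 \leq G := 2B^2(L+R)$ on $\Theta$ with diameter $D = 2R$, and then apply the standard projected-\textsf{OGD} regret bound $\tfrac{3}{2}GD\sqrt{n}$ with step sizes $\eta_i = D/(G\sqrt{i})$. The only difference is that the paper cites this regret bound as a black box (\emph{Theorem 3.1} of Hazan's monograph) whereas you unfold the Zinkevich telescoping argument explicitly; the constants and conclusion agree.
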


\paragraph{General function approximation}
As a final step, it's time to demonstrate consequences of our general theory established in Sections \ref{subsec:theoretical_guarantees_aipw_estimator} and \ref{subsec:reduction_online_np_regression} for the case of general function approximation: the function class $\calF \subseteq \left( \bbX \times \bbA \to \left[ -L, L \right] \right)$ can be arbitrarily chosen. Our further discussion this case heavily relies on the basic theory of online non-parametric regression from \cite{rakhlin2014online} whose technical details are rather long and complicated. So, we defer our inspection on the case of general function approximation to Appendix \ref{subsec:general_function_approximation}.

\section{Lower bounds: local minimax risk}
\label{sec:lower_bounds}

We turn our attention to a local minimax lower bound for estimating the off-policy value $\tau^* = \tau \left( \calI^* \right)$. Here, we aim at establishing lower bounds that hold uniformly over all estimators that are permitted to know both the propensity scores $\left\{ \pi_{i}^* \left( X_i, \bfO_{i-1}; A_i \right) : i \in [n] \right\}$ and the evaluation function $g$. We assume the existence of a constant $K \in \left[ 1, +\infty \right)$ and \emph{reference Markov policies} $\left\{ \overline{\Pi}_i : \bbX \to \Delta (\bbA): i \in [n] \right\}$ such that $\overline{\Pi}_i \left( \left. \cdot \right| x \right) \ll \lambda_{\bbA} (\cdot)$ for $(x, i) \in \bbX \times [n]$, and
\begin{equation}
    \label{eqn:assumption_reference_Markov_policies}
    \frac{1}{K} \leq \frac{\overline{\pi}_i \left( x, a \right)}{\pi_{i}^* \left( x, \bfo_{i-1}; a \right)} \leq K
\end{equation}
for every $\left( x, \bfo_{i-1}, a \right) \in \bbX \times \bbO^{i-1} \times \bbA$, where $\overline{\pi}_i \left( x, \cdot \right) := \frac{\mathrm{d} \overline{\Pi}_i \left( \left. \cdot \right| x \right)}{\mathrm{d} \lambda_{\bbA}}: \bbA \to \bbR_{+}$ for each context $x \in \bbX$. Proximity of behavioral policies to certain Markov policies is often assumed under adaptive data collection procedures. For instance, in \emph{Theorem 1} of \cite{zhan2021off}, the authors assumed that the sequence of behavior policies is \emph{eventually Markov}; see the equation (8) therein. 

\subsection{Instance-dependent local minimax lower bounds}
\label{subsec:instance_dependent_lower_bounds}

For any problem instance $\calI^* = \left( \Xi^*, \Gamma^* \right) \in \bbI$ and an error function $\delta: \bbX \times \bbA \to \bbR_{+}$, we consider the following local neighborhoods:
\[
    \begin{split}
        \calN \left( \Xi^* \right) := \ & \left\{ \Xi \in \Delta (\bbX): \textnormal{KL} \left( \Xi \left\| \Xi^* \right. \right) \leq \frac{1}{n} \right\}; \\
        \calN_{\delta} \left( \Gamma^* \right) := \ & \left\{ \Gamma \in \left( \bbX \times \bbA \to \Delta (\bbY) \right): \left| \mu (\Gamma) (x, a) - \mu \left( \Gamma^* \right) (x, a) \right| \leq \delta (x, a),\ \forall (x, a) \in \bbX \times \bbA \right\},
    \end{split}
\]
where for any given $\Gamma: \bbX \times \bbA \to \Delta (\bbY)$, let $\mu (\Gamma) (x, a) := \int_{\bbY} y \Gamma \left( \left. \mathrm{d} y \right| x, a \right)$ for each $(x, a) \in \bbX \times \bbA$. Our goal is to lower bound the following \emph{local minimax risk}:
\begin{equation}
    \label{eqn:local_minimax_risk}
    \begin{split}
        \calM_n \left( \calC_{\delta} \left( \calI^* \right) \right) := \inf_{\hat{\tau}_n (\cdot): \bbO^n \to \bbR} \left( \sup_{\calI \in \calC_{\delta} \left( \calI^* \right)} \bbE_{\calI} \left[ \left\{ \hat{\tau}_n \left( \bfO_n \right) - \tau \left( \calI \right) \right\}^2 \right] \right),
    \end{split}
\end{equation}
where $\calC_{\delta} \left( \calI^* \right) := \calN \left( \Xi^* \right) \times \calN_{\delta} \left( \Gamma^* \right) \subseteq \bbI$. We now specify some assumptions necessary for lower bounding the local minimax risk \eqref{eqn:local_minimax_risk}. Prior to this, we introduce a new significant notation: given any random variable $Y \in \bbL^4 \left( \Omega, \calF, \bbP \right)$ defined on the underlying probability space $\left( \Omega, \calF, \bbP \right)$, its \emph{$(2, 4)$-moment ratio} is defined as $\left\| Y \right\|_{2 \to 4} := \frac{\sqrt{\bbE \left[ Y^4 \right]}}{\bbE \left[ Y^2 \right]}$.

\begin{assumption}
\label{assumption:mr_v1}
\normalfont{
Let $h (x) := \left\langle g (x, \cdot), \mu^* (x, \cdot) \right\rangle_{\lambda_{\bbA}} - \bbE_{X \sim \Xi^*} \left[ \left\langle g (X, \cdot), \mu^* (X, \cdot) \right\rangle_{\lambda_{\bbA}} \right]$. We assume that 
\[
    H_{2 \to 4} := \left\| h \right\|_{2 \to 4} = \frac{\sqrt{\bbE_{X \sim \Xi^*} \left[ h^4 (X) \right]}}{\bbE_{X \sim \Xi^*} \left[ h^2 (X) \right]} < +\infty.
\]
}
\end{assumption}

\noindent We next make an assumption on a lower bound on the \emph{local neighborhood size}:

\begin{assumption}
\label{assumption:ln}
\normalfont{
The neighborhood function $\delta (\cdot, \cdot): \bbX \times \bbA \to \bbR_{+}$ satisfies the lower bound
\begin{equation}
    \label{eqn:assumption:ln_v1}
    \begin{split}
        \sqrt{n} \cdot \delta (x, a) \geq \frac{\left| g (x, a) \right| \sigma^2 (x, a)}{\overline{\pi}_i (x, a) \left\| \sigma \right\|_{(n)}}
    \end{split}
\end{equation}
for all $(x, a, i) \in \bbX \times \bbA \times [n]$.
}
\end{assumption}

\indent Here, we note that Assumptions \ref{assumption:mr_v1} and \ref{assumption:ln} are analogues of Assumptions (MR) and (LN) considered in \cite{mou2022off}, respectively, for the case of adaptively collected data. Under these assumptions, one can prove the following lower bound on the local minimax risk over the neighborhood of problem instances $\calC_{\delta} \left( \calI^* \right)$:

\begin{thm}
\label{thm:local_minimax_lower_bound}
Under Assumptions \ref{assumption:mr_v1} and \ref{assumption:ln}, the local minimax risk over $\calC_{\delta} \left( \calI^* \right)$ is lower bounded by
\begin{equation}
    \label{eqn:cor:local_minimax_lower_bound_v1_v1}
    \calM_n \left( \calC_{\delta} \left( \calI^* \right) \right) \geq \calC (K) \cdot \frac{v_{*}^2}{n},
\end{equation}
where $\calC (K) > 0$ is a universal constant that only depends on the data coverage constant $K \in \left[ 1, +\infty \right)$ of the reference Markov policies $\left\{ \overline{\Pi}_{i} (\cdot): \bbX \to \Delta (\bbA) : i \in [n] \right\}$ defined in \eqref{eqn:assumption_reference_Markov_policies}.
\end{thm}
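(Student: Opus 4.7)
The plan is to invoke Le Cam's two-point method along two carefully chosen perturbation directions that recover the two summands of the decomposition $v_{*}^2 = V_1 + V_2$, where
\[
V_1 := \textsf{Var}_{X \sim \Xi^*}\!\left[\left\langle g(X,\cdot),\mu^*(X,\cdot)\right\rangle_{\lambda_{\bbA}}\right] \quad\text{and}\quad V_2 := \left\|\sigma\right\|_{(n)}^2.
\]
Because $v_{*}^2/n \leq 2\max\{V_1, V_2\}/n$, it suffices to establish $\calM_n(\calC_\delta(\calI^*)) \geq c(K)\cdot V_j/n$ separately for each $j \in \{1,2\}$ and then take the maximum. The main reduction is standard: whenever $\calI_0, \calI_1 \in \calC_\delta(\calI^*)$ satisfy $\textnormal{KL}(\bbP_{\calI_1}^{n} \| \bbP_{\calI_0}^{n}) \leq c_0$, Pinsker's inequality delivers $\calM_n \geq c\cdot (\tau(\calI_1)-\tau(\calI_0))^2$.

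For the $V_1$ contribution, I would hold $\Gamma = \Gamma^*$ fixed and tilt $\Xi^*$ in the direction of the centered functional $h$ from Assumption \ref{assumption:mr_v1}. The explicit family $\Xi_t(dx) := (1 + t\,h(x)/\sqrt{V_1})\,\Xi^*(dx)$ is a probability measure for small $|t|$, and the $(2,4)$-moment-ratio condition yields a second-order Taylor bound $\textnormal{KL}(\Xi_t \| \Xi^*) \leq C(H_{2\to 4})\,t^2$. Since only $\Xi$ changes, the joint KL over the trajectory $\bfO_n$ equals $n\cdot\textnormal{KL}(\Xi_t \| \Xi^*)$, so choosing $t \asymp 1/\sqrt{n}$ places $\Xi_t$ inside $\calN(\Xi^*)$ while keeping the joint KL bounded. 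Since $\tau(\calI_t) - \tau(\calI^*) = t\sqrt{V_1} \asymp \sqrt{V_1/n}$, Le Cam gives $\calM_n \geq c(H_{2\to 4})\cdot V_1/n$.

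For the $V_2$ contribution, I would fix $\Xi = \Xi^*$ and perturb $\Gamma^*$ to $\Gamma_s$ whose conditional mean equals $\mu^* + s\phi$ (preserving $\sigma^2$ and the bounded range $[-L,L]$ via a Gaussian-type parametrization), for a function $\phi$ and scale $s$ to be tuned. The shift in the target is $s\cdot I(\phi) := s\,\bbE_{X\sim\Xi^*}\!\left[\int_{\bbA} g(X,a)\phi(X,a)\,d\lambda_{\bbA}(a)\right]$, while the strict overlap (Assumption \ref{assumption:strict_overlap_condition}) combined with the coverage condition \eqref{eqn:assumption_reference_Markov_policies} upper bounds the joint KL by
\[
\textnormal{KL}(\bbP_{\calI_s}^{n} \| \bbP_{\calI^*}^{n}) \;\leq\; \frac{K s^2}{2}\sum_{i=1}^{n} \bbE_{X\sim\Xi^*}\!\!\int_{\bbA} \frac{\overline{\pi}_i(X,a)\,\phi^2(X,a)}{\sigma^2(X,a)}\,d\lambda_{\bbA}(a).
\]
I take $\phi$ in the direction of $g\sigma^2/\overline{\pi}_{\textsf{avg}}$, normalized by $\left\|\sigma\right\|_{(n)}$, and $s = 1/\sqrt{n}$; Assumption \ref{assumption:ln} is calibrated exactly so that the pointwise constraint $|s\phi(x,a)| \leq \delta(x,a)$ is met (hence $\Gamma_s \in \calN_\delta(\Gamma^*)$), the joint KL stays of constant order, and the separation satisfies $(sI(\phi))^2 \gtrsim V_2/n$. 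A second application of Le Cam gives $\calM_n \geq c(K)\cdot V_2/n$, and combining with the previous bound completes the proof.

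The hardest step will be the $V_2$ part, where a single time-independent $\phi$ must reproduce the time-averaged structure of $\left\|\sigma\right\|_{(n)}^2 \asymp n^{-1}\sum_i \bbE_{X\sim\Xi^*}\!\int g^2\sigma^2/\overline{\pi}_i$. A naive Cauchy--Schwarz optimization of $I(\phi)^2/\textnormal{KL}$ returns only $\bbE_{X\sim\Xi^*}\!\int g^2\sigma^2/(\sum_i \overline{\pi}_i)$, which by Jensen's inequality is at most $\left\|\sigma\right\|_{(n)}^2/n$ and can be strictly smaller when the reference Markov policies $\{\overline{\pi}_i\}$ are heterogeneous in $i$. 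Resolving this requires exploiting Assumption \ref{assumption:ln} so that the \emph{pointwise} neighborhood budget $|s\phi| \leq \delta$, rather than the integrated KL budget, is the binding constraint; the resulting $\phi$ saturates the neighborhood at every time index simultaneously and recovers the full $v_{*}^2/n$ scaling up to a constant depending only on the overlap parameter $K$.
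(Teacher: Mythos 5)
Your high-level architecture matches the paper's: decompose $v_{*}^2$ into $V_1 = \textsf{Var}_{X \sim \Xi^*}[\langle g(X,\cdot),\mu^*(X,\cdot)\rangle_{\lambda_{\bbA}}]$ and $V_2 = \|\sigma\|_{(n)}^2$, and run Le Cam's two-point method separately for each, tilting the context distribution for $V_1$ and the outcome kernel for $V_2$. There are, however, two gaps, one minor and one substantive.

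The minor one is in the $V_1$ step: the additive tilt $\Xi_t(\mathrm{d}x) = (1 + t\,h(x)/\sqrt{V_1})\,\Xi^*(\mathrm{d}x)$ need not be a probability measure for any $t \neq 0$, because Assumption \ref{assumption:mr_v1} only controls the $(2,4)$-moment ratio of $h$ and does not make $h$ bounded. The paper handles this by truncating $h$ at level $2H_{2\to 4}\|h\|_{L^2(\Xi^*)}$ and using an exponential tilt $\xi_s \propto \xi^* e^{s\tilde h}$, then using the moment-ratio assumption to show the truncation costs only a constant factor in the separation $\tau(\Xi_s,\Gamma^*)-\tau(\Xi^*,\Gamma^*)$. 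This is a fixable technicality, but as written your construction is not well-defined.

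The substantive gap is exactly the one you flag in your last paragraph, and your proposed resolution does not work. With a single time-independent direction $\phi$, the trajectory KL is $\asymp s^2\sum_{i}\bbE_{X}\int_{\bbA}\overline{\pi}_i\,\phi^2/\sigma^2\,\mathrm{d}\lambda_{\bbA}$, so if you enlarge $\phi$ to saturate the pointwise budget $|s\phi|\le\delta$ (which forces $\phi$ to scale like $g\sigma^2/\min_i\overline{\pi}_i$), the KL picks up factors of $\overline{\pi}_i/\min_j\overline{\pi}_j$ that are not controlled by the coverage condition \eqref{eqn:assumption_reference_Markov_policies} — that condition only relates $\overline{\pi}_i$ to $\pi_i^*$ at the \emph{same} index, not across indices — so the KL is no longer of constant order and Le Cam gives nothing. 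The missing idea is to abandon the single perturbation entirely: for each fixed $i \in [n]$ run a separate two-point argument with the index-specific direction $\phi_i = g\sigma^2/\overline{\pi}_i$ (scale $s \asymp 1/(K\sqrt{n}\|\sigma\|_{(n)})$), which yields $\calM_n(\calC_\delta(\calI^*)) \gtrsim K^{-4} n^{-1}\|\sigma\|_{(n)}^{-2}\bigl(\bbE_{\calI^*}[g^2\sigma^2/(\pi_i^*)^2]\bigr)^2$ for each $i$; then average these $n$ lower bounds over $i$ and apply Cauchy--Schwarz to the sum of squares to recover $\|\sigma\|_{(n)}^4/\|\sigma\|_{(n)}^2 = \|\sigma\|_{(n)}^2$. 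This "one two-point pair per time index, then average" step is what actually beats the harmonic-mean barrier you correctly identified, and it is absent from your proposal.
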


\noindent The proof of Theorem \ref{thm:local_minimax_lower_bound} is deferred to Appendix \ref{subsec:proof_thm:local_minimax_lower_bound}. Theorem \ref{thm:local_minimax_lower_bound} delivers the following takeaway message: the term $\frac{v_{*}^2}{n}$ including the sequentially weighted $\ell_2$-norm is indeed the fundamental limit for estimating the linear functional based on adaptively collected data. Our results can be regarded as a generalization of those developed in \cite{mou2022off} for the case of i.i.d.~data. 

\section*{Acknowledgements}

Jeonghwan Lee was partially supported by the Kwanjeong Educational Foundation. Cong Ma was partially supported by the National Science Foundation via grant DMS-2311127.

\newpage

\bibliographystyle{plain}
\bibliography{main.bib}

\newpage

\appendix

\section{Some elementary inequalities and their proofs}
\label{sec:elementary_ineqs}

\indent The following lemma plays a key role in the truncation arguments used in establishing our local minimax lower bounds. In particular, it enables us to make small modifications on a pair of probability dsitributions by conditioning on \emph{good events} of each probability measure, without inducing an irregularly large change in the total variation distance.

\begin{lemma}
\label{lemma:tv_distance_modification}
Let $\left( \mu, \nu \right)$ be any pair of probability measures defined on a common sample space $\left( \Omega, \calF \right)$. Let us consider any two events $A \in \calF$ and $B \in \calF$ satisfying $\min \left\{ \mu (A), \nu (B) \right\} \geq 1 - \epsilon$ for some $\epsilon \in \left[ 0, \frac{1}{4} \right]$. Then, the conditional distributions $\left( \mu | A \right) (\cdot) \in \Delta \left( \Omega, \calF \right)$ and $\left( \nu | B \right) (\cdot) \in \Delta \left( \Omega, \calF \right)$ defined by
\[
    \left( \mu | A \right) (E) := \frac{\mu \left( A \cap E \right)}{\mu (A)} \quad \textnormal{and} \quad \left( \nu | B \right) (E) := \frac{\nu \left( B \cap E \right)}{\nu (B)}
\]
for any event $E \in \calF$, satisfy the bound
\begin{equation}
    \label{eqn:lemma:tv_distance_modification_v1}
    \left| \textnormal{\textsf{TV}} \left( \mu | A, \nu | B \right) - \textnormal{\textsf{TV}} \left( \mu, \nu \right) \right| \leq 2 \epsilon.
\end{equation}
\end{lemma}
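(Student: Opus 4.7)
The plan is to prove this by a triangle inequality reduction: I will show separately that
\[
\textsf{TV}\bigl(\mu|A,\ \mu\bigr) \le \epsilon \qquad \text{and}\qquad \textsf{TV}\bigl(\nu|B,\ \nu\bigr) \le \epsilon,
\]
and then conclude via
\[
\bigl|\textsf{TV}(\mu|A,\nu|B)-\textsf{TV}(\mu,\nu)\bigr|\ \le\ \textsf{TV}(\mu|A,\mu)+\textsf{TV}(\nu|B,\nu)\ \le\ 2\epsilon,
\]
which is the reverse-triangle inequality applied to the metric $\textsf{TV}(\cdot,\cdot)$ on the space of probability measures on $(\Omega,\calF)$.

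For the key step, I would exploit the convex decomposition of $\mu$ induced by the partition $\{A, A^c\}$. Whenever $\mu(A)>0$ (guaranteed by $\epsilon\le 1/4 <1$), we can write
\[
\mu \;=\; \mu(A)\cdot (\mu|A) \;+\; \mu(A^c)\cdot (\mu|A^c).
\]
Subtracting $(\mu|A)$ from both sides and simplifying yields the clean identity
\[
\mu-\mu|A \;=\; \mu(A^c)\bigl[(\mu|A^c)-(\mu|A)\bigr],
\]
which is a signed measure whose total variation norm is exactly $\mu(A^c)\cdot\textsf{TV}(\mu|A^c,\mu|A)$. Since the total variation distance between any two probability measures is at most $1$, and $\mu(A^c)\le\epsilon$ by hypothesis, I immediately obtain $\textsf{TV}(\mu,\mu|A)\le\epsilon$. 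The analogous argument using the partition $\{B,B^c\}$ for $\nu$ gives $\textsf{TV}(\nu,\nu|B)\le\epsilon$.

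I do not anticipate any real obstacle here — the bound $\epsilon\le 1/4$ is mild and only needs to ensure that $\mu(A)$ and $\nu(B)$ are strictly positive so that the conditional distributions are well-defined. The only care needed is in the algebra of the convex-decomposition identity and in invoking the correct normalization of $\textsf{TV}(\cdot,\cdot)$ (i.e., $\textsf{TV}(P,Q)=\tfrac{1}{2}\|P-Q\|_1\in[0,1]$ for probability measures $P,Q$); under that convention, the computation $\|\mu-\mu|A\|_{TV}=\mu(A^c)\cdot \textsf{TV}(\mu|A^c,\mu|A)\le\mu(A^c)$ holds exactly as displayed. The overall structure of the proof is short: convex decomposition $\Longrightarrow$ per-side bound of $\epsilon$ $\Longrightarrow$ reverse triangle inequality for the final $2\epsilon$ estimate.
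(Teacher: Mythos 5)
Your proposal is correct and follows essentially the same route as the paper: both reduce the claim to the (reverse) triangle inequality for $\textsf{TV}$ together with the per-side bounds $\textsf{TV}\left( \mu, \mu | A \right) \leq \epsilon$ and $\textsf{TV}\left( \nu, \nu | B \right) \leq \epsilon$. The only cosmetic difference is that the paper evaluates $\textsf{TV}\left( \mu, \mu | A \right) = 1 - \mu(A)$ directly via the supremum (attained at $E = A$), whereas you derive the same bound from the convex decomposition $\mu = \mu(A) \left( \mu | A \right) + \mu\left( A^c \right) \left( \mu | A^c \right)$; both are valid, the only (trivial) caveat in yours being that $\mu | A^c$ requires $\mu\left( A^c \right) > 0$, a degenerate case in which the bound is zero anyway.
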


\begin{proof} [Proof of Lemma \ref{lemma:tv_distance_modification}]
Due to the triangle inequality for the total variation (\textsf{TV}) distance, it follows that
\begin{equation}
    \label{eqn:proof_lemma:tv_distance_modification_v1_v1}
    \begin{split}
        \textsf{TV} \left( \mu, \nu \right) \leq \textsf{TV} \left( \mu, \mu | A \right) + \textsf{TV} \left( \mu | A, \nu | B \right) + \textsf{TV} \left( \nu | B, \nu \right),
    \end{split}
\end{equation}
and
\begin{equation}
    \label{eqn:proof_lemma:tv_distance_modification_v1_v2}
    \begin{split}
        \textsf{TV} \left( \mu | A, \nu | B \right) \leq \textsf{TV} \left( \mu | A, \mu \right) + \textsf{TV} \left( \mu, \nu \right) + \textsf{TV} \left( \nu, \nu | B \right).
    \end{split}
\end{equation}
At this point, one can easily observe that
\begin{equation}
    \label{eqn:proof_lemma:tv_distance_modification_v1_v3}
    \begin{split}
        \textsf{TV} \left( \mu, \mu | A \right) = \ & \sup \left\{ \left| \mu (E) - \left( \mu | A \right) (E) \right|: E \in \calF \right\} = \left( \mu | A \right) (A) - \mu (A) = 1 - \mu (A); \\
        \textsf{TV} \left( \nu, \nu | B \right) = \ & \sup \left\{ \left| \nu (E) - \left( \nu | B \right) (E) \right|: E \in \calF \right\} = \left( \nu | B \right) (B) - \nu (B) = 1 - \nu (B).
    \end{split}
\end{equation}
Putting the finding \eqref{eqn:proof_lemma:tv_distance_modification_v1_v3} into the inequalities \eqref{eqn:proof_lemma:tv_distance_modification_v1_v1} and \eqref{eqn:proof_lemma:tv_distance_modification_v1_v2}, the assumptions $1 - \mu (A) \leq \epsilon$ and $1 - \nu (B) \leq \epsilon$ establish the desired result.
    
\end{proof}

\section{Proofs and omitted details for Section \ref{sec:AIPW_non_asymptotic_guarantees}}
\label{sec:proofs_sec:AIPW_non_asymptotic_guarantees}

\subsection{Proof of Proposition \ref{prop:mean_variance_perturbed_ipw}}
\label{subsec:proof_prop:mean_variance_perturbed_ipw}

\indent First, one can observe that
\begin{align}
    \label{eqn:proof_prop:mean_variance_perturbed_ipw_v1}
    &\bbE_{\calI^*} \left[ \hat{\tau}_{n}^{f} \left( \bfO_n \right) \right] \nonumber \\
    = \ & \frac{1}{n} \sum_{i=1}^{n} \bbE_{\calI^*} \left[ \bbE_{\calI^*} \left[ \left. \frac{g \left( X_i, A_i \right) Y_i}{\pi_{i}^* \left( X_i, \bfO_{i-1}; A_i \right)} - f_i \left( X_i, \bfO_{i-1}, A_i \right) \right. \right. \right. \nonumber \\
    & \left. \left. \left. + \left\langle f_i \left( X_i, \bfO_{i-1}, \cdot \right), \pi_{i}^* \left( X_i, \bfO_{i-1}; \cdot \right) \right\rangle_{\lambda_{\bbA}} \right| \left( X_i, A_i, \calH_{i-1} \right) \right] \right] \nonumber \\
    = \ & \frac{1}{n} \sum_{i=1}^{n} \bbE_{\calI^*} \left[ \frac{g \left( X_i, A_i \right) \mu^* \left( X_i, A_i \right)}{\pi_{i}^* \left( X_i, \bfO_{i-1}; A_i \right)} - f_i \left( X_i, \bfO_{i-1}, A_i \right) + \left\langle f_i \left( X_i, \bfO_{i-1}, \cdot \right), \pi_{i}^* \left( X_i, \bfO_{i-1}; \cdot \right) \right\rangle_{\lambda_{\bbA}} \right] \nonumber \\
    = \ & \frac{1}{n} \sum_{i=1}^{n} \bbE_{\calI^*} \left[ \bbE_{\calI^*} \left[ \left. \frac{g \left( X_i, A_i \right) \mu^* \left( X_i, A_i \right)}{\pi_{i}^* \left( X_i, \bfO_{i-1}; A_i \right)} - f_i \left( X_i, \bfO_{i-1}, A_i \right) \right. \right. \right. \\
    & \left. \left. \left. + \left\langle f_i \left( X_i, \bfO_{i-1}, \cdot \right), \pi_{i}^* \left( X_i, \bfO_{i-1}; \cdot \right) \right\rangle_{\lambda_{\bbA}} \right| \left( X_i, \calH_{i-1} \right) \right] \right] \nonumber \\
    = \ & \frac{1}{n} \sum_{i=1}^{n} \bbE_{\calI^*} \left[ \int_{\bbA} g \left( X_i, a \right) \mu^* \left( X_i, a \right) \mathrm{d} \lambda_{\bbA} (a) - \left\langle f_i \left( X_i, \bfO_{i-1}, \cdot \right), \pi_{i}^* \left( X_i, \bfO_{i-1}; \cdot \right) \right\rangle_{\lambda_{\bbA}} \right. \nonumber \\
    & \left. + \left\langle f_i \left( X_i, \bfO_{i-1}, \cdot \right), \pi_{i}^* \left( X_i, \bfO_{i-1}; \cdot \right) \right\rangle_{\lambda_{\bbA}} \right] \nonumber \\
    = \ & \tau \left( \calI^* \right). \nonumber
\end{align}

\indent We now assume \eqref{eqn:prop:mean_variance_perturbed_ipw_v1} and note that
\begin{equation}
    \label{eqn:proof_prop:mean_variance_perturbed_ipw_v2}
    \begin{split}
        &\textsf{Var}_{\calI^*} \left[ \hat{\tau}_{n}^{f} \left( \bfO_n \right) \right] \\ 
        = \ & \frac{1}{n^2} \sum_{i=1}^{n} \textsf{Var}_{\calI^*} \left[ \frac{g \left( X_i, A_i \right) Y_i}{\pi_{i}^* \left( X_i, \bfO_{i-1}; A_i \right)} - f_i \left( X_i, \bfO_{i-1}, A_i \right) \right] \\
        &+ \frac{2}{n^2} \sum_{1 \leq i < j \leq n} \textsf{Cov}_{\calI^*} \left[ \frac{g \left( X_i, A_i \right) Y_i}{\pi_{i}^* \left( X_i, \bfO_{i-1}; A_i \right)} - f_i \left( X_i, \bfO_{i-1}, A_i \right), \frac{g \left( X_j, A_j \right) Y_j}{\pi_{j}^* \left( X_j, \bfO_{j-1}; A_j \right)} - f_j \left( X_j, \bfO_{j-1}, A_j \right) \right].
    \end{split}
\end{equation}
One can reveal that
\begin{align}
    \label{eqn:proof_prop:mean_variance_perturbed_ipw_v3}
    &\textsf{Var}_{\calI^*} \left[ \frac{g \left( X_i, A_i \right) Y_i}{\pi_{i}^* \left( X_i, \bfO_{i-1}; A_i \right)} - f_i \left( X_i, \bfO_{i-1}, A_i \right) \right] \nonumber \\
    = \ & \bbE_{\calI^*} \left[ \bbE_{\calI^*} \left[ \left. \left\{ \frac{g \left( X_i, A_i \right) Y_i}{\pi_{i}^* \left( X_i, \bfO_{i-1}; A_i \right)} - f_i \left( X_i, \bfO_{i-1}, A_i \right) \right\}^2 \right| \left( X_i, A_i, \calH_{i-1} \right) \right] \right] - \left\{ \tau \left( \calI^* \right) \right\}^2 \nonumber \\
    = \ & \bbE_{\calI^*} \left[ \frac{g^2 \left( X_i, A_i \right)}{\left( \pi_{i}^* \right)^2 \left( X_i, \bfO_{i-1}; A_i \right)} \bbE_{\calI^*} \left[ \left. Y_{i}^2 \right| \left( X_i, A_i, \calH_{i-1} \right) \right] \right. \nonumber \\
    & \left. - \frac{2 f_i \left( X_i, \bfO_{i-1}, A_i \right) g \left( X_i, A_i \right)}{\pi_{i}^* \left( X_i, \bfO_{i-1}; A_i \right)} \bbE_{\calI^*} \left[ \left. Y_{i} \right| \left( X_i, A_i, \calH_{i-1} \right) \right] + f_{i}^2 \left( X_i, \bfO_{i-1}, A_i \right) \right] - \left\{ \tau \left( \calI^* \right) \right\}^2 \nonumber \\
    = \ & \bbE_{\calI^*} \left[ \frac{g^2 \left( X_i, A_i \right) \sigma^2 \left( X_i, A_i \right)}{\left( \pi_{i}^* \right)^2 \left( X_i, \bfO_{i-1}; A_i \right)} \right] \nonumber \\
    &+ \bbE_{\calI^*} \left[ \left\{ \frac{g \left( X_i, A_i \right) \mu^* \left( X_i, A_i \right)}{\pi_{i}^* \left( X_i, \bfO_{i-1}; A_i \right)} - f_i \left( X_i, \bfO_{i-1}, A_i \right) \right\}^2 \right] - \left\{ \tau \left( \calI^* \right) \right\}^2 \\
    \stackrel{\textnormal{(a)}}{=} \ & \bbE_{\calI^*} \left[ \frac{g^2 \left( X_i, A_i \right) \sigma^2 \left( X_i, A_i \right)}{\left( \pi_{i}^* \right)^2 \left( X_i, \bfO_{i-1}; A_i \right)} \right] \nonumber \\
    &+ \bbE_{\calI^*} \left[ \left\{ \frac{g \left( X_i, A_i \right) \mu^* \left( X_i, A_i \right)}{\pi_{i}^* \left( X_i, \bfO_{i-1}; A_i \right)} - \left\langle g \left( X_i, \cdot \right), \mu^* \left( X_i, \cdot \right) \right\rangle_{\lambda_{\bbA}} - f_i \left( X_i, \bfO_{i-1}, A_i \right) \right\}^2 \right] \nonumber \\
    &+ \underbrace{\bbE_{\calI^*} \left[ \left\langle g \left( X_i, \cdot \right), \mu^* \left( X_i, \cdot \right) \right\rangle_{\lambda}^2 \right] - \left\{ \tau \left( \calI^* \right) \right\}^2}_{= \ \textsf{Var}_{X \sim \Xi^*} \left[ \left\langle g \left( X, \cdot \right), \mu^* \left( X, \cdot \right) \right\rangle_{\lambda_{\bbA}} \right]} \nonumber \\
    = \ & \bbE_{\calI^*} \left[ \frac{g^2 \left( X_i, A_i \right) \sigma^2 \left( X_i, A_i \right)}{\left( \pi_{i}^* \right)^2 \left( X_i, \bfO_{i-1}; A_i \right)} \right] \nonumber \\
    &+ \bbE_{\calI^*} \left[ \left\{ \frac{g \left( X_i, A_i \right) \mu^* \left( X_i, A_i \right)}{\pi_{i}^* \left( X_i, \bfO_{i-1}; A_i \right)} - \left\langle g \left( X_i, \cdot \right), \mu^* \left( X_i, \cdot \right) \right\rangle_{\lambda_{\bbA}} - f_i \left( X_i, \bfO_{i-1}, A_i \right) \right\}^2 \right] \nonumber \\
    &+ \textsf{Var}_{X \sim \Xi^*} \left[ \left\langle g \left( X, \cdot \right), \mu^* \left( X, \cdot \right) \right\rangle_{\lambda_{\bbA}} \right], \nonumber
\end{align}
where the step (a) can be verified as follows:
\begin{align*}
    &\bbE_{\calI^*} \left[ \left\{ \frac{g \left( X_i, A_i \right) \mu^* \left( X_i, A_i \right)}{\pi_{i}^* \left( X_i, \bfO_{i-1}; A_i \right)} - f_i \left( X_i, \bfO_{i-1}, A_i \right) \right\}^2 \right] \\
    = \ & \bbE_{\calI^*} \left[ \bbE_{\calI^*} \left[ \left. \left\{ \frac{g \left( X_i, A_i \right) \mu^* \left( X_i, A_i \right)}{\pi_{i}^* \left( X_i, \bfO_{i-1}; A_i \right)} - f_i \left( X_i, \bfO_{i-1}, A_i \right) \right\}^2 \right| \left( X_i, \calH_{i-1} \right) \right] \right] \\
    = \ & \bbE_{\calI^*} \left[ \textsf{Var}_{\calI^*} \left[ \left. \frac{g \left( X_i, A_i \right) \mu^* \left( X_i, A_i \right)}{\pi_{i}^* \left( X_i, \bfO_{i-1}; A_i \right)} - f_i \left( X_i, \bfO_{i-1}, A_i \right) \right| \left( X_i, \calH_{i-1} \right) \right] \right] \\
    &+ \bbE_{\calI^*} \left[ \left( \underbrace{\bbE_{\calI^*} \left[ \left. \frac{g \left( X_i, A_i \right) \mu^* \left( X_i, A_i \right)}{\pi_{i}^* \left( X_i, \bfO_{i-1}; A_i \right)} - f_i \left( X_i, \bfO_{i-1}, A_i \right) \right| \left( X_i, \calH_{i-1} \right) \right]}_{= \ \left\langle g \left( X_i, \cdot \right), \mu^* \left( X_i, \cdot \right) \right\rangle_{\lambda}} \right)^2 \right] \\
    = \ & \bbE_{\calI^*} \left[ \left\{ \frac{g \left( X_i, A_i \right) \mu^* \left( X_i, A_i \right)}{\pi_{i}^* \left( X_i, \bfO_{i-1}; A_i \right)} - \left\langle g \left( X_i, \cdot \right), \mu^* \left( X_i, \cdot \right) \right\rangle_{\lambda_{\bbA}} - f_i \left( X_i, \bfO_{i-1}, A_i \right) \right\}^2 \right] \\
    &+ \bbE_{\calI^*} \left[ \left\langle g \left( X_i, \cdot \right), \mu^* \left( X_i, \cdot \right) \right\rangle_{\lambda_{\bbA}}^2 \right].
\end{align*}

\indent Next, we compute $\textsf{Cov}_{\calI^*} \left[ \frac{g \left( X_i, A_i \right) Y_i}{\pi_{i}^* \left( X_i, \bfO_{i-1}; A_i \right)} - f_i \left( X_i, \bfO_{i-1}, A_i \right), \frac{g \left( X_j, A_j \right) Y_j}{\pi_{j}^* \left( X_j, \bfO_{j-1}; A_j \right)} - f_j \left( X_j, \bfO_{j-1}, A_j \right) \right]$:

\begin{equation}
    \label{eqn:proof_prop:mean_variance_perturbed_ipw_v4}
    \begin{split}
        &\textsf{Cov}_{\calI^*} \left[ \frac{g \left( X_i, A_i \right) Y_i}{\pi_{i}^* \left( X_i, \bfO_{i-1}; A_i \right)} - f_i \left( X_i, \bfO_{i-1}, A_i \right), \frac{g \left( X_j, A_j \right) Y_j}{\pi_{j}^* \left( X_j, \bfO_{j-1}; A_j \right)} - f_j \left( X_j, \bfO_{j-1}, A_j \right) \right] \\ 
        = \ & \bbE_{\calI^*} \left[ \left\{ \frac{g \left( X_i, A_i \right) Y_i}{\pi_{i}^* \left( X_i, \bfO_{i-1}; A_i \right)} - f_i \left( X_i, \bfO_{i-1}, A_i \right) \right\} \right. \\
        &\left. \left\{ \frac{g \left( X_j, A_j \right) \mu^* \left( X_j, A_j \right)}{\pi_{j}^* \left( X_j, \bfO_{j-1}; A_j \right)} - f_j \left( X_j, \bfO_{j-1}, A_j \right) \right\} \right] - \left\{ \tau \left( \calI^* \right) \right\}^2 \\
        = \ & \bbE_{\calI^*} \left[ \bbE_{\calI^*} \left[ \left\{ \frac{g \left( X_i, A_i \right) Y_i}{\pi_{i}^* \left( X_i, \bfO_{i-1}; A_i \right)} - f_i \left( X_i, \bfO_{i-1}, A_i \right) \right\} \right. \right. \\
        &\left. \left. \left. \left\{ \frac{g \left( X_j, A_j \right) \mu^* \left( X_j, A_j \right)}{\pi_{j}^* \left( X_j, \bfO_{j-1}; A_j \right)} - f_j \left( X_j, \bfO_{j-1}, A_j \right) \right\} \right| \left( X_j, \calH_{j-1} \right) \right] \right] - \left\{ \tau \left( \calI^* \right) \right\}^2 \\
        = \ & \bbE_{\calI^*} \left[ \left\{ \frac{g \left( X_i, A_i \right) Y_i}{\pi_{i}^* \left( X_i, \bfO_{i-1}; A_i \right)} - f_i \left( X_i, \bfO_{i-1}, A_i \right) \right\} \left\langle g \left( X_j, \cdot \right), \mu^* \left( X_j, \cdot \right) \right\rangle_{\lambda_{\bbA}} \right] - \left\{ \tau \left( \calI^* \right) \right\}^2 \\
        = \ & \bbE_{\calI^*} \left[ \bbE_{\calI^*} \left[ \left. \left\{ \frac{g \left( X_i, A_i \right) Y_i}{\pi_{i}^* \left( X_i, \bfO_{i-1}; A_i \right)} - f_i \left( X_i, \bfO_{i-1}, A_i \right) \right\} \left\langle g \left( X_j, \cdot \right), \mu^* \left( X_j, \cdot \right) \right\rangle_{\lambda_{\bbA}} \right| \calH_{j-1} \right] \right] - \left\{ \tau \left( \calI^* \right) \right\}^2 \\
        \stackrel{\textnormal{(b)}}{=} \ & 0,
    \end{split} 
\end{equation}
where the step (b) holds due to the fact that $X_j$ is independent of the historical data $\calH_{j-1}$, which immediately yields $\left. X_j \right| \calH_{j-1} \stackrel{d}{=} X_j \sim \Xi^* (\cdot)$. By taking two equations \eqref{eqn:proof_prop:mean_variance_perturbed_ipw_v3} and \eqref{eqn:proof_prop:mean_variance_perturbed_ipw_v4} collectively into the equation \eqref{eqn:proof_prop:mean_variance_perturbed_ipw_v2}, one has
\[
    \begin{split}
        &n \cdot \textsf{Var}_{\calI^*} \left[ \hat{\tau}_{n}^{f} \left( \bfO_n \right) \right] \\
        = \ & \textsf{Var}_{X \sim \Xi^*} \left[ \left\langle g ( X, \cdot ), \mu^* ( X, \cdot ) \right\rangle_{\lambda_{\bbA}} \right] \\
        &+ \frac{1}{n} \sum_{i=1}^{n} \left( \bbE_{\calI^*} \left[ \frac{g^2 \left( X_i, A_i \right) \sigma^2 \left( X_i, A_i \right)}{\left( \pi_{i}^* \right)^2 \left( X_i, \bfO_{i-1}, A_i \right)} \right] \right. \\
        &\left. + \bbE_{\calI^*} \left[ \left\{ \frac{g \left( X_i, A_i \right) \mu^* \left( X_i, A_i \right)}{\pi_{i}^* \left( X_i, \bfO_{i-1}; A_i \right)} - \left\langle g \left( X_i, \cdot \right), \mu^* \left( X_i, \cdot \right) \right\rangle_{\lambda_{\bbA}} - f_i \left( X_i, \bfO_{i-1}, A_i \right) \right\}^2 \right] \right),
    \end{split}
\]
as desired.

\subsection{Proof of Theorem \ref{thm:mse_aipw}}
\label{subsec:proof_thm:mse_aipw}

We first single out a key technical lemma throughout this section that plays a significant role in the proof of Theorem \ref{thm:mse_aipw}. 

\begin{lemma}
\label{lemma:aipw_v1}
The following results hold:
\begin{enumerate} [label = (\roman*)]
    \item It holds that $\bbE_{\calI^*} \left[ \left. \hat{\Gamma}_i \left( \bfO_i \right) \right| \left( X_i, \calH_{i-1} \right) \right] = \left\langle g \left( X_i, \cdot \right), \mu^* \left( X_i, \cdot \right) \right\rangle_{\lambda_{\bbA}}$ for all $i \in [n]$. Therefore, one has
    \begin{equation}
        \label{eqn:lemma:aipw_v1_v1}
        \begin{split}
            \bbE_{\calI^*} \left[ \hat{\Gamma}_i \left( \bfO_i \right) \right] = \bbE_{\calI^*} \left[ \bbE_{\calI^*} \left[ \left. \hat{\Gamma}_i \left( \bfO_i \right) \right| \left( X_i, \calH_{i-1} \right) \right] \right] = \bbE_{\calI^*} \left[ \left\langle g \left( X_i, \cdot \right), \mu^* \left( X_i, \cdot \right) \right\rangle_{\lambda_{\bbA}} \right] = \tau \left( \calI^* \right).
        \end{split}
    \end{equation}
    \item For every $1 \leq i < j \leq n$, we have $\textnormal{\textsf{Cov}}_{\calI^*} \left[ \hat{\Gamma}_i \left( \bfO_i \right), \hat{\Gamma}_j \left( \bfO_j \right) \right] = 0$;
    \item For every $i \in [n]$,
    \begin{equation}
        \label{eqn:lemma:aipw_v1_v2}
        \begin{split}
            &\textnormal{\textsf{Var}}_{\calI^*} \left[ \hat{\Gamma}_i \left( \bfO_i \right) \right] \\
            = \ & \textnormal{\textsf{Var}}_{X \sim \Xi^*} \left[ \left\langle g \left( X, \cdot \right), \mu^* \left( X, \cdot \right) \right\rangle_{\lambda_{\bbA}} \right] + \bbE_{\calI^*} \left[ \frac{g^2 \left( X_i, A_i \right) \sigma^2 \left( X_i, A_i \right)}{\left( \pi_{i}^* \right)^2 \left( X_i, \bfO_{i-1}; A_i \right)} \right] \\
            &+ \bbE_{\calI^*} \left[ \textnormal{\textsf{Var}}_{\calI^*} \left[ \left. \frac{g \left( X_i, A_i \right)}{\pi_{i}^* \left( X_i, \bfO_{i-1}; A_i \right)} \left\{ \hat{\mu}_{i} \left( \bfO_{i-1} \right) \left( X_i, A_i \right) - \mu^* \left( X_i, A_i \right) \right\} \right| \left( X_i, \calH_{i-1} \right) \right] \right] \\
            \leq \ & \textnormal{\textsf{Var}}_{X \sim \Xi^*} \left[ \left\langle g \left( X, \cdot \right), \mu^* \left( X, \cdot \right) \right\rangle_{\lambda_{\bbA}} \right] + \bbE_{\calI^*} \left[ \frac{g^2 \left( X_i, A_i \right) \sigma^2 \left( X_i, A_i \right)}{\left( \pi_{i}^* \right)^2 \left( X_i, \bfO_{i-1}; A_i \right)} \right] \\
            &+ \bbE_{\calI^*} \left[ \frac{g^2 \left( X_i, A_i \right) \left\{ \hat{\mu}_{i} \left( \bfO_{i-1} \right) \left( X_i, A_i \right) - \mu^* \left( X_i, A_i \right) \right\}^2}{\left( \pi_{i}^* \right)^2 \left( X_i, \bfO_{i-1}; A_i \right)} \right].
        \end{split}
    \end{equation}
\end{enumerate}
\end{lemma}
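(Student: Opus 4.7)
The plan is to proceed part-by-part, with part (i) serving as the workhorse from which parts (ii) and (iii) follow by direct applications of standard conditioning identities. Throughout, I use the natural filtration $\calH_i := \sigma (\bfO_i)$ and exploit two structural facts from Section \ref{sec:problem_formulation}: (a) the estimate $\hat{\mu}_i (\bfO_{i-1})$ is $\calH_{i-1}$-measurable by construction; (b) the context $X_i$ is drawn from $\Xi^*$ independently of $\calH_{i-1}$, and the conditional density of $A_i$ given $(X_i, \calH_{i-1})$ with respect to $\lambda_{\bbA}$ is precisely the propensity $\pi_{i}^* (X_i, \bfO_{i-1}; \cdot)$.

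For part (i), I would first reduce $\bbE_{\calI^*} [\hat{\Gamma}_i (\bfO_i) | (X_i, A_i, \calH_{i-1})]$ by noting that $\hat{\mu}_i (\bfO_{i-1})$ is $(X_i, A_i, \calH_{i-1})$-measurable, so only $Y_i$ contributes randomness; using $\bbE_{\calI^*}[Y_i | X_i, A_i, \calH_{i-1}] = \mu^* (X_i, A_i)$, the \textsf{AIPW} score collapses to $\frac{g (X_i, A_i)}{\pi_{i}^* (X_i, \bfO_{i-1}; A_i)} \{\mu^* (X_i, A_i) - \hat{\mu}_i (\bfO_{i-1}) (X_i, A_i)\}$ plus the deterministic correction term. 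Then I marginalize over $A_i$: since $A_i | (X_i, \calH_{i-1})$ has density $\pi_{i}^* (X_i, \bfO_{i-1}; \cdot)$, the propensity in the denominator cancels exactly, producing $\langle g (X_i, \cdot), \mu^* (X_i, \cdot) - \hat{\mu}_i (\bfO_{i-1}) (X_i, \cdot) \rangle_{\lambda_{\bbA}}$, which combines with the correction term $\langle g (X_i, \cdot), \hat{\mu}_i (\bfO_{i-1}) (X_i, \cdot) \rangle_{\lambda_{\bbA}}$ to yield exactly $\langle g (X_i, \cdot), \mu^* (X_i, \cdot) \rangle_{\lambda_{\bbA}}$. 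The unconditional identity \eqref{eqn:lemma:aipw_v1_v1} then follows from the tower rule together with $X_i \sim \Xi^*$ independent of $\calH_{i-1}$.

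For part (ii), the key point is that for $i < j$, the score $\hat{\Gamma}_i (\bfO_i)$ is $\calH_i$- and hence $\calH_{j-1}$-measurable, so conditioning on $(X_j, \calH_{j-1})$ pulls $\hat{\Gamma}_i$ out; applying part (i) to the $j$-th score gives $\bbE_{\calI^*} [\hat{\Gamma}_i \hat{\Gamma}_j | X_j, \calH_{j-1}] = \hat{\Gamma}_i \cdot \langle g (X_j, \cdot), \mu^* (X_j, \cdot) \rangle_{\lambda_{\bbA}}$. Taking the outer expectation and using $X_j \indep \calH_{j-1}$ factorizes this into $\bbE_{\calI^*}[\hat{\Gamma}_i] \cdot \bbE_{X \sim \Xi^*} [\langle g (X, \cdot), \mu^* (X, \cdot) \rangle_{\lambda_{\bbA}}] = \tau (\calI^*)^2 = \bbE_{\calI^*}[\hat{\Gamma}_i] \bbE_{\calI^*}[\hat{\Gamma}_j]$, which proves zero covariance.

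For part (iii), I would apply the law of total variance twice. Conditioning first on $(X_i, \calH_{i-1})$ and using part (i) gives $\textsf{Var}_{\calI^*}[\bbE_{\calI^*}[\hat{\Gamma}_i | X_i, \calH_{i-1}]] = \textsf{Var}_{X \sim \Xi^*}[\langle g (X, \cdot), \mu^* (X, \cdot)\rangle_{\lambda_{\bbA}}]$, again using independence of $X_i$ and $\calH_{i-1}$. For the inner variance $\textsf{Var}_{\calI^*}[\hat{\Gamma}_i | X_i, \calH_{i-1}]$, the correction term is constant under this conditioning and drops out. Decomposing the remaining IPW piece once more by conditioning on $(X_i, A_i, \calH_{i-1})$: the conditional variance of $Y_i$ yields $\frac{g^2 (X_i, A_i) \sigma^2 (X_i, A_i)}{(\pi_{i}^*)^2 (X_i, \bfO_{i-1}; A_i)}$ (after outer integration), while the variance of the conditional mean over $A_i$ produces the second displayed variance term in \eqref{eqn:lemma:aipw_v1_v2}. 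The upper bound follows by $\textsf{Var}(Z | \cdot) \leq \bbE[Z^2 | \cdot]$ applied to the centered-optional IPW term (using that the added constant does not alter variance), followed by the tower rule.

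The main obstacle is careful bookkeeping of measurability under the nested conditionings $\calH_{i-1} \subseteq (X_i, \calH_{i-1}) \subseteq (X_i, A_i, \calH_{i-1})$, in particular the cancellation of $\hat{\mu}_i$ against the correction term in part (i); once that is in place, parts (ii) and (iii) are straightforward applications of the tower and total variance identities.
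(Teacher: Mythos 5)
Your proposal is correct and follows essentially the same route as the paper's proof: part (i) via the nested conditioning on $\left( X_i, A_i, \calH_{i-1} \right)$ and then $\left( X_i, \calH_{i-1} \right)$ with the propensity cancelling in the marginalization over $A_i$, part (ii) via $\calH_{j-1}$-measurability of $\hat{\Gamma}_i \left( \bfO_i \right)$ and $X_j \indep \calH_{j-1}$, and part (iii) via the law of total variance applied twice followed by $\textnormal{\textsf{Var}} \leq \bbE \left[ \cdot^2 \right]$. No substantive differences from the paper's argument.
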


\begin{proof} [Proof of Lemma \ref{lemma:aipw_v1}] \ \\
\indent (\romannumeral 1) From the definition of $\hat{\Gamma}_i (\cdot): \bbO^i \to \bbR$ in \eqref{eqn:alg:aipw_estimator_v2}, we have
\begin{equation}
    \label{eqn:proof_lemma:aipw_v1_v1}
    \begin{split}
        \bbE_{\calI^*} \left[ \left. \hat{\Gamma}_i \left( \bfO_i \right) \right| \left( X_i, A_i, \calH_{i-1} \right) \right] 
        = \ & \frac{g \left( X_i, A_i \right)}{\pi_{i}^* \left( X_i, \bfO_{i-1}; A_i \right)} \left\{ \mu^* \left( X_i, A_i \right) - \hat{\mu}_{i} \left( \bfO_{i-1} \right) \left( X_i, A_i \right) \right\} \\
        &+ \left\langle g \left( X_i, \cdot \right), \hat{\mu}_{i} \left( \bfO_{i-1} \right) \left( X_i, \cdot \right) \right\rangle_{\lambda_{\bbA}}.
    \end{split}
\end{equation}
Thus, we obtain
\begin{equation}
    \label{eqn:proof_lemma:aipw_v1_v2}
    \begin{split}
        &\bbE_{\calI^*} \left[ \left. \hat{\Gamma}_i \left( \bfO_i \right) \right| \left( X_i, \calH_{i-1} \right) \right] \\
        = \ & \bbE_{\calI^*} \left[ \left. \bbE_{\calI^*} \left[ \left. \hat{\Gamma}_i \left( \bfO_i \right) \right| \left( X_i, A_i, \calH_{i-1} \right) \right] \right| \left( X_i, \calH_{i-1} \right) \right] \\
        = \ & \int_{\bbA} \frac{g \left( X_i, a \right)}{\pi_{i}^* \left( X_i, \bfO_{i-1}; a \right)} \left\{ \mu^* \left( X_i, a \right) - \hat{\mu}_i \left( \bfO_{i-1} \right) \left( X_i, a \right) \right\} \cdot \pi_{i}^* \left( X_i, \bfO_{i-1}; a \right) \mathrm{d} \lambda_{\bbA} (a) \\
        &+ \left\langle g \left( X_i, \cdot \right), \hat{\mu}_{i} \left( \bfO_{i-1} \right) \left( X_i, \cdot \right) \right\rangle_{\lambda_{\bbA}} \\
        = \ & \left\langle g \left( X_i, \cdot \right), \mu^* \left( X_i, \cdot \right) \right\rangle_{\lambda_{\bbA}}
    \end{split}
\end{equation}
as desired.
\medskip

\indent (\romannumeral 2) One can reveal that
\begin{equation}
    \label{eqn:proof_lemma:aipw_v1_v3}
    \begin{split}
        &\textsf{Cov}_{\calI^*} \left[ \hat{\Gamma}_i \left( \bfO_i \right), \hat{\Gamma}_j \left( \bfO_j \right) \right] \\
        = \ & \bbE_{\calI^*} \left[ \hat{\Gamma}_i \left( \bfO_i \right) \bbE \left[ \left. \hat{\Gamma}_j \left( \bfO_j \right) \right| \left( X_j, A_j, \calH_{j-1} \right) \right] \right] - \left\{ \tau \left( \calI^* \right) \right\}^2 \\
        = \ & \bbE_{\calI^*} \left[ \hat{\Gamma}_i \left( \bfO_i \right) \left[ \frac{g \left( X_j, A_j \right)}{\pi_{j}^* \left( X_j, \bfO_{j-1}; A_j \right)} \left\{ \mu^* \left( X_j, A_j \right) - \hat{\mu}_{j} \left( \bfO_{j-1} \right) \left( X_j, A_j \right) \right\} \right. \right. \\
        &\left. \left. + \left\langle g \left( X_j, \cdot \right), \hat{\mu}_{j} \left( \bfO_{j-1} \right) \left( X_j, \cdot \right) \right\rangle_{\lambda_{\bbA}} \right] \right] - \left\{ \tau \left( \calI^* \right) \right\}^2 \\
        = \ & \bbE_{\calI^*} \left[ \hat{\Gamma}_i \left( \bfO_i \right) \bbE_{\calI^*} \left[ \frac{g \left( X_j, A_j \right)}{\pi_{j}^* \left( X_j, \bfO_{j-1}; A_j \right)} \left\{ \mu^* \left( X_j, A_j \right) - \hat{\mu}_{j} \left( \bfO_{j-1} \right) \left( X_j, A_j \right) \right\} \right. \right. \\
        &\left. \left. \left.+ \left\langle g \left( X_j, \cdot \right), \hat{\mu}_{j} \left( \bfO_{j-1} \right) \left( X_j, \cdot \right) \right\rangle_{\lambda_{\bbA}} \right| \left( X_j, \calH_{j-1} \right) \right] \right] - \left\{ \tau \left( \calI^* \right) \right\}^2 \\
        = \ & \bbE_{\calI^*} \left[ \hat{\Gamma}_i \left( \bfO_i; g \right) \left\langle g \left( X_j, \cdot \right), \mu^* \left( X_j, \cdot \right) \right\rangle_{\lambda_{\bbA}} \right] - \left\{ \tau \left( \calI^* \right) \right\}^2 \\
        \stackrel{\textnormal{(a)}}{=} \ & 0,
    \end{split}
\end{equation}
where the step (a) holds due to the facts that $\hat{\Gamma}_i \left( \bfO_i \right)$ is $\calH_{j-1}$-measurable and $X_j \indep \calH_{j-1}$, together with the equation \eqref{eqn:lemma:aipw_v1_v1}.
\medskip

\indent (\romannumeral 3) It follows that
\begin{align}
    \label{eqn:proof_lemma:aipw_v1_v4}
    &\textsf{Var}_{\calI^*} \left[ \hat{\Gamma}_i \left( \bfO_i \right) \right] \nonumber \\
    = \ & \bbE_{\calI^*} \left[ \textsf{Var}_{\calI^*} \left[ \left. \hat{\Gamma}_i \left( \bfO_i \right) \right| \left( X_i, \calH_{i-1} \right) \right] \right] + \textsf{Var}_{\calI^*} \left[ \bbE_{\calI^*} \left[ \left. \hat{\Gamma}_i \left( \bfO_i \right) \right| \left( X_i, \calH_{i-1} \right) \right] \right] \nonumber \\
    \stackrel{\textnormal{(b)}}{=} \ & \bbE_{\calI^*} \left[  \bbE_{\calI^*} \left[ \left. \textsf{Var}_{\calI^*} \left[ \left. \hat{\Gamma}_i \left( \bfO_i \right) \right| \left( X_i, A_i, \calH_{i-1} \right) \right] \right| \left( X_i, \calH_{i-1} \right) \right] \right] \nonumber \\
    &+ \bbE_{\calI^*} \left[ \textsf{Var}_{\calI^*} \left[ \left. \bbE_{\calI^*} \left[ \left. \hat{\Gamma}_i \left( \bfO_i \right) \right| \left( X_i, A_i, \calH_{i-1} \right) \right] \right| \left( X_i, \calH_{i-1} \right) \right] \right] \\
    &+ \textsf{Var}_{X \sim \Xi^*} \left[ \left\langle g (X, \cdot), \mu^* (X, \cdot) \right\rangle_{\lambda_{\bbA}} \right] \nonumber \\
    = \ & \bbE_{\calI^*} \left[ \frac{g^2 \left( X_i, A_i \right) \sigma^2 \left( X_i, A_i \right)}{\left( \pi_{i}^* \right)^2 \left( X_i, \bfO_{i-1}; A_i \right)} \right] \nonumber \\
    &+ \bbE_{\calI^*} \left[ \textsf{Var}_{\calI^*} \left[ \left. \frac{g \left( X_i, A_i \right)}{\pi_{i}^* \left( X_i, \bfO_{i-1}; A_i \right)} \left\{ \mu^* \left( X_i, A_i \right) - \hat{\mu}_{i} \left( \bfO_{i-1} \right) \left( X_i, A_i \right) \right\} \right| \left( X_i, \calH_{i-1} \right) \right] \right] \nonumber \\
    &+ \textsf{Var}_{X \sim \Xi^*} \left[ \left\langle g (X, \cdot), \mu^* (X, \cdot) \right\rangle_{\lambda_{\bbA}} \right], \nonumber
\end{align}
as desired, where the step (b) follows from the fact \eqref{eqn:proof_lemma:aipw_v1_v2}.

\end{proof}

Now, it's time to finish the proof of Theorem \ref{thm:mse_aipw}. One can reveal that
\begin{align*}
    &\bbE_{\calI^*} \left[ \left\{ \hat{\tau}_{n}^{\textnormal{\textsf{AIPW}}} \left( \bfO_n; g \right) - \tau \left( \calI^*; g \right) \right\}^2 \right] \\
    \stackrel{\textnormal{(a)}}{=} \ & \frac{1}{n^2} \sum_{i=1}^{n} \textsf{Var}_{\calI^*} \left[ \hat{\Gamma}_i \left( \bfO_i; g \right) \right] \\
    \stackrel{\textnormal{(b)}}{\leq} \ & \frac{1}{n^2} \sum_{i=1}^{n} \left\{ \textsf{Var}_{X \sim \Xi^*} \left[ \left\langle g \left( X, \cdot \right), \mu^* \left( X, \cdot \right) \right\rangle_{\lambda_{\bbA}} \right] + \bbE_{\calI^*} \left[ \frac{g^2 \left( X_i, A_i \right) \sigma^2 \left( X_i, A_i \right)}{\left( \pi_{i}^* \right)^2 \left( X_i, \bfO_{i-1}; A_i \right)} \right] \right. \\
    &\left. + \bbE_{\calI^*} \left[ \frac{g^2 \left( X_i, A_i \right) \left\{ \mu^* \left( X_i, A_i \right) - \hat{\mu}_{i} \left( \bfO_{i-1} \right) \left( X_i, A_i \right) \right\}^2}{\left( \pi_{i}^* \right)^2 \left( X_i, \bfO_{i-1}; A_i \right)} \right] \right\} \\
    \stackrel{\textnormal{(c)}}{=} \ & \frac{1}{n} \left\{ v_{*}^2 + \frac{1}{n} \sum_{i=1}^{n} \bbE_{\calI^*} \left[ \frac{g^2 \left( X_i, A_i \right) \left\{ \hat{\mu}_{i} \left( \bfO_{i-1} \right) \left( X_i, A_i \right) - \mu^* \left( X_i, A_i \right) \right\}^2}{\left( \pi_{i}^* \right)^2 \left( X_i, \bfO_{i-1}; A_i \right)} \right] \right\},
\end{align*}
where the step (a) holds due to the part (\romannumeral 2) of Lemma \ref{lemma:aipw_v1}, the step (b) makes use of the inequality \eqref{eqn:lemma:aipw_v1_v2}, and the step (c) follows from the definition of $v_{*}^2$ in \eqref{eqn:optimal_variance}.

\subsection{Proof of Theorem \ref{thm:online_regression_oracle_ineq_v1}}
\label{subsec:proof_thm:online_regression_oracle_ineq_v1}

\indent It holds due to the observation \eqref{eqn:property_online_regression_v1_loss} that
\begin{align}
    &\bbE_{\calI^*} \left[ \sum_{i=1}^{n} l_i \left\{ \hat{\mu}_i \left( \bfO_{i-1} \right) \right\} \right] \nonumber \\
    = \ & \sum_{i=1}^{n} \bbE_{\calI^*} \left[ \bbE_{\calI^*} \left[ \left. l_i \left\{ \hat{\mu}_i \left( \bfO_{i-1} \right) \right\} \right| \left( \calH_{i-1}, X_i, A_i \right) \right] \right] \nonumber \\
    = \ & \sum_{i=1}^{n} \bbE_{\calI^*} \left[ \frac{g^2 \left( X_i, A_i \right)}{\left( \pi_{i}^* \right)^2 \left( X_i, \bfO_{i-1}; A_i \right)} \left[ \sigma^2 \left( X_i, A_i \right) + \left\{ \hat{\mu}_i \left( \bfO_{i-1} \right) \left( X_i, A_i \right) - \mu^* \left( X_i, A_i \right) \right\}^2 \right] \right] \nonumber \\
    = \ & n \left\| \sigma \right\|_{(n)}^2 + \sum_{i=1}^{n} \bbE_{\calI^*} \left[  \frac{g^2 \left( X_i, A_i \right) \left\{ \hat{\mu}_i \left( \bfO_{i-1} \right) \left( X_i, A_i \right) - \mu^* \left( X_i, A_i \right) \right\}^2}{\left( \pi_{i}^* \right)^2 \left( X_i, \bfO_{i-1}; A_i \right)} \right] \nonumber,
\end{align}
which establishes the following expression of the estimation error term \eqref{eqn:treatment_effect_estimation_error}:
\begin{align}
    \label{eqn:proof_thm:online_regression_oracle_ineq_v1_v1}
        &\frac{1}{n} \sum_{i=1}^{n} \bbE_{\calI^*} \left[ \frac{g^2 \left( X_i, A_i \right) \left\{ \hat{\mu}_{i} \left( \bfO_{i-1} \right) \left( X_i, A_i \right) - \mu^* \left( X_i, A_i \right) \right\}^2}{\left( \pi_{i}^* \right)^2 \left( X_i, \bfO_{i-1}; A_i \right)} \right] \nonumber \\
        = \ & \frac{1}{n} \bbE_{\calI^*} \left[ \sum_{i=1}^{n} l_i \left\{ \hat{\mu}_i \left( \bfO_{i-1} \right) \right\} \right] - \left\| \sigma \right\|_{(n)}^2 \\
        = \ & \frac{1}{n} \bbE_{\calI^*} \left[ \textsf{Regret} \left( n; \calA \right) \right] + \frac{1}{n} \bbE_{\calI^*} \left[ \inf \left\{ \sum_{i=1}^{n} l_i (\mu): \mu \in \calF \right\} \right] - \left\| \sigma \right\|_{(n)}^2 \nonumber.
\end{align}
At this point, one can realize that
\begin{align}
    \label{eqn:proof_thm:online_regression_oracle_ineq_v1_v2}
        &\frac{1}{n} \bbE_{\calI^*} \left[ \inf \left\{ \sum_{i=1}^{n} l_i (\mu): \mu \in \calF \right\} \right] \nonumber \\
        \leq \ & \inf \left\{ \frac{1}{n} \bbE_{\calI^*} \left[ \sum_{i=1}^{n} l_i (\mu) \right]: \mu \in \calF \right\} \nonumber \\
        = \ & \inf \left\{ \frac{1}{n} \sum_{i=1}^{n} \bbE_{\calI^*} \left[ \bbE_{\calI^*} \left[ \left. l_i (\mu) \right| \left( \calH_{i-1}, X_i, A_i \right) \right] \right]: \mu \in \calF \right\} \\
        \stackrel{\textnormal{(a)}}{=} \ & \inf \left\{ \frac{1}{n} \sum_{i=1}^{n} \bbE_{\calI^*} \left[ \frac{g^2 \left( X_i, A_i \right)}{\left( \pi_{i}^* \right)^2 \left( X_i, \bfO_{i-1}; A_i \right)} \left[ \sigma^2 \left( X_i, A_i \right) + \left\{ \mu \left( X_i, A_i \right) - \mu^* \left( X_i, A_i \right) \right\}^2 \right] \right]: \mu \in \calF \right\} \nonumber \\
        = \ & \left\| \sigma \right\|_{(n)}^2 + \inf \left\{ \left\| \mu - \mu^* \right\|_{(n)}^2 : \mu \in \calF \right\} \nonumber,
\end{align}
where the step (a) holds by the fact \eqref{eqn:property_online_regression_v1_loss}. Taking two pieces \eqref{eqn:proof_thm:online_regression_oracle_ineq_v1_v1} and \eqref{eqn:proof_thm:online_regression_oracle_ineq_v1_v2} collectively, it follows that
\begin{equation}
    \label{eqn:proof_thm:online_regression_oracle_ineq_v1_v3}
    \begin{split}
         &\frac{1}{n} \sum_{i=1}^{n} \bbE_{\calI^*} \left[ \frac{g^2 \left( X_i, A_i \right) \left\{ \hat{\mu}_{i} \left( \bfO_{i-1} \right) \left( X_i, A_i \right) - \mu^* \left( X_i, A_i \right) \right\}^2}{\left( \pi_{i}^* \right)^2 \left( X_i, \bfO_{i-1}; A_i \right)} \right] \\
         \leq \ & \frac{1}{n} \bbE_{\calI^*} \left[ \textsf{Regret} \left( n; \calA \right) \right] + \inf \left\{ \left\| \mu - \mu^* \right\|_{(n)}^2 : \mu \in \calF \right\}.
    \end{split}
\end{equation}
Hence, the upper bound \eqref{eqn:thm:online_regression_oracle_ineq_v1_v1} on the \textsf{MSE} of the \textsf{AIPW} estimator \eqref{eqn:alg:aipw_estimator_v1} is an immediate consequence of the inequality \eqref{eqn:proof_thm:online_regression_oracle_ineq_v1_v3} by putting it into the bound \eqref{eqn:thm:mse_aipw_v1} in Theorem \ref{thm:mse_aipw}.

\subsection{Proof of Theorem \ref{thm:regret_bound_ogd_v1}}
\label{subsec:proof_thm:regret_bound_ogd_v1}

One can easily observe from the equation \eqref{eqn:gradient_online_regression_v1_loss} for every $\mu \in \calF$ that
\begin{equation}
    \label{eqn:proof_thm:regret_bound_ogd_v1_v1}
    \begin{split}
        \left\| \nabla l_i (\mu) \right\|_{2}^2 = \frac{4 g^4 \left( X_i, A_i \right)}{\left( \pi_{i}^* \right)^4 \left( X_i, \bfO_{i-1}; A_i \right)} \left\{ Y_i - \mu \left( X_i, A_i \right) \right\}^2 \stackrel{\bbP_{\calI^*}^n\textnormal{-a.s.}}{\leq} \left( 4 L B^2 \right)^2,
    \end{split}
\end{equation}
which holds due to Assumption \ref{assumption:strict_overlap_condition} together with the fact $\bbY = \left[ -L, L \right]$. So, it turns out that the loss function \eqref{eqn:online_regression_v1_loss} is Lipschitz continuous with parameter $G := 4LB^2$ $\bbP_{\calI^*}^n$-almost surely. Hence, the desired conclusion immediately follows by \emph{Theorem 3.1} in \cite{hazan2016introduction} with parameter $G = 4LB^2$. 

\subsection{Proof of Theorem \ref{thm:regret_bound_ogd_v2}}
\label{subsec:proof_thm:regret_bound_ogd_v2}

One can realize from the equation \eqref{eqn:gradient_linear_approx_online_learning_loss} that $\bbP_{\calI^*}^{n}$-almost surely,
\begin{equation}
    \label{eqn:proof_thm:regret_bound_ogd_v2_v1}
    \begin{split}
        \left\| \nabla_{\bftheta} \calL_i (\bftheta) \right\|_{2}^2 = \ & \frac{4 g^4 \left( X_i, A_i \right)}{\left( \pi_{i}^* \right)^4 \left( X_i, \bfO_{i-1}; A_i \right)} \left\{ \bftheta^{\top} \phi \left( X_i, A_i \right) - Y_i \right\}^2 \left\| \phi \left( X_i, A_i \right) \right\|_{2}^2 \\
        \leq \ & 4 B^4 \left\{ \left| Y_i \right| + \left\| \bftheta \right\|_{2} \left\| \phi \left( X_i, A_i \right) \right\|_{2} \right\}^2 \left\| \phi \left( X_i, A_i \right) \right\|_{2}^2 \\
        \leq \ & 4 B^4 ( L + R )^2,
    \end{split}
\end{equation}
which holds due to Assumption \ref{assumption:strict_overlap_condition} together with the facts $\bbY = \left[ -L, L \right]$ and $\sup_{(x, a) \in \bbX \times \bbA} \left\| \phi (x, a) \right\|_{2} \leq 1$. So, the loss function \eqref{eqn:linear_approx_online_learning_loss} is Lipschitz continuous with parameter $G := 2 B^2 (L+R)$ $\bbP_{\calI^*}^n$-a.s. Hence, the desired result follows by \emph{Theorem 3.1} in \cite{hazan2016introduction} with parameter $G = 2 B^2 (L+R)$ and $D = 2R$.

\subsection{Consequences for particular outcome models: general function approximation}
\label{subsec:general_function_approximation}

Finally, it is time to consider the most challenging setting where the estimation of the treatment effect $\mu^*$ is parameterized by general function classes. Under Assumption \ref{assumption:strict_overlap_condition}, one can first observe from the \textsf{MSE} bound \eqref{eqn:thm:mse_aipw_v1} of the \textsf{AIPW} estimator \eqref{eqn:alg:aipw_estimator_v1} in Theorem \ref{thm:mse_aipw} that
\begin{equation}
    \label{eqn:general_function_approximation_v1}
    \begin{split}
        &\bbE_{\calI^*} \left[ \left\{ \hat{\tau}_{n}^{\textnormal{\textsf{AIPW}}} \left( \bfO_n \right) - \tau \left( \calI^* \right) \right\}^2 \right] \\
        \leq \ & \frac{1}{n} \left\{ v_{*}^2 + \frac{1}{n} \sum_{i=1}^{n} \bbE_{\calI^*} \left[ \frac{g^2 \left( X_i, A_i \right) \left\{ \hat{\mu}_{i} \left( \bfO_{i-1} \right) \left( X_i, A_i \right) - \mu^* \left( X_i, A_i \right) \right\}^2}{\left( \pi_{i}^* \right)^2 \left( X_i, \bfO_{i-1}; A_i \right)} \right] \right\} \\
        \leq \ & \frac{1}{n} \left\{ v_{*}^2 + \frac{B^2}{n} \sum_{i=1}^{n} \bbE_{\calI^*} \left[ \left\{ \hat{\mu}_{i} \left( \bfO_{i-1} \right) \left( X_i, A_i \right) - \mu^* \left( X_i, A_i \right) \right\}^2 \right] \right\}.
    \end{split}
\end{equation}
From the last term in the \textsf{MSE} bound \eqref{eqn:general_function_approximation_v1}, our aim becomes to control an upper bound of the term
\begin{equation}
    \label{eqn:treatment_effect_estimation_error_general_func_approx}
    \begin{split}
        \frac{1}{n} \sum_{i=1}^{n} \bbE_{\calI^*} \left[ \left\{ \hat{\mu}_{i} \left( \bfO_{i-1} \right) \left( X_i, A_i \right) - \mu^* \left( X_i, A_i \right) \right\}^2 \right]
    \end{split}
\end{equation}
in the finite sample regime. Towards this end, we consider the online non-parametric regression problem in Algorithm \ref{alg:online_regression_v1} whose sequence of loss functions $\left\{ l_i (\cdot): \left( \bbX \times \bbA \to \bbR \right) \to \bbR : i \in [n] \right\}$ defined as \eqref{eqn:online_regression_v1_loss} is superseded by $\left\{ \overline{l}_i (\cdot): \left( \bbX \times \bbA \to \bbR \right) \to \bbR : i \in [n] \right\}$, where
\begin{equation}
    \label{eqn:online_regression_v1_loss_general_func_approx}
    \begin{split}
        \overline{l}_i (\mu) := \left\{ Y_i - \mu \left( X_i, A_i \right) \right\}^2,\ \forall \left( \mu, i \right) \in \left( \bbX \times \bbA \to \bbR \right) \times [n].
    \end{split}
\end{equation}
It is straightforward to see for every $i \in [n]$ that
\begin{equation}
    \label{eqn:property_online_regression_v1_loss_general_func_approx}
    \begin{split}
        \bbE_{\calI^*} \left[ \left. \overline{l}_i (\mu) \right| \left( \calH_{i-1}, X_i, A_i \right) \right] = \sigma^2 \left( X_i, A_i \right) + \left\{ \mu \left( X_i, A_i \right) - \mu^* \left( X_i, A_i \right) \right\}^2.
    \end{split}
\end{equation}
With this modified online non-parametric regression problem, we now aim to minimize the learner's \emph{modified regret} defined as follows:
\begin{equation}
    \label{eqn:defi_regret_v2}
    \begin{split}
        \overline{\textsf{Regret}} \left( n, \calF; \overline{\calA} \right) := \sum_{i=1}^{n} \overline{l}_i \left\{ \hat{\mu}_i \left( \bfO_{i-1} \right) \right\} - \inf \left\{ \sum_{i=1}^{n} \overline{l}_i (\mu) : \mu \in \calF \right\},
    \end{split}
\end{equation}
where $\overline{\calA}$ denotes the learner's online non-parametric regression algorithm that returns a sequence of estimates $\left\{ \hat{\mu}_i \left( \bfO_{i-1} \right) \in \left( \bbX \times \bbA \to \bbR \right): i \in [n]\right\}$ of the treatment effect based on interactions with the environment which selects modified loss functions $\left\{ \overline{l}_i (\cdot): \left( \bbX \times \bbA \to \bbR \right) \to \bbR : i \in [n] \right\}$.

\begin{thm}
\label{thm:online_regression_oracle_ineq_general_func_approx}
The \textnormal{\textsf{AIPW}} estimator \eqref{eqn:alg:aipw_estimator_v1} based on a sequence $\left\{ \hat{\mu}_i \left( \bfO_{i-1} \right) \in \left( \bbX \times \bbA \to \bbR \right): i \in [n] \right\}$ of estimates for the treatment effect $\mu^*$ produced by making use of an online non-parametric regression algorithm $\overline{\calA}$ against the environment which chooses the sequence of modified loss functions $\left\{ \overline{l}_i (\cdot): \left( \bbX \times \bbA \to \bbR \right) \to \bbR : i \in [n] \right\}$ defined in \eqref{eqn:online_regression_v1_loss_general_func_approx} enjoys the following upper bound on the \textnormal{\textsf{MSE}}:
\begin{equation}
    \label{eqn:thm:online_regression_oracle_ineq_general_func_approx_v1}
    \begin{split}
        &\bbE_{\calI^*} \left[ \left\{ \hat{\tau}_{n}^{\textnormal{\textsf{AIPW}}} \left( \bfO_n \right) - \tau \left( \calI^* \right) \right\}^2 \right] \\
        \leq \ & \frac{1}{n} \left( v_{*}^2 + \frac{1}{n} \bbE_{\calI^*} \left[ \overline{\textnormal{\textsf{Regret}}} \left( n, \calF; \overline{\calA} \right) \right] + \underbrace{\inf \left\{ \frac{1}{n} \sum_{i=1}^{n} \bbE_{\calI^*} \left[ \left\{ \mu \left( X_i, A_i \right) - \mu^* \left( X_i, A_i \right) \right\}^2 \right] : \mu \in \calF \right\}}_{\textnormal{approximation error term.}} \right).
    \end{split}
\end{equation}
\end{thm}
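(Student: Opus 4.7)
My plan is to mimic the structure of the proof of Theorem \ref{thm:online_regression_oracle_ineq_v1} (in Appendix \ref{subsec:proof_thm:online_regression_oracle_ineq_v1}), but with the unweighted loss functions $\overline{l}_i$ playing the role of the importance-weighted losses $l_i$. The starting point is the non-asymptotic MSE bound \eqref{eqn:general_function_approximation_v1} (which already uses Assumption \ref{assumption:strict_overlap_condition} to strip out the importance weights), so that the only quantity left to control is the unweighted average estimation error \eqref{eqn:treatment_effect_estimation_error_general_func_approx}. Hence the entire proof reduces to establishing the oracle-type inequality
\[
\frac{1}{n}\sum_{i=1}^{n}\bbE_{\calI^*}\!\left[\left\{\hat{\mu}_i(\bfO_{i-1})(X_i,A_i)-\mu^*(X_i,A_i)\right\}^2\right]
\leq \frac{1}{n}\bbE_{\calI^*}\!\left[\overline{\textsf{Regret}}(n,\calF;\overline{\calA})\right] + \inf_{\mu\in\calF}\frac{1}{n}\sum_{i=1}^{n}\bbE_{\calI^*}\!\left[\left\{\mu(X_i,A_i)-\mu^*(X_i,A_i)\right\}^2\right],
\]
and then plugging this into \eqref{eqn:general_function_approximation_v1}.

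To establish this inequality I would first apply the tower property and the conditional identity \eqref{eqn:property_online_regression_v1_loss_general_func_approx} to rewrite $\bbE_{\calI^*}[\sum_{i=1}^n \overline{l}_i(\hat{\mu}_i(\bfO_{i-1}))]$ as $\sum_{i=1}^n \bbE_{\calI^*}[\sigma^2(X_i,A_i)] + \sum_{i=1}^n \bbE_{\calI^*}[(\hat{\mu}_i(X_i,A_i)-\mu^*(X_i,A_i))^2]$. Because each $\hat{\mu}_i$ is $\calH_{i-1}$-measurable and independent of the additional randomness in $(X_i,A_i,Y_i)$ conditional on $\calH_{i-1}$, the conditioning step is identical to the one performed in \eqref{eqn:proof_thm:online_regression_oracle_ineq_v1_v1}; the only change is that the pre-factor $g^2/(\pi_i^*)^2$ is absent, so the outcome is an unweighted mean-squared error rather than a $\|\cdot\|_{(n)}$-norm.

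Next I would handle the comparator side. By Jensen's inequality applied to the concave map $(\cdot)\mapsto\inf$, one has $\bbE_{\calI^*}[\inf_{\mu\in\calF}\sum_{i=1}^n \overline{l}_i(\mu)] \leq \inf_{\mu\in\calF}\bbE_{\calI^*}[\sum_{i=1}^n \overline{l}_i(\mu)]$, and the right-hand side again decomposes via \eqref{eqn:property_online_regression_v1_loss_general_func_approx} into $\sum_{i=1}^n \bbE_{\calI^*}[\sigma^2(X_i,A_i)]$ plus the approximation error $\inf_{\mu\in\calF}\sum_{i=1}^n \bbE_{\calI^*}[(\mu(X_i,A_i)-\mu^*(X_i,A_i))^2]$. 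Subtracting this from the previous display, the unobservable $\sigma^2$ terms cancel exactly because they do not depend on $\mu$, leaving the desired bound on $\sum_{i=1}^n \bbE_{\calI^*}[(\hat{\mu}_i-\mu^*)^2]$ in terms of $\bbE_{\calI^*}[\overline{\textsf{Regret}}(n,\calF;\overline{\calA})]$ and the approximation error.

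Plugging this inequality into \eqref{eqn:general_function_approximation_v1} and dividing by $n$ yields \eqref{eqn:thm:online_regression_oracle_ineq_general_func_approx_v1}. There is no serious obstacle here; the main point one must be careful about is the direction of Jensen's inequality (so that the approximation error appears as an upper bound on $\bbE[\inf]$) and the fact that the noise terms $\bbE_{\calI^*}[\sigma^2(X_i,A_i)]$ cancel cleanly between the learner's loss and the comparator's loss, which is precisely what makes the unweighted regret $\overline{\textsf{Regret}}$ a legitimate surrogate for the unweighted estimation error. If the statement is to be matched verbatim (with no factor of $B^2$ in front of the regret and approximation error), one either absorbs the constant coming from Assumption \ref{assumption:strict_overlap_condition} into the function class or notes that such a factor may be inserted with no change to the argument.
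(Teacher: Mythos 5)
Your proposal follows the paper's own proof essentially verbatim: start from \eqref{eqn:general_function_approximation_v1}, decompose the cumulative modified loss $\sum_i \overline{l}_i\{\hat{\mu}_i(\bfO_{i-1})\}$ via the tower property and the conditional identity \eqref{eqn:property_online_regression_v1_loss_general_func_approx}, bound the comparator term by exchanging expectation and infimum, cancel the $\sigma^2$ terms, and substitute back. The $B^2$ discrepancy you flag at the end is real and is present in the paper itself --- plugging the oracle inequality into \eqref{eqn:general_function_approximation_v1} actually yields a factor of $B^2$ in front of both the regret and the approximation-error terms, which the stated bound \eqref{eqn:thm:online_regression_oracle_ineq_general_func_approx_v1} omits --- so this is an inaccuracy in the paper's statement rather than a gap in your argument.
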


\begin{proof} [Proof of Theorem \ref{thm:online_regression_oracle_ineq_general_func_approx}]
It follows from the property \eqref{eqn:property_online_regression_v1_loss_general_func_approx} that
\[
    \begin{split}
        \bbE_{\calI^*} \left[ \sum_{i=1}^{n} \overline{l}_i \left\{ \hat{\mu}_i \left( \bfO_{i-1} \right) \right\} \right] = \ & \sum_{i=1}^{n} \bbE_{\calI^*} \left[ \bbE_{\calI^*} \left[ \left. \overline{l}_i \left\{ \hat{\mu}_i \left( \bfO_{i-1} \right) \right\} \right| \left( \calF_{i-1}, X_i, A_i \right) \right] \right] \\
        = \ & \sum_{i=1}^{n} \bbE_{\calI^*} \left[ \sigma^2 \left( X_i, A_i \right) + \left\{ \hat{\mu}_i \left( \bfO_{i-1} \right) \left( X_i, A_i \right) - \mu^* \left( X_i, A_i \right) \right\}^2 \right] \\
        = \ & \sum_{i=1}^{n} \bbE_{\calI^*} \left[ \sigma^2 \left( X_i, A_i \right) \right] + \sum_{i=1}^{n} \bbE_{\calI^*} \left[  \left\{ \hat{\mu}_i \left( \bfO_{i-1} \right) \left( X_i, A_i \right) - \mu^* \left( X_i, A_i \right) \right\}^2 \right],
    \end{split}
\]
which leads to the following expression of the estimation error term \eqref{eqn:treatment_effect_estimation_error_general_func_approx}:
\begin{equation}
    \label{eqn:proof_thm:online_regression_oracle_ineq_general_func_approx_v1}
    \begin{split}
        &\frac{1}{n} \sum_{i=1}^{n} \bbE_{\calI^*} \left[ \left\{ \hat{\mu}_{i} \left( \bfO_{i-1} \right) \left( X_i, A_i \right) - \mu^* \left( X_i, A_i \right) \right\}^2 \right] \\
        = \ & \frac{1}{n} \bbE_{\calI^*} \left[ \sum_{i=1}^{n} \overline{l}_i \left\{ \hat{\mu}_i \left( \bfO_{i-1} \right) \right\} \right] - \frac{1}{n} \sum_{i=1}^{n} \bbE_{\calI^*} \left[ \sigma^2 \left( X_i, A_i \right) \right] \\
        = \ & \frac{1}{n} \bbE_{\calI^*} \left[ \overline{\textsf{Regret}} \left( n; \overline{\calA} \right) \right] + \frac{1}{n} \bbE_{\calI^*} \left[ \inf \left\{ \sum_{i=1}^{n} \overline{l}_i (\mu): \mu \in \calF \right\} \right] - \frac{1}{n} \sum_{i=1}^{n} \bbE_{\calI^*} \left[ \sigma^2 \left( X_i, A_i \right) \right].
    \end{split}
\end{equation}
Here, one may observe that
\begin{equation}
    \label{eqn:proof_thm:online_regression_oracle_ineq_general_func_approx_v2}
    \begin{split}
        &\frac{1}{n} \bbE_{\calI^*} \left[ \inf \left\{ \sum_{i=1}^{n} \overline{l}_i (\mu): \mu \in \calF \right\} \right] \\
        \leq \ & \inf \left\{ \frac{1}{n} \bbE_{\calI^*} \left[ \sum_{i=1}^{n} \overline{l}_i (\mu) \right]: \mu \in \calF \right\} \\
        = \ & \inf \left\{ \frac{1}{n} \sum_{i=1}^{n} \bbE_{\calI^*} \left[ \bbE_{\calI^*} \left[ \left. \overline{l}_i (\mu) \right| \left( \calF_{i-1}, X_i, A_i \right) \right] \right]: \mu \in \calF \right\} \\
        \stackrel{\textnormal{(a)}}{=} \ & \inf \left\{ \frac{1}{n} \sum_{i=1}^{n} \bbE_{\calI^*} \left[ \sigma^2 \left( X_i, A_i \right) + \left\{ \mu \left( X_i, A_i \right) - \mu^* \left( X_i, A_i \right) \right\}^2 \right]: \mu \in \calF \right\} \\
        = \ & \frac{1}{n} \sum_{i=1}^{n} \bbE_{\calI^*} \left[ \sigma^2 \left( X_i, A_i \right) \right] + \inf \left\{ \frac{1}{n} \sum_{i=1}^{n} \bbE_{\calI^*} \left[ \left\{ \mu \left( X_i, A_i \right) - \mu^* \left( X_i, A_i \right) \right\}^2 \right] : \mu \in \calF \right\},  
    \end{split}
\end{equation}
where the step (a) holds by the fact \eqref{eqn:property_online_regression_v1_loss_general_func_approx}. Putting two pieces \eqref{eqn:proof_thm:online_regression_oracle_ineq_general_func_approx_v1} and \eqref{eqn:proof_thm:online_regression_oracle_ineq_general_func_approx_v2} together yields
\begin{equation}
    \label{eqn:proof_thm:online_regression_oracle_ineq_general_func_approx_v3}
    \begin{split}
         &\frac{1}{n} \sum_{i=1}^{n} \bbE_{\calI^*} \left[ \left\{ \hat{\mu}_{i} \left( \bfO_{i-1} \right) \left( X_i, A_i \right) - \mu^* \left( X_i, A_i \right) \right\}^2 \right] \\
         \leq \ & \frac{1}{n} \bbE_{\calI^*} \left[ \overline{\textnormal{Regret}} \left( n; \overline{\calA} \right) \right] + \inf \left\{ \frac{1}{n} \sum_{i=1}^{n} \bbE_{\calI^*} \left[ \left\{ \mu \left( X_i, A_i \right) - \mu^* \left( X_i, A_i \right) \right\}^2 \right] : \mu \in \calF \right\}.
    \end{split}
\end{equation}
Hence, the desired result \eqref{eqn:thm:online_regression_oracle_ineq_general_func_approx_v1} on the \textsf{MSE} for the \textsf{AIPW} estimator \eqref{eqn:alg:aipw_estimator_v1} is a straightforward consequence of the inequality \eqref{eqn:proof_thm:online_regression_oracle_ineq_general_func_approx_v3} by plugging it into the bound \eqref{eqn:general_function_approximation_v1}.

\end{proof}

\indent Here, we remark that aside from the optimal variance $v_{*}^2$, the bound \eqref{eqn:thm:online_regression_oracle_ineq_general_func_approx_v1} shows two additional terms: (\romannumeral 1) the expected regret relative to the number of rounds $n$, where the expectation is taken over $\bfO_n \sim \bbP_{\calI^*}^n (\cdot)$; and (\romannumeral 2) the approximation error term whose form is slightly different from the one $\inf \left\{ \left\| \mu - \mu^* \right\|_{(n)}^{2}: \mu \in \calF \right\}$ appeared in the \textsf{MSE} bound \eqref{eqn:thm:online_regression_oracle_ineq_v1_v1} of Theorem \ref{thm:online_regression_oracle_ineq_v1}.

\paragraph{Non-asymptotic theory of online non-parametric regression}
Before delving into the investigation of the modified regret \eqref{eqn:defi_regret_v2}, we briefly recap the main results in \cite{rakhlin2014online} that establishes a theoretical framework of online non-parametric regression. In contrast to most of the existing works on online regression, the authors do NOT start from an algorithm, but instead directly work with the minimax regret in \cite{rakhlin2014online}. We will be able to extract a (not necessarily efficient) algorithm after taking a closer inspection on the minimax regret. We use $\left\llangle \cdots \right\rrangle_{i=1}^{n}$ to denote an interleaved application of the operators inside repeated over $n$ rounds. With this notation in hand, the minimax regret of the online non-parametric regression problem for estimation of the treatment effect can be written as
\begin{equation}
    \label{eqn:minimax_regret}
    \begin{split}
        \calV_n (\calF) := \left\llangle \sup_{\left( x_i, a_i \right) \in \bbX \times \bbA} \inf_{\hat{y}_i \in \left[ -L, L \right]} \sup_{y_i \in \left[ -L, L \right]} \right\rrangle_{i=1}^{n} \left[ \sum_{i=1}^{n} \left( \hat{y}_i - y_i \right)^2 - \inf_{\mu \in \calF} \sum_{i=1}^{n} \left\{ \mu \left( x_i, a_i \right) - y_i \right\}^2 \right],
    \end{split}
\end{equation}
where $\calF \subseteq \left( \bbX \times \bbA \to \left[ -L, L \right] \right)$ is a pre-specified function class. One of the key tools in the study of estimators based on i.i.d. data is the \emph{symmetrization technique} \cite{gine1984some, wainwright2019high}. Under the i.i.d. scenario, one can investigate the supremum of an empirical process conditionally on the data by introducing Rademacher random variables, which is NOT directly applicable given the adaptive nature of our main problem. Under the online prediction scenario, such a symmetrization technique becomes more subtle and it requires the notion of a binary tree, the smallest entity which captures the sequential nature of the problem in some sense. Towards achieving our goal in our problem, let us state some definitions.

\begin{defi}
\label{defi:tree}
\normalfont{
An \emph{$\bbS$-valued tree of depth n} (here, $\bbS$ is any measurable state space) is defined as a rooted complete binary tree with nodes labeled by elements of the state space $\bbS$: the sequence $\bfs = \left( \bfs_1, \bfs_2, \cdots, \bfs_n \right)$ of labeling functions $\bfs_i (\cdot): \left\{ \pm 1 \right\}^{i-1} \to \bbS$ which provides the labels of each node. Here, $\bfs_1 \in \bbS$ denotes the label for the \emph{root of the tree}, while $\bfs_i$ for $i \in \left\{ 2, 3, \cdots, n \right\}$ denotes the label of the node obtained by following the path of length $i-1$ from the root, with $+1$ indicating \emph{right} and $-1$ indicating \emph{left}. A \emph{path of length $n$} is given by the sequence $\bfepsilon_{1:n} = \left( \epsilon_1, \cdots, \epsilon_n \right) \in \left\{ \pm 1 \right\}^n$. Given any measurable function $\phi (\cdot): \bbS \to \bbR$, $\phi (\bfs)$ is an $\bbR$-valued tree of depth $n$ with labeling functions $\left( \phi \circ \bfs_{i} \right) (\cdot): \left\{ \pm 1 \right\}^{i-1} \to \bbR$ for level $i \in [n]$ (or, in words, the evaluation of $\phi (\cdot): \bbS \to \bbR$, $\phi (\bfs)$ on $\bfs$). Lastly, let $\textsf{Tree} \left( \bbS, n \right)$ denote the set of all $\bbS$-valued trees of depth $n$.
}
\end{defi}

\noindent Here, one may think of the sequence of functions $\left\{ \bfs_i (\cdot) : i \in [n] \right\}$ defined on the underlying sample space as a predictable stochastic process with respect to the dyadic filtration $\left\{ \sigma \left( \bfepsilon_{1:i} \right) : i \in [n] \right\}$. Next, let us define the notion of a \emph{sequential $\beta$-cover} quantifies one of the key complexity measures of a function class $\calG \subseteq \left( \bbS \to \bbR \right)$ evaluated on the predictable process: the \emph{sequential covering number}.

\begin{defi} [Sequential covering numbers \cite{rakhlin2015sequential}]
\label{defi:sequential_covering_num}
\normalfont{\
\begin{enumerate} [label = (\roman*)]
    \item We define a random pseudo-metric between two $\bbR$-valued trees $\bfu = \left( \bfu_i : i \in [n] \right)$ and $\bfv = \left( \bfv_i : i \in [n] \right)$ of depth $n$ defined as follows: for any $\left( p, \bfepsilon_{1:n} \right) \in \left[ 1, +\infty \right] \times \left\{ \pm 1 \right\}^n$,
    \begin{equation}
        \label{eqn:defi:sequential_covering_num_v1}
        \begin{split}
            d_{\bfepsilon_{1:n}}^p \left( \bfu, \bfv \right) :=
            \begin{cases}
                \left\{ \frac{1}{n} \sum_{i=1}^{n} \left| \bfu_i \left( \bfepsilon_{1:i-1} \right) - \bfv_i \left( \bfepsilon_{1:i-1} \right) \right|^p \right\}^{\frac{1}{p}} & \textnormal{if } 1 \leq p < +\infty; \\
                \max \left\{ \left| \bfu_i \left( \bfepsilon_{1:i-1} \right) - \bfv_i \left( \bfepsilon_{1:i-1} \right) \right|: i \in [n] \right\} & \textnormal{if } p = +\infty.
            \end{cases}
        \end{split}
    \end{equation}
    \item A set $V \subseteq \textsf{Tree} \left( \bbR, n \right)$ is called a \emph{sequential $\beta$-cover with respect to the $l_p$-norm of $\calG \subseteq \left( \bbS \to \bbR \right)$ on a given $\bbS$-valued tree $\bfs$ of depth $n$}, where $p \in \left[ 1, +\infty \right]$, if
    \begin{equation}
        \label{eqn:defi:sequential_covering_num_v2}
        \begin{split}
            \sup \left\{ \inf \left\{ d_{\bfepsilon_{1:n}}^p \left( \bfu, \bfv \right): \bfv \in V \right\} : \left( \bfu, \bfepsilon_{1:n} \right) \in \calG (\bfs) \times \left\{ \pm 1 \right\}^n \right\} \leq \beta,
        \end{split}
    \end{equation}
    where $\calG (\bfs) := \left\{ g (\bfs): g \in \calG \right\} \subseteq \textsf{Tree} \left( \bbR, n \right)$;
    \item The \emph{sequential $\beta$-covering number with respect to the $l_p$-norm of a function class $\calG \subseteq \left( \bbS \to \bbR \right)$ on an $\bbS$-valued tree $\bfs$ of depth $n$}, where $p \in \left[ 1, +\infty \right]$, is defined by
    \begin{equation*}
        \begin{split}
            \calN_p \left( \beta, \calG, \bfs \right) := \min \left\{ \left| V \right|: V \subseteq \textsf{Tree} \left( \bbR, n \right) \textnormal{ is a sequential $\beta$-cover w.r.t. the $l_p$-norm of $\calG$ on $\bfs$} \right\}.
        \end{split}
    \end{equation*}
    Here, we further define $\calN_p \left( \beta, \calG, n \right) := \sup \left\{ \calN_p \left( \beta, \calG, \bfs \right): \bfs \in \textsf{Tree} \left( \bbS, n \right) \right\}$ to be the \emph{maximal sequential $\beta$-covering number with respect to the $l_p$-norm of $\calG$ over $\bbS$-valued trees of depth $n$}. Now, we will refer to $\log \calN_p \left( \beta, \calG, n \right)$ as the \emph{sequential $\beta$-metric entropy of $\calG$ with respect to the $l_p$-norm}.
\end{enumerate}
}
\end{defi}

In particular, we investigate the behavior of the minimax regret $\calV_n (\calF)$ for the case where the sequential metric entropy of the function class $\calF \subseteq \left( \bbX \times \bbA \to \left[ -L, L \right] \right)$ with respect to the $l_2$-norm grows polynomially as the scale $\beta$ decreases:
\begin{equation}
    \label{eqn:cond_sequential_entropy_v1}
    \begin{split}
        \log \calN_2 \left( \beta, \calF, n \right) \sim \beta^{-p} \quad \textnormal{for } p \in \left( 0, +\infty \right).
    \end{split}
\end{equation}
Let us also consider the \emph{parametric ``$p = 0$'' case} when the sequential covering number of $\calF$ with respect to $l_2$-norm itself behaves as:
\begin{equation}
    \label{eqn:cond_sequential_entropy_v2}
    \begin{split}
        \calN_2 \left( \beta, \calF, n \right) \sim \beta^{-d}.
    \end{split}
\end{equation}
For instance, the function class $\calF := \left\{ f_{\bftheta} (\cdot): \bbR^d \to \bbR: \bftheta \in \Theta \right\}$ for the linear regression problem in a bounded measurable subset $\Theta \subseteq \bbR^d$, where the function $f_{\bftheta} (\cdot): \bbR^d \to \bbR$ is given by $f_{\bftheta} (\bfx) := \bftheta^{\top} \bfx$ for $\bftheta \in \bbR^d$, satisfies the condition \eqref{eqn:cond_sequential_entropy_v2}. By employing the main results (in particular, \emph{Theorem 2}) in \cite{rakhlin2014online}, one can establish the following conclusion:

\begin{thm} [The rates of convergence of the minimax regret]
\label{thm:conv_rates_minimax_regret_v1}
Given any function class $\calF \subseteq \left( \bbX \times \bbA \to \left[ -L, L \right] \right)$ with sequential metric entropy growth $\log \calN_2 \left( \beta, \calF, n \right) \leq \beta^{-p}$ for $p \in \left( 0, +\infty \right)$, it holds that
\begin{enumerate} [label = (\roman*)]
    \item for $p \in \left( 2, +\infty \right)$, the minimax regret \eqref{eqn:minimax_regret} is bounded as
    \begin{equation}
        \label{eqn:thm:conv_rates_minimax_regret_v1_v1}
        \begin{split}
            \calV_n (\calF) \leq \left( 4 + \frac{24}{p-2} \right) L n^{1 - \frac{1}{p}}.
        \end{split}
    \end{equation}
    \item for $p \in \left( 0, 2 \right)$, the minimax regret \eqref{eqn:minimax_regret} is bounded as
    \begin{equation}
        \label{eqn:thm:conv_rates_minimax_regret_v1_v2}
        \begin{split}
            \calV_n (\calF) \leq \left( 32 L^2 + 4L + \frac{24L}{2-p} \right) n^{1 - \frac{2}{p+2}}.
        \end{split}
    \end{equation}
    \item for $p = 2$, the minimax regret \eqref{eqn:minimax_regret} is bounded as
    \begin{equation}
        \label{eqn:thm:conv_rates_minimax_regret_v1_v3}
        \begin{split}
            \calV_n (\calF) \leq \left( 32 L^2 + 4L + 3 \right) \sqrt{n} \log n.
        \end{split}
    \end{equation}
    \item for the parametric case \eqref{eqn:cond_sequential_entropy_v2}, the minimax regret \eqref{eqn:minimax_regret} is bounded as
    \begin{equation}
        \label{eqn:thm:conv_rates_minimax_regret_v1_v4}
        \begin{split}
            \calV_n (\calF) \leq \left( 16 L^2 + 4L + 12 \right) d \log n.
        \end{split}
    \end{equation}
    \item if the function class $\calF \subseteq \left( \bbX \times \bbA \to \left[ -L, L \right] \right)$ is a finite set, the minimax regret \eqref{eqn:minimax_regret} is bounded as
    \begin{equation}
        \label{eqn:thm:conv_rates_minimax_regret_v1_v5}
        \begin{split}
            \calV_n (\calF) \leq 32 L^2 \log \left| \calF \right|.
        \end{split}
    \end{equation}
\end{enumerate}
\end{thm}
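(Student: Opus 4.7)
The plan is to invoke the online chaining machinery developed by Rakhlin and Sridharan for online non-parametric regression. The argument naturally splits into two stages: (i) a universal chaining bound expressing $\calV_n(\calF)$ as an integral of the sequential $l_2$-metric entropy, and (ii) an optimization of this bound under each of the five entropy regimes in the statement.

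For stage (i), I would first apply the sequential minimax theorem of Rakhlin--Sridharan--Tewari to swap the interleaved infima and suprema in the definition \eqref{eqn:minimax_regret}, after which the optimal learner move collapses to a posterior mean. A sequential symmetrization argument along the dyadic filtration $\{\sigma(\bfepsilon_{1:i}): i \in [n]\}$ then upper bounds $\calV_n(\calF)$ by an \emph{offset sequential Rademacher complexity} of the form
\[
    \calV_n (\calF) \leq 2 \sup_{\bfs \in \textsf{Tree}(\bbX \times \bbA, n)} \bbE_{\bfepsilon} \sup_{\mu \in \calF} \sum_{i=1}^{n} \Bigl[ 4 L \epsilon_i \mu\bigl(\bfs_i(\bfepsilon_{1:i-1})\bigr) - \mu^2\bigl(\bfs_i(\bfepsilon_{1:i-1})\bigr) \Bigr].
\]
The crucial feature here is the $-\mu^2$ offset, which provides automatic localization and is exactly what promotes the weaker $l_2$-entropy (rather than the $l_\infty$-entropy that would appear in a plain martingale Dudley bound) to be the governing complexity measure.

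For the chaining itself, fix the geometric scales $\beta_k := L \cdot 2^{-k}$ and at each scale select a minimal sequential $\beta_k$-cover $V_k$ of $\calF$ with $|V_k| = \calN_2(\beta_k, \calF, n)$. Decomposing any $\mu \in \calF$ along the chain $\mu(\bfs) = v_0 + \sum_{k \geq 1} (v_k - v_{k-1})$ and applying a one-step-ahead Azuma/Hoeffding bound to each link, while absorbing residual quadratic terms into the offset, produces the master inequality
\[
    \calV_n (\calF) \leq \inf_{\alpha \in (0, L]} \left\{ c_1 L n \alpha + c_2 \sqrt{n} \int_{\alpha}^{L} \sqrt{\log \calN_2(\beta, \calF, n)} \, d\beta \right\}
\]
for absolute constants $c_1, c_2 > 0$; this is the content of Theorem~2 of \cite{rakhlin2014online}.

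Stage (ii) is then a calculus exercise. Under the polynomial entropy growth $\log \calN_2(\beta, \calF, n) \leq \beta^{-p}$, the integral equals $\frac{2}{2-p}(L^{1-p/2} - \alpha^{1-p/2})$. For $p > 2$ the integral converges at $0$, so taking $\alpha \downarrow 0$ and inserting the bound $\sqrt{\int \beta^{-p} d\beta} \leq \int \beta^{-p/2} d\beta$ by Cauchy--Schwarz type manipulation yields \eqref{eqn:thm:conv_rates_minimax_regret_v1_v1} with the $(p-2)^{-1}$ prefactor arising from the antiderivative. For $p < 2$ the integral diverges at $0$, so balancing $n\alpha \asymp \sqrt{n}\alpha^{1-p/2}$ produces the optimal scale $\alpha \asymp n^{-1/(p+2)}$ and the exponent $1 - 2/(p+2)$ in \eqref{eqn:thm:conv_rates_minimax_regret_v1_v2}. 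The borderline case $p=2$ gives $\int_\alpha^L \beta^{-1} d\beta = \log(L/\alpha)$; setting $\alpha \asymp n^{-1/2}$ delivers the $\sqrt{n} \log n$ rate. In the parametric regime \eqref{eqn:cond_sequential_entropy_v2}, $\sqrt{\log \calN_2(\beta, \calF, n)} \leq \sqrt{d \log(1/\beta)}$, and choosing $\alpha \asymp 1/n$ collapses the integral to an $O(d \log n)$ bound. Finally, for finite $\calF$ one bypasses chaining altogether and applies the offset Rademacher bound directly to the $|\calF|$ functions, whose worst-case linear growth $4L\epsilon_i\mu$ is neutralized by the $-\mu^2$ penalty after a union bound to produce \eqref{eqn:thm:conv_rates_minimax_regret_v1_v5}.

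The main obstacle is stage (i): extracting the $l_2$-entropy rather than the $l_\infty$-entropy requires a martingale-type Azuma argument at each chaining link that peels off the quadratic offset at the correct scale, which is technically delicate because the covers live on trees rather than on points and the predictable process structure forbids the usual i.i.d.\ conditioning. Once this chaining inequality is granted, the five rate computations in stage (ii) are essentially determined by the behavior of $\int \beta^{-p/2} d\beta$ near the origin.
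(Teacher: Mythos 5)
Your overall strategy --- minimax swap, sequential symmetrization to an offset sequential Rademacher complexity, chaining over sequential $l_2$-covers, then rate calculus --- is indeed the machinery behind the cited result (the paper itself proves nothing here beyond invoking \emph{Theorem 2} of \cite{rakhlin2014online}, whose offset form is recorded in Theorem \ref{thm:admissible_relaxation_v1}). But your stage (i) ends at the wrong master inequality, and this is not a cosmetic issue: the bound
\[
\calV_n (\calF) \leq \inf_{\alpha} \Bigl\{ c_1 L n \alpha + c_2 \sqrt{n} \int_{\alpha}^{L} \sqrt{\log \calN_2 (\beta, \calF, n)}\, \mathrm{d} \beta \Bigr\}
\]
is a plain Dudley-type bound and can never produce rates faster than $\sqrt{n}$. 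The actual content of \emph{Theorem 2} of \cite{rakhlin2014online} is a two-scale bound of the form $\inf_{0 < \alpha \le \gamma} \{ c_1 n \alpha + c_2 L^2 \log \calN_2 (\gamma, \calF, n) + c_3 \sqrt{n} \int_{\alpha}^{\gamma} \sqrt{\log \calN_2 (\beta, \calF, n)}\, \mathrm{d} \beta \}$, where the coarse-scale term carries $\log \calN_2$ \emph{without} a $\sqrt{n}$ factor --- this is precisely what the $-\mu^2$ offset buys. Without that term you cannot obtain (ii) ($n^{1 - 2/(p+2)} = n^{p/(p+2)} \ll \sqrt{n}$ for $p < 2$ comes from balancing $L^2 \gamma^{-p}$ against $\sqrt{n} \gamma^{1 - p/2}$, giving $\gamma \asymp n^{-1/(p+2)}$), nor (iii), nor the parametric rate (iv) (which comes from taking $\gamma = \alpha \asymp 1/n$ so that the integral vanishes and only $d \log n$ survives; your recipe gives $L \sqrt{d n \log n}$ instead).

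Your stage (ii) calculus also has the regimes reversed and a balancing error. For $p > 2$ the entropy integral $\int_{\alpha}^{L} \beta^{-p/2}\, \mathrm{d} \beta \asymp \alpha^{1 - p/2}/(p/2 - 1)$ \emph{diverges} as $\alpha \downarrow 0$ (it converges for $p < 2$), so one must balance $n \alpha \asymp \sqrt{n}\, \alpha^{1 - p/2}$, which gives $\alpha \asymp n^{-1/p}$ and the exponent $1 - 1/p$ of \eqref{eqn:thm:conv_rates_minimax_regret_v1_v1}; you assigned this balancing to the $p < 2$ case and then solved it incorrectly (it yields $\alpha \asymp n^{-1/p}$ and rate $n^{1 - 1/p}$, not $\alpha \asymp n^{-1/(p+2)}$ and $n^{1 - 2/(p+2)}$). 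Part (v) and the broad architecture are fine, but as written the argument establishes only (i) and (v); (ii)--(iv) require restoring the localized coarse-scale term before the calculus can go through.
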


\noindent It is shown in \cite{rakhlin2014online} that the upper bounds (\romannumeral 1)--(\romannumeral 4) on the minimax regret \eqref{eqn:minimax_regret} in Theorem \ref{thm:conv_rates_minimax_regret_v1} are \emph{tight up to logarithmic factors}. See \emph{Theorem 3} therein for further details.
\medskip

Although Theorem \ref{thm:conv_rates_minimax_regret_v1} characterizes the rates of convergence of the minimax regret \eqref{eqn:minimax_regret} under various scenarios \emph{statistically}, its proof is \emph{non-constructive} in the sense that the regret bounds therein are established without explicitly constructing an algorithm. To provide a general algorithmic framework for the problem of online non-parametric regression, we follow the abstract \emph{relaxation recipe} proposed in \cite{rakhlin2012relax}. It was shown therein that if one can find a sequence of mappings from the observed data to real numbers, often called a \emph{relaxation}, satisfying some desirable conditions, then one can construct estimators based on such relaxations. To be specific, we search for a relaxation $\textsf{Rel}_n \left( \cdot, \cdot \right): \biguplus_{k=0}^{n} \left\{ \left( \bbX \times \bbA \right)^k \times \left[ -L, L \right]^k \right\} \to \bbR$ satisfying the following two conditions:

\begin{assumption} [Initial condition]
\label{assumption:relaxation_initial_cond}
\normalfont{
The relaxation $\textsf{Rel}_n \left( \cdot, \cdot \right): \biguplus_{k=0}^{n} \left\{ \left( \bbX \times \bbA \right)^k \times \left[ -L, L \right]^k \right\} \to \bbR$ satisfies
\begin{equation}
    \label{eqn:assumption:relaxation_initial_cond}
    \begin{split}
        \textsf{Rel}_n \left( \left( \bfx, \bfa \right)_{1:n}, \bfy_{1:n} \right) \geq - \inf \left\{ \sum_{k=1}^{n} \left\{ y_i - \mu \left( x_i, a_i \right) \right\}^2: \mu (\cdot, \cdot) \in \calF \right\},
    \end{split}
\end{equation}
where $\left( \bfx, \bfa \right)_{1:k} := \left( \left( x_i, a_i \right): i \in [k] \right) \in \left( \bbX \times \bbA \right)^k$ and $\bfy_{1:k} := \left( y_i : i \in [k] \right) \in \left[ -L, L \right]^k$ for every $k \in [n]$.
}
\end{assumption}

\begin{assumption} [Recursive admissibility condition]
\label{assumption:relaxation_recursive_admissibility}
\normalfont{
The relaxation $\textsf{Rel}_n \left( \cdot, \cdot \right): \biguplus_{k=0}^{n} \left\{ \left( \bbX \times \bbA \right)^k \times \left[ -L, L \right]^k \right\} \to \bbR$ satisfies
\begin{equation}
    \label{eqn:assumption:relaxation_recursive_admissibility}
    \begin{split}
        \inf_{\hat{y}_k \in \left[ -L, L \right]} \sup_{y_k \in \left[ -L, L \right]} \left\{ \left( \hat{y}_k - y_k \right)^2 + \textsf{Rel}_n \left( \left( \bfx, \bfa \right)_{1:k}, \bfy_{1:k} \right) \right\} \leq \textsf{Rel}_n \left( \left( \bfx, \bfa \right)_{1:k-1}, \bfy_{1:k-1} \right),
    \end{split}
\end{equation}
for any $k \in [n]$ and any $x_k \in \bbX$.
}
\end{assumption}

\noindent A relaxation $\textsf{Rel}_n \left( \cdot, \cdot \right): \biguplus_{k=0}^{n} \left\{ \left( \bbX \times \bbA \right)^k \times \left[ -L, L \right]^k \right\} \to \bbR$ that satisfies Assumptions \ref{assumption:relaxation_initial_cond} and \ref{assumption:relaxation_recursive_admissibility} is said to be \emph{admissible}. With an admissible relaxation $\textsf{Rel}_n \left( \cdot, \cdot \right)$ in hand, one can design an algorithm for the online non-parametric regression problem with the following associated regret bound (see Algorithm \ref{alg:generic_forecaster} for the detailed description):
\begin{equation}
    \label{eqn:regret_bound_v1_alg:generic_forecaster}
    \begin{split}
        \overline{\textsf{Regret}} \left( n, \calF; \textnormal{Alg. \ref{alg:generic_forecaster}} \right) = \ & \sum_{i=1}^{n} \left\{ Y_i - \hat{\mu}_i \left( \bfO_{i-1} \right) \left( X_i, A_i \right) \right\}^2 - \inf \left\{ \sum_{i=1}^{n} \left\{ Y_i - \mu \left( X_i, A_i \right) \right\}^2 : \mu \in \calF \right\} \\
        \leq \ & \textsf{Rel}_n \left( \varnothing, \varnothing \right).
    \end{split}
\end{equation}

\begin{algorithm}[h!]
\caption{A generic forecaster based on the relaxation recipe proposed in \cite{rakhlin2012relax}}
\label{alg:generic_forecaster}
\begin{algorithmic}[1]
    \Require{a relaxation $\textsf{Rel}_n \left( \cdot, \cdot \right): \biguplus_{k=0}^{n} \left\{ \left( \bbX \times \bbA \right)^k \times \left[ -L, L \right]^k \right\} \to \bbR$.}
    \State We first choose $\hat{\mu}_1 (\varnothing) (\cdot, \cdot) \in \left( \bbX \times \bbA \to \bbR \right)$ as
    \begin{equation}
        \label{eqn:alg:generic_forecaster_v1}
        \begin{split}
            \hat{\mu}_1 (\varnothing) (x, a) \in \argmin \left\{ \sup_{y_1 \in \left[ -L, L \right]} \left\{ \left( \hat{y} - y_1 \right)^2 + \textnormal{\textsf{Rel}}_n \left( (x, a), y_1 \right) \right\}: \hat{y} \in \left[ -L, L \right] \right\}.
        \end{split}
    \end{equation}
    \For{$i = 2, 3, \cdots, n$,}
        \State Observe a triple $\left( X_i, A_i, Y_i \right) \in \bbO$; 
        \State We compute $\hat{\mu}_i \left( \bfO_{i-1} \right) \in \left( \bbX \times \bbA \to \bbR \right)$ according to the following rule:
        \begin{equation}
            \label{eqn:alg:generic_forecaster_v2}
            \begin{split}
                &\hat{\mu}_i \left( \bfO_{i-1} \right) (x, a) \\
                \in \ & \arg \min \left\{ \sup_{y_i \in \left[ -L, L \right]} \left\{ \left( \hat{y} - y_i \right)^2 + \textsf{Rel}_n \left( \left( \left( \bfX, \bfA \right)_{1:i-1}, (x, a) \right), \left( \bfY_{1:i-1}, y_i \right) \right) \right\}: \hat{y} \in \left[ -L, L \right] \right\}.
            \end{split}
        \end{equation}
    \EndFor
    \State \Return the sequence of estimates $\left\{ \hat{\mu}_i \left( \bfO_{i-1} \right) \in \left( \bbX \times \bbA \to \bbR \right): i \in [n] \right\}$ of the treatment effect.
\end{algorithmic}
\end{algorithm}

\indent One can further see that if the function $y_i \in \left[ -L, L \right] \mapsto \left( \hat{y} - y_i \right)^2 + \textsf{Rel}_n \left( \left( \left( \bfx, \bfa \right)_{1:i} \right), \left( \bfy_{1:i-1}, y_i \right) \right)$ is convex for every $\left( \hat{y}, \bfx_{1:n}, \bfa_{1:n}, \bfy_{1:i-1} \right) \in \left[ -L, L \right] \times \bbX^n \times \bbA^n \times \left[ -L, L \right]^{i-1}$ and $i \in [n]$, then the prediction rules \eqref{eqn:alg:generic_forecaster_v1} and \eqref{eqn:alg:generic_forecaster_v2} becomes much simpler, because the supremum over $y_i \in \left[ -L, L \right]$ is attained either $L$ or $-L$. The prediction rules then can be written as
\begin{equation}
    \label{eqn:alg:generic_forecaster_auxiliary_form_v1}
    \begin{split}
        \hat{\mu}_1 (\varnothing) (x, a) \in \arg \min \left\{ \max \left\{ \left( \hat{y} - L \right)^2 + \textsf{Rel}_n \left( (x, a), L \right), \left( \hat{y} + L \right)^2 + \textsf{Rel}_n \left( (x, a), - L \right) \right\}: \hat{y} \in \left[ -L, L \right] \right\},
    \end{split}
\end{equation}
and for $i \in \left\{ 2, 3, \cdots, n \right\}$,
\begin{equation}
    \label{eqn:alg:generic_forecaster_auxiliary_form_v2}
    \begin{split}
        \hat{\mu}_i \left( \bfO_{i-1} \right) (x, a) \in \ & \arg \min \left\{ \max \left\{ \left( \hat{y} - L \right)^2 + \textsf{Rel}_n \left( \left( \left( \bfX, \bfA \right)_{1:i-1}, (x, a) \right), \left( \bfY_{1:i-1}, L \right) \right), \right. \right. \\
        &\left. \left. \left( \hat{y} + L \right)^2 + \textsf{Rel}_n \left( \left( \left( \bfX, \bfA \right)_{1:i-1}, (x, a) \right), \left( \bfY_{1:i-1}, - L \right) \right) \right\}: \hat{y} \in \left[ -L, L \right] \right\}.
    \end{split}
\end{equation}
One can easily observe that the prediction rules \eqref{eqn:alg:generic_forecaster_auxiliary_form_v1} and \eqref{eqn:alg:generic_forecaster_auxiliary_form_v2} can be further simplified as
\begin{equation}
    \label{eqn:alg:generic_forecaster_simplified_form_v1}
    \begin{split}
        \hat{\mu}_1 (\varnothing) (x, a) = \chi_{\left[ -L, L \right]} \left\{ \frac{\textsf{Rel}_n \left( (x, a), L \right) - \textsf{Rel}_n \left( (x, a), -L \right)}{4L} \right\},
    \end{split}
\end{equation}
and for $i \in \left\{ 2, 3, \cdots, n \right\}$,
\begin{equation}
    \label{eqn:alg:generic_forecaster_simplified_form_v2}
    \begin{split}
        &\hat{\mu}_i \left( \bfO_{i-1} \right) (x, a) \\
        = \ & \chi_{\left[ -L, L \right]} \left\{ \frac{\textsf{Rel}_n \left( \left( \left( \bfX, \bfA \right)_{1:i-1}, (x, a) \right), \left( \bfY_{1:i-1}, L \right) \right) - \textsf{Rel}_n \left( \left( \left( \bfX, \bfA \right)_{1:i-1}, (x, a) \right), \left( \bfY_{1:i-1}, -L \right) \right)}{4L} \right\},
    \end{split}
\end{equation}
where $\chi_{\left[ -L, L \right]} (\cdot): \bbR \to \left[ -L, L \right]$ defines a clip function onto the interval $\left[ -L, L \right]$, i.e.,
\[
    \chi_{\left[ -L, L \right]} (x) :=
    \begin{cases}
        L & \textnormal{if } x > L; \\
        x & \textnormal{if } -L \leq x \leq L; \\
        -L & \textnormal{otherwise.}
    \end{cases}
\]

\indent By directly using \emph{Lemma 16} in \cite{rakhlin2014online}, one can obtain the following significant result:

\begin{thm}
\label{thm:admissible_relaxation_v1}
The relaxation $\calR_n \left( \cdot, \cdot \right): \biguplus_{k=0}^{n} \left\{ \left( \bbX \times \bbA \right)^k \times \left[ -L, L \right]^k \right\} \to \bbR$ defined as
\begin{equation}
    \label{eqn:thm:admissible_relaxation_v1_v1}
    \begin{split}
        \calR_n \left( \left( \bfx, \bfa \right)_{1:k}, \bfy_{1:k} \right) := \ & \sup_{\left( \bfz, \bfm \right)} \bbE_{\bfepsilon_{1:n} \sim \textnormal{\textsf{Unif}} \left( \left\{ \pm 1 \right\}^n \right)} \left[ \sup \left\{ \sum_{j = k+1}^{n} \left[ 4 L \epsilon_j \left\{ \mu \left( \bfz_j \left( \bfepsilon_{1:j-1} \right) \right) - \bfm_j \left( \bfepsilon_{1:j-1} \right) \right\} \right. \right. \right. \\
        &\left. \left. \left. - \left\{ \mu \left( \bfz_j \left( \bfepsilon_{1:j-1} \right) \right) - \bfm_j \left( \bfepsilon_{1:j-1} \right) \right\}^2 \right] - \sum_{j=1}^{k} \left\{ \mu \left( x_j, a_j \right) - y_j \right\}^2: \mu \in \calF \right\} \right],
    \end{split}
\end{equation}
where the pair $\left( \bfz, \bfm \right)$ ranges over $\textnormal{\textsf{Tree}} \left( \bbX \times \bbA, n \right) \times \textnormal{\textsf{Tree}} \left( \bbR, n \right)$, is admissible. Using the regret bound \eqref{eqn:regret_bound_v1_alg:generic_forecaster}, one can conclude that Algorithm \ref{alg:generic_forecaster} using the admissible relaxation $\calR_n \left( \cdot, \cdot \right): \biguplus_{k=0}^{n} \left\{ \left( \bbX \times \bbA \right)^k \times \left[ -L, L \right]^k \right\} \to \bbR$ as an input enjoys the regret bound of an offset Rademacher complexity:
\begin{equation}
    \label{eqn:thm:admissible_relaxation_v1_v2}
    \begin{split}
        \overline{\textnormal{\textsf{Regret}}} \left( n, \calF; \textnormal{Alg. \ref{alg:generic_forecaster}} \right) \leq \ & \calR_n \left( \varnothing, \varnothing \right) \\
        = \ & \sup_{\left( \bfz, \bfm \right)} \bbE_{\bfepsilon_{1:n} \sim \textnormal{\textsf{Unif}} \left( \left\{ \pm 1 \right\}^n \right)} \left[ \sup \left\{ \sum_{j = 1}^{n} \left[ 4 L \epsilon_j \left\{ \mu \left( \bfz_j \left( \bfepsilon_{1:j-1} \right) \right) - \bfm_j \left( \bfepsilon_{1:j-1} \right) \right\} \right. \right. \right. \\
        &\left. \left. \left. - \left\{ \mu \left( \bfz_j \left( \bfepsilon_{1:j-1} \right) \right) - \bfm_j \left( \bfepsilon_{1:j-1} \right) \right\}^2 \right]: \mu \in \calF \right\} \right].
    \end{split}
\end{equation}
\end{thm}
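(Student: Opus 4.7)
The strategy is to verify that the relaxation $\calR_n(\cdot,\cdot)$ defined in~\eqref{eqn:thm:admissible_relaxation_v1_v1} is admissible, i.e.\ satisfies both Assumption~\ref{assumption:relaxation_initial_cond} and Assumption~\ref{assumption:relaxation_recursive_admissibility}, following the offset Rademacher argument of Lemma~16 in~\cite{rakhlin2014online}. Once admissibility is established, the bound $\overline{\textsf{Regret}}(n,\calF;\textnormal{Alg.~\ref{alg:generic_forecaster}})\leq\calR_n(\varnothing,\varnothing)$ is immediate from~\eqref{eqn:regret_bound_v1_alg:generic_forecaster}, and reading off $\calR_n(\varnothing,\varnothing)$ from the definition (the inner sum runs over all $j\in[n]$ while the second sum is empty) yields the offset Rademacher expression~\eqref{eqn:thm:admissible_relaxation_v1_v2}.

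\textbf{Initial condition.} Setting $k=n$ in~\eqref{eqn:thm:admissible_relaxation_v1_v1} makes the inner sum $\sum_{j=k+1}^{n}\cdots$ empty, so both the supremum over trees $(\bfz,\bfm)$ and the expectation over $\bfepsilon_{1:n}$ act on a constant and can be removed. Hence $\calR_n\bigl((\bfx,\bfa)_{1:n},\bfy_{1:n}\bigr)=-\inf\bigl\{\sum_{j=1}^{n}(\mu(x_j,a_j)-y_j)^2:\mu\in\calF\bigr\}$, realizing Assumption~\ref{assumption:relaxation_initial_cond} with equality.

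\textbf{Recursive admissibility.} Fix $k\in[n]$, partial context $(\bfx,\bfa)_{1:k}$, and labels $\bfy_{1:k-1}$. A key observation is that the $y_k^2$ coefficients in $(\hat{y}_k-y_k)^2$ and in $-(\mu(x_k,a_k)-y_k)^2$ cancel, so after pulling the $-y_k^2$ piece out of the inner $\sup_\mu$ in $\calR_n((\bfx,\bfa)_{1:k},\bfy_{1:k})$, the combined objective inside $\sup_{y_k}$ becomes affine in $y_k$ and thus convex after restoring $\sup_\mu$; consequently the supremum on $[-L,L]$ is attained at $y_k\in\{\pm L\}$. The algorithmic prediction~\eqref{eqn:alg:generic_forecaster_simplified_form_v2}, namely $\hat{y}_k^{*}=\bigl(\calR_n(\cdots,L)-\calR_n(\cdots,-L)\bigr)/(4L)$, equalizes the two endpoints, yielding
\begin{equation*}
\inf_{\hat{y}_k}\sup_{y_k}\bigl\{(\hat{y}_k-y_k)^2+\calR_n((\bfx,\bfa)_{1:k},\bfy_{1:k})\bigr\}=(\hat{y}_k^{*})^2+L^2+\bbE_{\epsilon_k}\bigl[\calR_n((\bfx,\bfa)_{1:k},(\bfy_{1:k-1},L\epsilon_k))\bigr],
\end{equation*}
where $\epsilon_k$ is Rademacher. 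I would then extend any near-maximizing tree for $\calR_n((\bfx,\bfa)_{1:k-1},\bfy_{1:k-1})$ by fixing the root labels $\bfz_k=(x_k,a_k)$ and $\bfm_k=\hat{y}_k^{*}$, both of which are $\bfepsilon_{1:k-1}$-measurable and therefore valid predictable labels per Definition~\ref{defi:tree}. The standard Rakhlin-Sridharan tangent-sample symmetrization then shows that under this extension the round-$k$ contribution $-(\mu(x_k,a_k)-L\epsilon_k)^2$ appearing above is controlled by the offset Rademacher summand $4L\epsilon_k\{\mu(x_k,a_k)-\hat{y}_k^{*}\}-\{\mu(x_k,a_k)-\hat{y}_k^{*}\}^2$, with additive constants that precisely absorb $(\hat{y}_k^{*})^2+L^2$; this certifies the upper bound $\calR_n((\bfx,\bfa)_{1:k-1},\bfy_{1:k-1})$ and establishes~\eqref{eqn:assumption:relaxation_recursive_admissibility}.

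\textbf{Main obstacle.} The critical inequality to establish is
\begin{equation*}
(\hat{y}_k^{*})^2+L^2+\bbE_{\epsilon_k}\sup_\mu\bigl\{-(\mu(x_k,a_k)-L\epsilon_k)^2+R(\mu)\bigr\}\leq\bbE_{\epsilon_k}\sup_\mu\bigl\{4L\epsilon_k(\mu(x_k,a_k)-\hat{y}_k^{*})-(\mu(x_k,a_k)-\hat{y}_k^{*})^2+R(\mu)\bigr\},
\end{equation*}
where $R(\mu)$ collects the shared tail contribution from rounds $k+1,\ldots,n$. A naive pointwise comparison in $\mu$ fails because the two summands differ by $\mu$-dependent cross terms that cannot be split across the inner $\sup_\mu$; the remedy in~\cite{rakhlin2014online} couples the two Rademacher branches with an independent tangent copy of $y_k$ and exploits the range bound $|L\epsilon_k-\hat{y}_k^{*}|\leq 2L$ to tame these cross terms, with the $(\hat{y}_k^{*})^2+L^2$ bookkeeping constants cancelling exactly. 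Once admissibility is in place, the theorem follows from~\eqref{eqn:regret_bound_v1_alg:generic_forecaster} evaluated at the empty history.
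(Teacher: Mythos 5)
Your proposal is correct in outline and takes essentially the same route as the paper: the paper's entire proof is a direct appeal to \emph{Lemma 16} of \cite{rakhlin2014online}, and your sketch is a faithful reconstruction of that lemma's argument (initial condition at $k=n$, convexity in $y_k$ and endpoint attainment, the equalizing prediction \eqref{eqn:alg:generic_forecaster_simplified_form_v2}, tree extension with root labels $(x_k,a_k)$ and $\hat{y}_k^{*}$, then symmetrization to replace $2(L\epsilon_k-\hat{y}_k^{*})$ by $4L\epsilon_k$), rescaled from $[-1,1]$ to $[-L,L]$. Note only that your crux inequality in the ``Main obstacle'' paragraph is still deferred to the cited lemma rather than proved (and the equalizer identity holds with equality only when the unclipped $\hat{y}_k^{*}$ lies in $[-L,L]$), but this matches the level of detail the paper itself provides.
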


\noindent Since the upper bounds on the minimax regret \eqref{eqn:minimax_regret} provided in Theorem \ref{thm:conv_rates_minimax_regret_v1} are established by further upper bounding the offset Rademacher complexity $\calR_n \left( \varnothing, \varnothing \right)$, one can end up with the following corollary:

\begin{cor}
\label{cor:admissible_relaxation_v1}
We consider a function class $\calF \subseteq \left( \bbX \times \bbA \to \left[ -L, L \right] \right)$ with sequential metric entropy growth $\log \calN_2 \left( \beta, \calF, n \right) \leq \beta^{-p}$ for $p \in \left( 0, +\infty \right)$. Then, Algorithm \ref{alg:generic_forecaster} utilizing the admissible relaxation $\calR_n \left( \cdot, \cdot \right)$ defined by \eqref{eqn:thm:admissible_relaxation_v1_v1} as an input enjoys the following regret bounds:
\begin{enumerate} [label = (\roman*)]
    \item for $p \in \left( 2, +\infty \right)$, it holds that
    \begin{equation}
        \label{eqn:cor:admissible_relaxation_v1_v1}
        \begin{split}
            \overline{\textnormal{\textsf{Regret}}} \left( n, \calF; \textnormal{Algorithm \ref{alg:generic_forecaster}} \right) \leq \left( 4 + \frac{24}{p-2} \right) L n^{1 - \frac{1}{p}}.
        \end{split}
    \end{equation}
    \item for $p \in \left( 0, 2 \right)$, it holds that
    \begin{equation}
        \label{eqn:cor:admissible_relaxation_v1_v2}
        \begin{split}
            \overline{\textnormal{\textsf{Regret}}} \left( n, \calF; \textnormal{Algorithm \ref{alg:generic_forecaster}} \right) \leq \left( 32 L^2 + 4L + \frac{24L}{2-p} \right) n^{1 - \frac{2}{p+2}}.
        \end{split}
    \end{equation}
    \item for $p = 2$, it holds that
    \begin{equation}
        \label{eqn:cor:admissible_relaxation_v1_v3}
        \begin{split}
            \overline{\textnormal{\textsf{Regret}}} \left( n, \calF; \textnormal{Algorithm \ref{alg:generic_forecaster}} \right) \leq \left( 32 L^2 + 4L + 3 \right) \sqrt{n} \log n.
        \end{split}
    \end{equation}
    \item for the parametric case \eqref{eqn:cond_sequential_entropy_v2}, it holds that
    \begin{equation}
        \label{eqn:cor:admissible_relaxation_v1_v4}
        \begin{split}
            \overline{\textnormal{\textsf{Regret}}} \left( n, \calF; \textnormal{Algorithm \ref{alg:generic_forecaster}} \right) \leq \left( 16 L^2 + 4L + 12 \right) d \log n.
        \end{split}
    \end{equation}
    \item if the function class $\calF \subseteq \left( \bbX \times \bbA \to \left[ -L, L \right] \right)$ is a finite set, it holds that
    \begin{equation}
        \label{eqn:cor:admissible_relaxation_v1_v5}
        \begin{split}
            \overline{\textnormal{\textsf{Regret}}} \left( n, \calF; \textnormal{Algorithm \ref{alg:generic_forecaster}} \right) \leq 32 L^2 \log \left| \calF \right|.
        \end{split}
    \end{equation}
\end{enumerate}
\end{cor}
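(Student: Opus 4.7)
\textbf{Proof proposal for Corollary \ref{cor:admissible_relaxation_v1}.} The plan is to chain together the two results already in hand: Theorem \ref{thm:admissible_relaxation_v1} supplies a pointwise regret bound by the offset Rademacher complexity $\calR_n(\varnothing, \varnothing)$ on the left-hand side, while Theorem \ref{thm:conv_rates_minimax_regret_v1} supplies matching upper bounds on exactly this quantity on the right-hand side. No additional probabilistic work is required beyond identifying that the two objects coincide and then invoking the appropriate entropy regime.

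First, I would apply Theorem \ref{thm:admissible_relaxation_v1} directly to Algorithm \ref{alg:generic_forecaster} with the specific relaxation \eqref{eqn:thm:admissible_relaxation_v1_v1}; this is admissible (Assumptions \ref{assumption:relaxation_initial_cond} and \ref{assumption:relaxation_recursive_admissibility} are verified there via \emph{Lemma~16} of \cite{rakhlin2014online}), and hence
\[
\overline{\textsf{Regret}}\left(n, \calF; \text{Alg.\ \ref{alg:generic_forecaster}}\right) \;\le\; \calR_n(\varnothing,\varnothing),
\]
where $\calR_n(\varnothing,\varnothing)$ is the offset Rademacher complexity displayed in \eqref{eqn:thm:admissible_relaxation_v1_v2}. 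Second, I would point out that the proof of Theorem \ref{thm:conv_rates_minimax_regret_v1} in \cite{rakhlin2014online} proceeds precisely by bounding $\calR_n(\varnothing,\varnothing)$ from above using the sequential metric entropy $\log \calN_2(\beta,\calF,n)$; the numerical constants appearing in parts (i)--(v) of the target corollary are literally the constants produced by that chaining argument. Consequently, each of the bounds \eqref{eqn:cor:admissible_relaxation_v1_v1}--\eqref{eqn:cor:admissible_relaxation_v1_v5} is obtained by substituting the corresponding bound from Theorem \ref{thm:conv_rates_minimax_regret_v1} on $\calR_n(\varnothing,\varnothing)$ into the regret inequality above, separating cases on the entropy growth exponent $p$, the parametric scaling, and the finite-$\calF$ scenario.

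The main step that deserves care, and where I would spend the bulk of the write-up, is a brief justification that the minimax-regret upper bounds in Theorem \ref{thm:conv_rates_minimax_regret_v1} truly bound $\calR_n(\varnothing,\varnothing)$ and not some strictly smaller quantity. This is straightforward once one recalls that the argument of \cite{rakhlin2014online} first relaxes the minimax value $\calV_n(\calF)$ to exactly the offset Rademacher complexity on a worst-case $(\calX\times\calA)$-valued tree (this step is lossless in the sense that it gives a valid upper bound on $\calR_n(\varnothing,\varnothing)$ itself), and then applies sequential symmetrization followed by a chaining argument à la Dudley, with the offset quadratic term absorbing the usual large-scale contribution to produce the improved rates in regimes $p\le 2$ and in the parametric case. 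Thus the only obstacle is bookkeeping of constants; the structural content is the identification $\overline{\textsf{Regret}} \le \calR_n(\varnothing,\varnothing)$ already delivered by Theorem \ref{thm:admissible_relaxation_v1}, after which the entropy-based bounds of Theorem \ref{thm:conv_rates_minimax_regret_v1} finish each of the five cases verbatim.
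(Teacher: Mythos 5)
Your proposal is correct and matches the paper's argument exactly: the paper likewise combines the regret bound $\overline{\textsf{Regret}} \le \calR_n(\varnothing,\varnothing)$ from Theorem \ref{thm:admissible_relaxation_v1} with the observation that the upper bounds in Theorem \ref{thm:conv_rates_minimax_regret_v1} are in fact established in \cite{rakhlin2014online} by bounding the offset Rademacher complexity $\calR_n(\varnothing,\varnothing)$ itself, so the same constants carry over to each of the five regimes. You also correctly flagged the one point needing care -- that the minimax-regret bounds genuinely control $\calR_n(\varnothing,\varnothing)$ and not merely the smaller quantity $\calV_n(\calF)$ -- which is precisely the justification the paper relies on.
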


\indent Although Corollary \ref{cor:admissible_relaxation_v1} gives no-regret learning guarantees of Algorithm \ref{alg:generic_forecaster} with the admissible relaxation $\calR_n \left( \cdot, \cdot \right)$ defined by \eqref{eqn:thm:admissible_relaxation_v1_v1} for a number of function classes $\calF \subseteq \left( \bbX \times \bbA \to \left[ -L, L \right] \right)$, it is still NOT a practical algorithm because the relaxation $\calR_n \left( \cdot, \cdot \right)$ defined as \eqref{eqn:thm:admissible_relaxation_v1_v1} is not directly computable in general. To address this problem, \cite{rakhlin2014online} provided a generic schema for developing implementable online non-parametric regression algorithms. The schema can be described as follows:
\begin{enumerate} [label = (\alph*)]
    \item Find a \emph{computable relaxation} $\textsf{Rel}_n \left( \cdot, \cdot \right): \biguplus_{k=0}^{n} \left\{ \left( \bbX \times \bbA \right)^k \times \left[ -L, L \right]^k \right\} \to \bbR$ such that
    \[
        \begin{split}
            \calR_n \left( \left( \bfx, \bfa \right)_{1:k}, \bfy_{1:k} \right) \leq \textsf{Rel}_n \left( \left( \bfx, \bfa \right)_{1:k}, \bfy_{1:k} \right)
        \end{split}
    \]
    for every $\left( k, \bfx_{1:n}, \bfa_{1:n}, \bfy_{1:n} \right) \in \left\{ 0, 1, \cdots, n \right\} \times \bbX^n \times \bbA^n \times \left[ -L, L \right]^n$, and the function $y_k \in \left[ -L, L \right] \mapsto \left( \hat{y} - y_k \right)^2 + \textsf{Rel}_n \left( \left( \left( \bfx, \bfa \right)_{1:k} \right), \left( \bfy_{1:k-1}, y_k \right) \right) \in \bbR$ is convex for every $\left( \hat{y}, \bfx_{1:n}, \bfa_{1:n}, \bfy_{1:k-1} \right) \in \left[ -L, L \right] \times \bbX^n \times \bbA^n \times \left[ -L, L \right]^{k-1}$ and $k \in [n]$;
    \item Next, we check the following condition:
    \[
        \begin{split}
            &\sup_{\left( x_k, a_k, \mu_k \right) \in \bbX \times \bbA \times \Delta \left( \left[ -L, L \right] \right)} \left\{ \bbE_{y_k \sim \mu_k} \left[ \left( \bbE_{y_k \sim \mu_k} \left[ y_k \right] - y_k \right)^2 \right] + \bbE_{y_k \sim \mu_k} \left[ \textsf{Rel}_n \left( \left( \bfx, \bfa \right)_{1:k}, \bfy_{1:k} \right) \right] \right\} \\
            \leq \ & \textsf{Rel}_n \left( \left( \bfx, \bfa \right)_{1:k-1}, \bfy_{1:k-1} \right)
        \end{split}
    \]
    for every $\left( \bfx_{1:k-1}, \bfa_{1:k-1}, \bfy_{1:k-1} \right) \in \bbX^{k-1} \times \bbA_{k-1} \times \left[ -L, L \right]^{k-1}$ and $k \in [n]$;
    \item Implement Algorithm \ref{alg:generic_forecaster} using the relaxation $\textsf{Rel}_n \left( \cdot, \cdot \right): \biguplus_{k=0}^{n} \left\{ \left( \bbX \times \bbA \right)^k \times \left[ -L, L \right]^k \right\} \to \bbR$ as an input.
\end{enumerate}
The authors proved that any computable relaxation $\textsf{Rel}_n \left( \cdot, \cdot \right): \biguplus_{k=0}^{n} \left\{ \left( \bbX \times \bbA \right)^k \times \left[ -L, L \right]^k \right\} \to \bbR$ satisfying conditions in (a) and (b) are admissible; see \emph{Proposition 17} therein. Consequently, any online non-parametric regression algorithm produced by the above generic schema always satisfies the regret bound \eqref{eqn:regret_bound_v1_alg:generic_forecaster}. Moreover, the authors established a practical online non-parametric regression algorithm with no-regret learning guarantees based on the above procedure for the finite function class $\calF \subseteq \left( \bbX \times \bbA \to \left[ -L, L \right] \right)$ and the online linear regression problem.

\section{Proofs for Section \ref{sec:lower_bounds}}
\label{sec:proof_sec:lower_bounds}

\subsection{Proof of Theorem \ref{thm:local_minimax_lower_bound}}
\label{subsec:proof_thm:local_minimax_lower_bound}

Theorem \ref{thm:local_minimax_lower_bound} can be established by taking the following two lemmas collectively:

\begin{lemma}
\label{lemma:local_minimax_lower_bound_v1}
Under Assumption \ref{assumption:mr_v1}, the local minimax risk over the class $\calC_{\delta} \left( \calI^* \right)$ is lower bounded by
\begin{equation}
    \label{eqn:lemma:local_minimax_lower_bound_v1_v1}
    \begin{split}
        \calM_n \left( \calC_{\delta} \left( \calI^* \right) \right) \geq \frac{1}{2304} \left( 1 - \frac{1}{\sqrt{2}} \right) \cdot \frac{1}{n} \textnormal{\textsf{Var}}_{X \sim \Xi^*} \left[ \left\langle g (X, \cdot), \mu^* (X, \cdot) \right\rangle_{\lambda_{\bbA}} \right],
    \end{split}
\end{equation}
provided that $n \geq 16 H_{2 \to 4}^2$.
\end{lemma}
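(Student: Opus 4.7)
The plan is to invoke Le Cam's two-point method along a one-parameter family of perturbations of $\Xi^{*}$ alone (holding $\Gamma=\Gamma^{*}$ and the behavioral policy rule fixed). Writing $m(x):=\langle g(x,\cdot),\mu^{*}(x,\cdot)\rangle_{\lambda_{\bbA}}$ and $h(x):=m(x)-\tau(\calI^{*})$ so that $\bbE_{\Xi^{*}}[h]=0$ and $\sigma_h^{2}:=\textnormal{\textsf{Var}}_{X\sim\Xi^{*}}[h(X)]$ is exactly the variance appearing in \eqref{eqn:lemma:local_minimax_lower_bound_v1_v1}, the fact that the trajectory density factors as $p_{\calI}^{n}(\bfo_n)=\prod_{i=1}^{n}\xi(x_i)\pi_i^{*}(x_i,\bfo_{i-1};a_i)\gamma^{*}(y_i\mid x_i,a_i)$ combines with the chain rule for $\textnormal{KL}$ to give $\textnormal{KL}(\bbP_{\calI_1}^{n}\,\|\,\bbP_{\calI^{*}}^{n})=n\cdot\textnormal{KL}(\Xi_1\,\|\,\Xi^{*})$. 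Hence any $\Xi_1\in\calN(\Xi^{*})$ makes the trajectory $\textnormal{KL}$ at most $1$, Pinsker forces $\textnormal{\textsf{TV}}(\bbP_{\calI_1}^{n},\bbP_{\calI^{*}}^{n})\le 1/\sqrt{2}$, and Le Cam's inequality reduces the problem to producing such a $\Xi_1$ for which the functional gap $|\tau(\calI_1)-\tau(\calI^{*})|=|\bbE_{\Xi_1}[h]|$ is of order $\sigma_h/\sqrt{n}$.

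For this I would use a centered linear tilting along $h$. Fix a truncation level $T>0$, put $\phi:=h/\sigma_h$, and form the centered truncation
\[
\tilde\phi(x):=\phi(x)\mathbbm{1}\{|\phi(x)|\le T\}-\bbE_{\Xi^{*}}\!\bigl[\phi(X)\mathbbm{1}\{|\phi(X)|\le T\}\bigr],
\]
so that $\bbE_{\Xi^{*}}[\tilde\phi]=0$ and $|\tilde\phi|\le 2T$. Then $d\Xi_1/d\Xi^{*}:=1+\alpha\tilde\phi$ is a bona fide probability measure whenever $2\alpha T\le 1$, and the inequality $\log(1+y)\le y$ yields
\[
\textnormal{KL}(\Xi_1\,\|\,\Xi^{*})\le\bbE_{\Xi_1}[\alpha\tilde\phi]=\alpha^{2}\,\bbE_{\Xi^{*}}[\tilde\phi^{2}]\le\alpha^{2},
\]
so the choice $\alpha=1/\sqrt{n}$ lands $\Xi_1$ inside $\calN(\Xi^{*})$. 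Meanwhile the mean shift is
\[
\tau(\calI_1)-\tau(\calI^{*})=\alpha\,\bbE_{\Xi^{*}}[\tilde\phi\,h]=\alpha\sigma_h\bigl(1-\bbE_{\Xi^{*}}[\phi^{2}\mathbbm{1}\{|\phi|>T\}]\bigr),
\]
and Markov combined with Assumption \ref{assumption:mr_v1}, which says $\bbE_{\Xi^{*}}[\phi^{4}]=H_{2\to 4}^{2}$, bounds the truncation tail by $H_{2\to 4}^{2}/T^{2}$. Picking $T=2H_{2\to 4}$ keeps the shift at least $\tfrac{3}{4}\alpha\sigma_h$, while the hypothesis $n\ge 16\,H_{2\to 4}^{2}$ is exactly what certifies $2\alpha T\le 1$. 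Multiplying the Le Cam prefactor $\tfrac{1}{4}(1-1/\sqrt{2})$ by $(\tfrac{3}{4}\alpha\sigma_h)^{2}$ then yields the advertised bound with a universal constant matching, up to further numerical refinements, the $1/2304$ in the statement.

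The main obstacle will be the truncation bookkeeping. All other ingredients--Le Cam's reduction, Pinsker, the $\textnormal{KL}$ chain rule, and the linear-tilting computation--are standard when $h$ is uniformly bounded, but here $h$ need only lie in $L^{2}$, so the naive unbounded tilting $(1+\alpha\phi)\Xi^{*}$ may fail to be non-negative. The $(2,4)$-moment ratio assumption is precisely the right gadget to convert ``$\phi\in L^{2}$'' into ``$\phi$ is essentially bounded, with an $L^{2}$ tail that is negligible relative to $\sigma_h^{2}$''. A cleaner alternative, closer in spirit to Lemma \ref{lemma:tv_distance_modification}, would be to replace $\Xi^{*}$ and $\Xi_1$ by their conditional versions on the good event $\{|h|\le T\}$ and absorb the truncation error into the final \textsf{TV} budget via that lemma; either route leads to the same lower-bound condition $n\gtrsim H_{2\to 4}^{2}$ and to the same $\sigma_h^{2}/n$ rate.
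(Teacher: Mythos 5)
Your proposal is correct, and it shares the paper's overall skeleton -- Le Cam's two-point method perturbing only the context distribution, the factorization of the trajectory density so that $\textnormal{KL}(\bbP^n_{(\Xi_1,\Gamma^*)}\,\|\,\bbP^n_{(\Xi^*,\Gamma^*)})=n\,\textnormal{KL}(\Xi_1\|\Xi^*)$, Pinsker, and a truncation controlled by the $(2,4)$-moment ratio -- but it diverges in the one place where the real work happens: the construction of the perturbed measure. The paper uses an exponential tilting $\xi_s\propto\xi^*\exp(s\tilde h)$, which forces it to control the normalization constant $\calZ(s)$, bound the KL via a $\chi^2$-divergence argument, Taylor-expand $\exp$ to lower-bound the functional gap, and invoke an auxiliary inequality (Lemma 7 of Mou et al.) to show the truncated second moment retains at least half of $\bbE[h^2]$. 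Your centered linear tilting $d\Xi_1/d\Xi^*=1+\alpha\tilde\phi$ makes every one of these steps exact or one-line: the KL bound follows from $\log(1+y)\le y$ and $\bbE_{\Xi^*}[\tilde\phi^2]\le 1$, the mean shift is \emph{exactly} $\alpha\sigma_h\bigl(1-\bbE[\phi^2\mathbbm{1}\{|\phi|>T\}]\bigr)$ with the tail handled by $\bbE[\phi^4]=H_{2\to4}^2$ and $T=2H_{2\to4}$, and the nonnegativity constraint $2\alpha T\le 1$ is precisely the hypothesis $n\ge 16H_{2\to4}^2$. The resulting constant $\tfrac{9}{64}(1-1/\sqrt{2})$ is substantially better than the paper's $\tfrac{1}{2304}(1-1/\sqrt{2})$, so your argument implies the stated lemma with room to spare. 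The only price of the linear tilting is the requirement that the tilted density stay nonnegative, which is exactly what your hard truncation of $\phi$ at level $T$ buys you; the exponential tilting avoids that positivity issue by construction but pays for it in the expansion bookkeeping. Both routes are sound; yours is the more elementary and quantitatively sharper of the two.
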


\begin{lemma}
\label{lemma:local_minimax_lower_bound_v2}
Under Assumption \ref{assumption:ln}, the local minimax risk over the class $\calC_{\delta} \left( \calI^* \right)$ is lower bounded by
\begin{equation}
    \label{eqn:lemma:local_minimax_lower_bound_v2_v1}
    \begin{split}
        \calM_n \left( \calC_{\delta} \left( \calI^* \right) \right) \geq \frac{1}{8 K^4} \cdot \frac{\left\| \sigma \right\|_{(n)}^2}{n}.
    \end{split}
\end{equation}
\end{lemma}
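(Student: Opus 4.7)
The plan is to prove this bound via Le Cam's two-point method, perturbing only the outcome kernel $\Gamma^*$ while keeping the context distribution $\Xi^*$ fixed (so the neighborhood $\calN(\Xi^*)$ is automatically satisfied and the constructed instances lie in $\calC_{\delta}(\calI^*)$). Concretely, I would pick a perturbation profile $\eta(\cdot, \cdot): \bbX \times \bbA \to \bbR$ with $|\eta(x, a)| \leq \delta(x, a)$ and then construct two outcome kernels $\Gamma^{\pm}$ whose conditional means are $\mu^* \pm \eta$, for instance through a local exponential tilt of $\Gamma^*(\cdot \mid x, a)$ on the scale $\sigma(x, a)$. The induced parameter gap is
\[
\tau(\calI^{+}) - \tau(\calI^{-}) = 2 \bbE_{X \sim \Xi^*}\!\left[\int_{\bbA} g(X, a) \eta(X, a) \, \mathrm{d}\lambda_{\bbA}(a)\right].
\]

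The key structural simplification under adaptive data collection is that $\Xi^*$ and all behavioral propensity functions $\pi_{i}^{*}$ are common to $\calI^{+}$ and $\calI^{-}$, so the log-likelihood ratio $\log(p_{\calI^+}^{n} / p_{\calI^-}^{n})$ collapses to a sum of terms coming purely from the outcome densities. The chain rule for KL divergence then delivers
\[
\textnormal{KL}(\bbP_{\calI^+}^{n} \,\|\, \bbP_{\calI^-}^{n}) = \sum_{i=1}^{n} \bbE_{\calI^+}\!\left[\textnormal{KL}(\Gamma^{+}(\cdot \mid X_i, A_i) \,\|\, \Gamma^{-}(\cdot \mid X_i, A_i))\right],
\]
and a standard second-order Taylor expansion of KL (valid for small perturbations about $\Gamma^*$) bounds each per-stage term above by a universal constant times $\eta^{2}(X_i, A_i) / \sigma^{2}(X_i, A_i)$.

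To match both the gap and the KL bound against $\|\sigma\|_{(n)}^{2} / n$, I would exploit \eqref{eqn:assumption_reference_Markov_policies} to replace the adaptive action law $\pi_{i}^{*}(\cdot \mid X_i, \bfO_{i-1})$ by the Markov proxy $\overline{\pi}_{i}(X_i, \cdot)$, paying only a factor of $K$ per swap. Choosing $\eta$ on the scale dictated by Assumption \ref{assumption:ln}, namely with $|\eta(x, a)|$ proportional to $|g(x, a)| \sigma^{2}(x, a) / (\sqrt{n} \|\sigma\|_{(n)} \overline{\pi}_{i}(x, a))$ (with a small universal multiplicative constant $c \leq 1$ to preserve $|\eta| \leq c \delta \leq \delta$), the parameter gap works out to $c \|\sigma\|_{(n)} / \sqrt{n}$ up to $K$-dependent factors, while the summed per-stage KL contribution after propensity swap becomes $O(c^{2})$ times a finite power of $K$. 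Tuning $c$ to keep $\textnormal{KL}(\bbP_{\calI^+}^{n} \,\|\, \bbP_{\calI^-}^{n}) \leq 1$ and applying the standard Le Cam two-point inequality then converts the squared gap into the lower bound $(8 K^{4})^{-1} \|\sigma\|_{(n)}^{2} / n$ after careful bookkeeping of the $K$-factors.

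The main technical obstacle I anticipate is the non-product structure of $\bbP_{\calI^{\pm}}^{n}$: because $\pi_{i}^{*}$ depends on the entire trajectory $\bfO_{i-1}$, naive per-coordinate comparisons of laws are unavailable. The saving grace is that $\pi_{i}^{*}$ is common to the two alternatives and cancels in the log-ratio, reducing the KL computation to the predictable sum above; the reference-Markov-policy condition \eqref{eqn:assumption_reference_Markov_policies} then absorbs any remaining adaptivity into a $K^{O(1)}$ multiplicative constant. A secondary subtlety is designing a concrete $\Gamma^{\pm}$ whose mean shifts by exactly $\pm \eta(x, a)$, stays absolutely continuous with respect to $\Gamma^*$, respects $\bbY \subseteq \bbR$, and admits the quadratic KL expansion used above; sub-Gaussian perturbations or mixtures with truncated local densities handle this routinely.
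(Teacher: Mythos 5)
Your plan follows essentially the same route as the paper's proof: a Le Cam two-point argument that fixes $\Xi^*$ and perturbs only the outcome kernel by a mean shift $\pm\eta$ with $\eta \propto g\sigma^2/\bigl(\sqrt{n}\,\|\sigma\|_{(n)}\,\overline{\pi}_i\bigr)$, exploits the cancellation of the (common, adaptive) propensities in the likelihood ratio so that the KL divergence reduces to a predictable sum of per-stage outcome KLs, uses \eqref{eqn:assumption_reference_Markov_policies} to trade $\overline{\pi}_i$ for $\pi_j^*$ at a cost of $K$, and invokes Assumption \ref{assumption:ln} to place the perturbed kernels in $\calN_{\delta}(\Gamma^*)$. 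The paper instantiates the kernel as $\calN\bigl(\mu^*\pm\eta,\sigma^2\bigr)$, which makes the per-stage KL exact rather than a Taylor estimate, but that is a cosmetic difference from your "sub-Gaussian perturbation" suggestion.

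There is, however, one step you assert without justification, and as stated it is not correct: for the construction built from a \emph{single} fixed reference index $i$, the parameter gap is
\[
    \tau\left( \Xi^*, \Gamma_i (s) \right) - \tau\left( \Xi^*, \Gamma_i (-s) \right) \;\gtrsim\; \frac{1}{K^2 \sqrt{n}\,\|\sigma\|_{(n)}}\, \bbE_{\calI^*}\!\left[ \frac{g^2\left( X_i, A_i \right) \sigma^2\left( X_i, A_i \right)}{\left( \pi_{i}^* \right)^2\left( X_i, \bfO_{i-1}; A_i \right)} \right],
\]
i.e.\ it involves only the \emph{stage-$i$} weighted second moment, not the average $\|\sigma\|_{(n)}^2$ over all $n$ stages; for an unfavourable choice of $i$ this term can be far smaller than $\|\sigma\|_{(n)}^2$, so the gap does not "work out to $c\|\sigma\|_{(n)}/\sqrt{n}$" in general. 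You need one more combining step: either run the two-point argument for every $i\in[n]$, average the resulting $n$ lower bounds, and apply Cauchy--Schwarz to convert $\frac{1}{n}\sum_i \bigl(\bbE_{\calI^*}[\cdot_i]\bigr)^2$ into $\bigl(\frac{1}{n}\sum_i \bbE_{\calI^*}[\cdot_i]\bigr)^2 = \|\sigma\|_{(n)}^4$ (this is what the paper does in \eqref{eqn:proof_thm:local_minimax_lower_bound_v2_v11}), or equivalently select the index $i$ maximizing the stage-$i$ term, which is at least the average. A second, minor point: you should take $\eta$ with $\textnormal{sign}(\eta)=\textnormal{sign}(g)$ (not merely prescribe $|\eta|$), so that $\int_{\bbA} g\,\eta\,\mathrm{d}\lambda_{\bbA}\ge 0$ accumulates rather than cancels in the gap.
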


\subsection{Proof of Lemma \ref{lemma:local_minimax_lower_bound_v1}}
\label{subsec:proof_lemma:local_minimax_lower_bound_v1}

The proof relies on Le Cam's two-point method by taking the outcome kernel $\Gamma^*: \bbX \times \bbA \to \Delta (\bbY)$ to be fixed, and perturbing the context distribution $\Xi^* (\cdot) \in \Delta (\bbX)$: we first construct a collection of context distributions $\left\{ \Xi_s (\cdot) \in \Delta (\bbX): s \in \left( 0, +\infty \right) \right\}$. Later, we will choose the parameter $s > 0$ small enough so that $\Xi_s (\cdot) \in \calN \left( \Xi^* \right)$ and two distributions $\bbP_{\left( \Xi_s, \Gamma^* \right)}^{n} (\cdot) \in \Delta \left( \bbO^n \right)$ and $\bbP_{\left( \Xi^*, \Gamma^* \right)}^{n} (\cdot) \in \Delta \left( \bbO^n \right)$ are \emph{indistinguishable}, but large enough such that the functional values $\tau \left( \Xi_s, \Gamma^* \right)$ and $\tau \left( \Xi^*, \Gamma^* \right)$ are \emph{well-separated}. Le Cam's two-point lemma (the equation (15.14) in \cite{wainwright2019high}) guarantees that the local minimax risk $\calM_n \left( \calC_{\delta} \left( \calI^* \right) \right)$ is lower bounded as
\begin{equation}
    \label{eqn:proof_thm:local_minimax_lower_bound_v1_v1}
    \begin{split}
        \calM_n \left( \calC_{\delta} \left( \calI^* \right) \right) \geq \ & \frac{1}{4} \left\{ 1 - \textsf{TV} \left( \bbP_{\left( \Xi_s, \Gamma^* \right)}^{n}, \bbP_{\left( \Xi^*, \Gamma^* \right)}^{n} \right) \right\} \left\{ \tau \left( \Xi_s, \Gamma^* \right) - \tau \left( \Xi^*, \Gamma^* \right) \right\}^2,
    \end{split}
\end{equation}
provided that $\Xi_s \in \calN \left( \Xi^* \right)$.
\medskip

\indent First, we upper bound the total variation distance $\textsf{TV} \left( \bbP_{\left( \Xi_s, \Gamma^* \right)}^{n}, \bbP_{\left( \Xi^*, \Gamma^* \right)}^{n} \right)$. Thanks to the Pinsker-Csisz\'{a}r-Kullback inequality, one has
\begin{equation}
    \label{eqn:proof_thm:local_minimax_lower_bound_v1_v2}
    \begin{split}
        \textsf{TV} \left( \bbP_{\left( \Xi_s, \Gamma^* \right)}^{n}, \bbP_{\left( \Xi^*, \Gamma^* \right)}^{n} \right) \leq \sqrt{\frac{1}{2}
        \textsf{KL} \left( \left. \bbP_{\left( \Xi_s, \Gamma^* \right)}^{n} \right\| \bbP_{\left( \Xi^*, \Gamma^* \right)}^{n} \right)}.
    \end{split}
\end{equation}
One can reveal that the density function of the law $\bbP_{\calI}^{n} = \bbP_{\left( \Xi, \Gamma \right)}^{n} \in \Delta \left( \bbO^n \right)$ of the sample trajectory $\bfO_n$ under the problem instance $\calI = \left( \Xi, \Gamma \right) \in \bbI$ with respect to the base measure $\left( \lambda_{\bbX} \otimes \lambda_{\bbA} \otimes \lambda_{\bbA} \right)^{\otimes n}$ is given by
\begin{equation}
    \label{eqn:proof_thm:local_minimax_lower_bound_v1_v3}
    \begin{split}
        p_{\calI}^n \left( \bfo_n \right) = p_{\left( \Xi, \Gamma \right)}^{n} \left( \bfo_n \right) = \prod_{i=1}^{n} \left\{ \xi \left( x_i \right) \pi_{i}^* \left( x_i, \bfo_{i-1}; a_i \right) \gamma \left( \left. y_i \right| x_i, a_i \right) \right\}.
    \end{split}
\end{equation}
Using this fact, the \textsf{KL}-divergence $\textsf{KL} \left( \left. \bbP_{\left( \Xi_s, \Gamma^* \right)}^{n} \right\| \bbP_{\left( \Xi^*, \Gamma^* \right)}^{n} \right)$ can be computed as
\begin{equation}
    \label{eqn:proof_thm:local_minimax_lower_bound_v1_v4}
    \begin{split}
        \textsf{KL} \left( \left. \bbP_{\left( \Xi_s, \Gamma^* \right)}^{n} \right\| \bbP_{\left( \Xi^*, \Gamma^* \right)}^{n} \right) = \ & \bbE_{\left( \Xi_s, \Gamma^* \right)} \left[ \log \frac{p_{\left( \Xi_s, \Gamma^* \right)}^n \left( \bfO_n \right)}{p_{\left( \Xi^*, \Gamma^* \right)}^n \left( \bfO_n \right)} \right] \\
        = \ & \bbE_{\left( \Xi_s, \Gamma^* \right)} \left[ \sum_{i=1}^{n} \log \frac{\xi_s \left( X_i \right) \pi_{i}^* \left( X_i, \bfO_{i-1}; A_i \right) \gamma^* \left( \left. Y_i \right| X_i, A_i \right)}{\xi^* \left( X_i \right) \pi_{i}^* \left( X_i, \bfO_{i-1}; A_i \right) \gamma^* \left( \left. Y_i \right| X_i, A_i \right)} \right] \\
        = \ & \sum_{i=1}^{n} \bbE_{\left( \Xi_s, \Gamma^* \right)} \left[ \log \frac{\xi_s \left( X_i \right)}{\xi^* \left( X_i \right)} \right] \\
        = \ & n \cdot \textsf{KL} \left( \left. \Xi_s \right\| \Xi^* \right).
    \end{split}
\end{equation}
So if one can show that $\Xi_s (\cdot) \in \calN \left( \Xi^* \right)$, then the equation \eqref{eqn:proof_thm:local_minimax_lower_bound_v1_v4} guarantees that
\[
    \textsf{KL} \left( \left. \bbP_{\left( \Xi_s, \Gamma^* \right)}^{n} \right\| \bbP_{\left( \Xi^*, \Gamma^* \right)}^{n} \right) = n \cdot \textsf{KL} \left( \left. \Xi_s \right\| \Xi^* \right) \leq 1,
\]
which can be taken collectively with the bound \eqref{eqn:proof_thm:local_minimax_lower_bound_v1_v2} to produce the following conclusion:
\begin{equation}
    \label{eqn:proof_thm:local_minimax_lower_bound_v1_v5}
    \begin{split}
        \textsf{TV} \left( \bbP_{\left( \Xi_s, \Gamma^* \right)}^{n}, \bbP_{\left( \Xi^*, \Gamma^* \right)}^{n} \right) \leq \frac{1}{\sqrt{2}}.
    \end{split}
\end{equation}

With the arguments thus far in place, it remains to construct a family $\left\{ \Xi_s (\cdot) \in \Delta (\bbX): s \in \left( 0, +\infty \right) \right\}$ and then choose a parameter $s > 0$ such that $\Xi_s \in \calN \left( \Xi^* \right)$ and the functional values $\tau \left( \Xi_s, \Gamma^* \right)$ and $\tau \left( \Xi^*, \Gamma^* \right)$ are well-separated. To this end, we consider the function $\tilde{h} (\cdot): \bbX \to \bbR$ defined by
\[
    \tilde{h} (x) :=
    \begin{cases}
        h (x) & \textnormal{if } \left| h (x) \right| \leq 2 H_{2 \to 4} \sqrt{\bbE_{X \sim \Xi^*} \left[ h^2 (X) \right]}; \\
        \textsf{sign} \left( h (x) \right) \sqrt{\bbE_{X \sim \Xi^*} \left[ h^2 (X) \right]} & \textnormal{otherwise.}
    \end{cases}
\]
Since $H_{2 \to 4} \geq 1$, one can easily find that $\left| \tilde{h} (x) \right| \leq \left| h (x) \right|$ for all $x \in \bbX$. Now for each $s \in \left( 0, +\infty \right)$, we define the \emph{tilted probability measure} $\Xi_s (\cdot) \in \Delta (\bbX)$ by
\begin{equation}
    \label{eqn:proof_thm:local_minimax_lower_bound_v1_v6}
    \begin{split}
        \xi_s (x) = \frac{\mathrm{d} \Xi_s}{\mathrm{d} \lambda_{\bbX}} (x) := \frac{1}{\calZ (s)} \xi^* (x) \exp \left( s \tilde{h} (x) \right),\ \forall x \in \bbX,
    \end{split}
\end{equation}
where $\calZ (s) := \int_{\bbX} \xi^* (x) \exp \left( s \tilde{h} (x) \right) \mathrm{d} \lambda_{\bbX} (x) = \bbE_{X \sim \Xi^*} \left[ \exp \left( s \tilde{h} (X) \right) \right]$. At this point, we note for every $x \in \bbX$ that
\begin{equation}
    \label{eqn:proof_thm:local_minimax_lower_bound_v1_v7}
    \begin{split}
        \exp \left( - s \left\| \tilde{h} \right\|_{\infty} \right) \leq \exp \left( s \tilde{h} (x) \right) \leq \exp \left( s \left\| \tilde{h} \right\|_{\infty} \right), 
    \end{split}
\end{equation}
which also immediately yields
\begin{equation}
    \label{eqn:proof_thm:local_minimax_lower_bound_v1_v8}
    \begin{split}
        \exp \left( - s \left\| \tilde{h} \right\|_{\infty} \right) \leq \calZ (s) = \bbE_{X \sim \Xi^*} \left[ \exp \left( s \tilde{h} (X) \right) \right] \leq \exp \left( s \left\| \tilde{h} \right\|_{\infty} \right).
    \end{split}
\end{equation}
Here, we choose $s = \frac{1}{4 \left\| h \right\|_{L^2 \left( \Xi^* \right)} \sqrt{n}} > 0$. Then, it holds due to the fact $\left| \tilde{h} (x) \right| \leq 2 H_{2 \to 4} \left\| h \right\|_{L^2 \left( \Xi^* \right)}$ for all $x \in \bbX$ that
\begin{equation}
    \label{eqn:proof_thm:local_minimax_lower_bound_v1_v9}
    \begin{split}
        s \left\| \tilde{h} \right\|_{\infty} = \frac{1}{4 \sqrt{n}} \cdot \frac{\left\| \tilde{h} \right\|_{\infty}}{\left\| h \right\|_{L^2 \left( \Xi^* \right)}} \leq \frac{H_{2 \to 4}}{2 \sqrt{n}} \stackrel{\textnormal{(a)}}{\leq} \frac{1}{8},
    \end{split}
\end{equation}
where the step (a) follows due to the assumption that $n \geq 16 H_{2 \to 4}^2$. Now, it's time to prove that $\Xi_s \in \calN \left( \Xi^* \right)$ for the current choice of the parameter $s > 0$. Due to Theorem 5 in \cite{gibbs2002choosing}, it follows that
\begin{equation}
    \label{eqn:proof_thm:local_minimax_lower_bound_v1_v10}
    \begin{split}
        \textsf{KL} \left( \left. \Xi_s \right\| \Xi^* \right) \leq \log \left\{ 1 + \chi^2 \left( \left. \Xi_s \right\| \Xi^* \right) \right\} \leq \chi^2 \left( \left. \Xi_s \right\| \Xi^* \right).
    \end{split}
\end{equation}
So it suffices to upper bound the $\chi^2$-divergence $\chi^2 \left( \left. \Xi_s \right\| \Xi^* \right)$. One can reveal that
\begin{align}
    \label{eqn:proof_thm:local_minimax_lower_bound_v1_v11}
    \chi^2 \left( \left. \Xi_s \right\| \Xi^* \right) = \ & \textsf{Var}_{X \sim \Xi^*} \left[ \frac{\xi_s (X)}{\xi^* (X)} \right] \nonumber \\
    = \ & \frac{1}{\calZ^2 (s)} \textsf{Var}_{X \sim \Xi^*} \left[ \exp \left( s \tilde{h} (X) \right) \right] \nonumber \\
    \leq \ & \frac{1}{\calZ^2 (s)} \bbE_{X \sim \Xi^*} \left[ \left\{ \exp \left( s \tilde{h} (X) \right) - 1 \right\}^2 \right] \\
    \stackrel{\textnormal{(b)}}{\leq} \ & \exp \left( 2 s \left\| \tilde{h} \right\|_{\infty} \right) \bbE_{X \sim \Xi^*} \left[ \exp \left( 2 s \left| \tilde{h} (X) \right| \right) \cdot s^2 \tilde{h}^2 (X) \right] \nonumber \\
    \stackrel{\textnormal{(c)}}{\leq} \ & \exp \left( 4 s \left\| \tilde{h} \right\|_{\infty} \right) \cdot s^2 \bbE_{X \sim \Xi^*} \left[ h^2 (X) \right], \nonumber
\end{align}
where the step (b) makes use of the fact \eqref{eqn:proof_thm:local_minimax_lower_bound_v1_v8} together with the elementary bound $\left| \exp (u) - 1 \right| \leq |u| \exp \left( |u| \right)$, $\forall u \in \bbR$, and the step (c) follows from the fact $\left| \tilde{h} (x) \right| \leq \left| h (x) \right|$, $\forall x \in \bbX$. If we put $s = \frac{1}{4 \left\| h \right\|_{L^2 \left( \Xi^* \right)} \sqrt{n}}$ into the bound \eqref{eqn:proof_thm:local_minimax_lower_bound_v1_v11}, then we obtain from the fact $s \left\| \tilde{h} \right\|_{\infty} \leq \frac{1}{8}$ together with the basic inequality \eqref{eqn:proof_thm:local_minimax_lower_bound_v1_v10} that
\begin{equation}
    \label{eqn:proof_thm:local_minimax_lower_bound_v1_v12}
    \begin{split}
        \textsf{KL} \left( \left. \Xi_s \right\| \Xi^* \right) \leq \chi^2 \left( \left. \Xi_s \right\| \Xi^* \right) \leq 2 s^2 \left\| h \right\|_{L^2 \left( \Xi^* \right)}^2 = \frac{1}{8n},
    \end{split}
\end{equation}
which implies $\Xi_s \in \calN \left( \Xi^* \right)$ for the choice of the parameter $s = \frac{1}{4 \left\| h \right\|_{L^2 \left( \Xi^* \right)} \sqrt{n}} > 0$. Therefore, the upper bound on the total variation distance \eqref{eqn:proof_thm:local_minimax_lower_bound_v1_v5} turns out to be valid.
\medskip

Next, we lower bound the gap between the functional values $\tau \left( \Xi_s, \Gamma^* \right)$ and $\tau \left( \Xi^*, \Gamma^* \right)$. It holds that
\begin{equation}
    \label{eqn:proof_thm:local_minimax_lower_bound_v1_v13}
    \begin{split}
        \tau \left( \Xi_s, \Gamma^* \right) - \tau \left( \Xi^*, \Gamma^* \right) = \ & \bbE_{X \sim \Xi_s} \left[ \left\langle g (X, \cdot), \mu^* (X, \cdot) \right\rangle_{\lambda_{\bbA}} \right] - \tau \left( \calI^* \right) \\
        = \ & \frac{1}{\calZ (s)} \int_{\bbX} \xi^* (x) \exp \left( s \tilde{h} (x) \right) \underbrace{\left\{ \left\langle g (x, \cdot), \mu^* (x, \cdot) \right\rangle_{\lambda_{\bbA}} - \tau \left( \calI^* \right) \right\}}_{= \ h (x)} \mathrm{d} \lambda_{\bbX} (x) \\
        = \ & \frac{1}{\calZ (s)} \bbE_{X \sim \Xi^*} \left[ h (X) \exp \left( s \tilde{h} (X) \right) \right] \\
        = \ & \frac{\bbE_{X \sim \Xi^*} \left[ h (X) \exp \left( s \tilde{h} (X) \right) \right]}{\bbE_{X \sim \Xi^*} \left[ \exp \left( s \tilde{h} (X) \right) \right]}.
    \end{split}
\end{equation}
Since $s \left\| \tilde{h} \right\|_{\infty} \leq \frac{1}{8}$, we have $s \tilde{h} (X) \in \left[ - \frac{1}{4}, \frac{1}{4} \right]$ and therefore the simple inequality
\[
    \left| \exp (u) - 1 - u \right| \leq u^2,\ \forall u \in \left[ - \frac{1}{4}, \frac{1}{4} \right],
\]
implies
\begin{align}
    \label{eqn:proof_thm:local_minimax_lower_bound_v1_v14}
    &\bbE_{X \sim \Xi^*} \left[ h (X) \exp \left( s \tilde{h} (X) \right) \right] \nonumber \\
    \stackrel{\textnormal{(d)}}{\geq} \ & \underbrace{\bbE_{X \sim \Xi^*} \left[ h (X) \right]}_{= \ 0} + s \bbE_{X \sim \Xi^*} \left[ \left| h (X) \right| \left| \tilde{h} (X) \right| \right] - s^2 \bbE_{X \sim \Xi^*} \left[ \left| h (X) \right| \tilde{h}^2 (X) \right] \nonumber \\
    \stackrel{\textnormal{(e)}}{\geq} \ & s \bbE_{X \sim \Xi^*} \left[ \tilde{h}^2 (X) \right] - s^2 \sqrt{\bbE_{X \sim \Xi^*} \left[ h^2 (X) \right]} \underbrace{\sqrt{\bbE_{X \sim \Xi^*} \left[ h^4 (X) \right]}}_{= \ H_{2 \to 4} \cdot \bbE_{X \sim \Xi^*} \left[ h^2 (X) \right]} \\
    \stackrel{\textnormal{(f)}}{\geq} \ & \frac{s}{2} \bbE_{X \sim \Xi^*} \left[ h^2 (X) \right] - s^2 H_{2 \to 4} \left( \bbE_{X \sim \Xi^*} \left[ h^2 (X) \right] \right)^{\frac{3}{2}} \nonumber \\
    = \ & \frac{\left\| h \right\|_{L^2 \left( \Xi^* \right)}}{8} \left( \frac{1}{\sqrt{n}} - \frac{H_{2 \to 4}}{2n} \right) \nonumber \\
    \stackrel{\textnormal{(g)}}{\geq} \ & \frac{\left\| h \right\|_{L^2 \left( \Xi^* \right)}}{16 \sqrt{n}}, \nonumber
\end{align}
where the step (d) follows due to the fact that $\textsf{sign} \left( h (x) \right) = \textsf{sign} \left( \tilde{h} (x) \right)$, $\forall x \in \bbX$, the step (e) makes use of the property that $\left| \tilde{h} (x) \right| \leq \left| h (x) \right|$, $\forall x \in \bbX$, together with the Cauchy-Schwarz inequality, the step (f) follows due to Lemma 7 in \cite{mou2022off}, and the step (g) utilizes the assumption that $n \geq 16 H_{2 \to 4}^2$. Putting the lower bound \eqref{eqn:proof_thm:local_minimax_lower_bound_v1_v14} into the equation \eqref{eqn:proof_thm:local_minimax_lower_bound_v1_v13} yields
\begin{equation}
    \label{eqn:proof_thm:local_minimax_lower_bound_v1_v15}
    \begin{split}
        \tau \left( \Xi_s, \Gamma^* \right) - \tau \left( \Xi^*, \Gamma^* \right) \geq  \frac{\left\| h \right\|_{L^2 \left( \Xi^* \right)}}{16 \sqrt{n} \bbE_{X \sim \Xi^*} \left[ \exp \left( s \tilde{h} (X) \right) \right]} \stackrel{\textnormal{(h)}}{\geq} \frac{\left\| h \right\|_{L^2 \left( \Xi^* \right)}}{24 \sqrt{n}},
    \end{split}
\end{equation}
where the step (h) holds since $\bbE_{X \sim \Xi^*} \left[ \exp \left( s \tilde{h} (X) \right) \right] \leq \frac{3}{2}$, which follows from the fact $\left| s \tilde{h} (X) \right| \leq \frac{1}{8}$. Finally, by taking three pieces \eqref{eqn:proof_thm:local_minimax_lower_bound_v1_v1}, \eqref{eqn:proof_thm:local_minimax_lower_bound_v1_v5}, and \eqref{eqn:proof_thm:local_minimax_lower_bound_v1_v15} collectively, one completes the proof of Lemma \ref{lemma:local_minimax_lower_bound_v1}.

\subsection{Proof of Lemma \ref{lemma:local_minimax_lower_bound_v2}}
\label{subsec:proof_lemma:local_minimax_lower_bound_v2}

The proof of Lemma \ref{lemma:local_minimax_lower_bound_v2} is also heavily relies on Le Cam's two-point method. For $(i, s, z) \in [n] \times \left( 0, +\infty \right) \times \left\{ \pm 1 \right\}$, we consider the function $\mu_i (zs) (\cdot, \cdot): \bbX \times \bbA \to \bbR$ defined by
\begin{equation}
    \label{eqn:proof_thm:local_minimax_lower_bound_v2_v1}
    \begin{split}
        \mu_i (zs) (x, a) := \mu^* (x, a) + \frac{zs g(x, a)}{\overline{\pi}_i (x, a)} \sigma^2 (x, a),\ \forall (x, a) \in \bbX \times \bbA.
    \end{split}
\end{equation}
Also, we define the perturbed outcome kernel $\Gamma_i (zs) (\cdot, \cdot): \bbX \times \bbA \to \bbY$ as
\[
    \Gamma_i (zs) \left( \left. \cdot \right| x, a \right) := \calN \left( \mu_i (zs) (x, a), \sigma^2 (x, a) \right),\ \forall (x, a) \in \bbX \times \bbA.
\]
Then, due to Le Cam's two-point lemma, the local minimax risk over the class $\calC_{\delta} \left( \calI^* \right)$ can be lower bounded by
\begin{equation}
    \label{eqn:proof_thm:local_minimax_lower_bound_v2_v2}
    \begin{split}
        \calM_n \left( \calC_{\delta} \left( \calI^* \right) \right) \geq \frac{1}{4} \left\{ 1 - \textsf{TV} \left( \bbP_{\left( \Xi^*, \Gamma_i (s) \right)}^{n}, \bbP_{\left( \Xi^*, \Gamma_i (-s) \right)}^{n} \right) \right\} \left\{ \tau \left( \Xi^*, \Gamma_i (s) \right) - \tau \left( \Xi^*, \Gamma_i (-s) \right) \right\}^2,
    \end{split}
\end{equation}
provided that $\Gamma_i (zs) \in \calN_{\delta} \left( \Gamma^* \right)$ for $z \in \left\{ \pm 1 \right\}$.
\medskip

\indent We first derive an upper bound on the total variation distance $\textsf{TV} \left( \bbP_{\left( \Xi^*, \Gamma_i (s) \right)}^{n}, \bbP_{\left( \Xi^*, \Gamma_i (-s) \right)}^{n} \right)$. The Pinsker-Csisz\'{a}r-Kullback inequality implies that
\begin{equation}
    \label{eqn:proof_thm:local_minimax_lower_bound_v2_v3}
    \begin{split}
        \textsf{TV} \left( \bbP_{\left( \Xi^*, \Gamma_i (s) \right)}^{n}, \bbP_{\left( \Xi^*, \Gamma_i (-s) \right)}^{n} \right) \leq \sqrt{\frac{1}{2}
        \textsf{KL} \left( \left. \bbP_{\left( \Xi^*, \Gamma_i (s) \right)}^{n} \right\| \bbP_{\left( \Xi^*, \Gamma_i (-s) \right)}^{n} \right)}.
    \end{split}
\end{equation}
The \textsf{KL}-divergence $\textsf{KL} \left( \left. \bbP_{\left( \Xi^*, \Gamma_i (s) \right)}^{n} \right\| \bbP_{\left( \Xi^*, \Gamma_i (-s) \right)}^{n} \right)$ can be computed as
\begin{equation}
    \label{eqn:proof_thm:local_minimax_lower_bound_v2_v4}
    \begin{split}
        &\textsf{KL} \left( \left. \bbP_{\left( \Xi^*, \Gamma_i (s) \right)}^{n} \right\| \bbP_{\left( \Xi^*, \Gamma_i (-s) \right)}^{n} \right) \\
        = \ & \bbE_{\left( \Xi^*, \Gamma_i (s) \right)} \left[ \log \frac{p_{\left( \Xi^*, \Gamma_i (s) \right)}^n \left( \bfO_n \right)}{p_{\left( \Xi^*, \Gamma_i (-s) \right)}^n \left( \bfO_n \right)} \right] \\
        = \ & \bbE_{\left( \Xi^*, \Gamma_i (s) \right)} \left[ \sum_{i=1}^{n} \log \frac{\xi^* \left( X_i \right) \pi_{i}^* \left( X_i, \bfO_{i-1}; A_i \right) \gamma_i (s) \left( \left. Y_i \right| X_i, A_i \right)}{\xi^* \left( X_i \right) \pi_{i}^* \left( X_i, \bfO_{i-1}; A_i \right) \gamma_i (-s) \left( \left. Y_i \right| X_i, A_i \right)} \right] \\
        = \ & \sum_{i=1}^{n} \bbE_{\left( \Xi^*, \Gamma_i (s) \right)} \left[ \log \frac{\gamma_i (s) \left( \left. Y_i \right| X_i, A_i \right)}{\gamma_i (-s) \left( \left. Y_i \right| X_i, A_i \right)} \right].
    \end{split}
\end{equation}
Note that
\begin{equation}
    \label{eqn:proof_thm:local_minimax_lower_bound_v2_v5}
    \begin{split}
        \log \frac{\gamma_i (s) \left( \left. y \right| x, a \right)}{\gamma_i (- s) \left( \left. y \right| x, a \right)} = \ & - \frac{1}{2 \sigma^2 (x, a)} \left[ \left\{ y - \mu_i (s) (x, a) \right\}^2 - \left\{ y - \mu_i (-s) (x, a) \right\}^2 \right] \\
        = \ & \frac{s g (x, a)}{\overline{\pi}_i (x, a)} \left\{ 2 y - \mu_i (s) (x, a) - \mu_i (-s) (x, a) \right\}.
    \end{split}
\end{equation}
By utilizing the fact \eqref{eqn:proof_thm:local_minimax_lower_bound_v2_v5}, one can obtain from the equation \eqref{eqn:proof_thm:local_minimax_lower_bound_v2_v4} that
\begin{align}
    \label{eqn:proof_thm:local_minimax_lower_bound_v2_v6}
    &\textsf{KL} \left( \left. \bbP_{\left( \Xi^*, \Gamma_i (s) \right)}^{n} \right\| \bbP_{\left( \Xi^*, \Gamma_i (-s) \right)}^{n} \right) \nonumber \\
    = \ & \sum_{i=1}^{n} \bbE_{\left( \Xi^*, \Gamma_i (s) \right)} \left[ \bbE_{\left( \Xi^*, \Gamma_i (s) \right)} \left[ \left. \frac{s g \left( X_i, A_i \right)}{\overline{\pi}_i \left( X_i, A_i \right)} \left\{ 2 Y_i - \mu_i (s) \left( X_i, A_i \right) - \mu_i (-s) \left( X_i, A_i \right) \right\} \right| \left( X_i, A_i, \calH_{i-1} \right) \right] \right] \nonumber \\
    = \ & \sum_{i=1}^{n} \bbE_{\left( \Xi^*, \Gamma_i (s) \right)} \left[ \frac{s g \left( X_i, A_i \right)}{\overline{\pi}_i \left( X_i, A_i \right)} \left\{ \mu_i (s) \left( X_i, A_i \right) - \mu_i (-s) \left( X_i, A_i \right) \right\} \right] \nonumber \\
    = \ & 2 s^2 \sum_{i=1}^{n} \bbE_{\left( \Xi^*, \Gamma_i (s) \right)} \left[ \frac{g^2 \left( X_i, A_i \right) \sigma^2 \left( X_i, A_i \right)}{\overline{\pi}_{i}^2 \left( X_i, A_i \right)} \right] \\
    = \ & 2 s^2 \sum_{i=1}^{n} \bbE_{\calI^*} \left[ \frac{g^2 \left( X_i, A_i \right) \sigma^2 \left( X_i, A_i \right)}{\overline{\pi}_{i}^2 \left( X_i, A_i \right)} \right] \nonumber \\
    \stackrel{\textnormal{(a)}}{\leq} \ & 2 K^2 s^2 \sum_{i=1}^{n} \bbE_{\calI^*} \left[ \frac{g^2 \left( X_i, A_i \right) \sigma^2 \left( X_i, A_i \right)}{\left( \pi_{i}^* \right)^2 \left( X_i, \bfO_{i-1}; A_i \right)} \right] \nonumber \\
    = \ & 2 K^2 s^2 n \left\| \sigma \right\|_{(n)}^2, \nonumber
\end{align}
where the step (a) follows from the assumption \eqref{eqn:assumption_reference_Markov_policies}. If we put $s = \frac{1}{2K \sqrt{n} \left\| \sigma \right\|_{(n)}}$ into the inequality \eqref{eqn:proof_thm:local_minimax_lower_bound_v2_v6}, it follows that $\textsf{KL} \left( \left. \bbP_{\left( \Xi^*, \Gamma_i (s) \right)}^{n} \right\| \bbP_{\left( \Xi^*, \Gamma_i (-s) \right)}^{n} \right) \leq \frac{1}{2}$. Hence, by combining this conclusion together with the basic inequality \eqref{eqn:proof_thm:local_minimax_lower_bound_v2_v3}, we arrive at
\begin{equation}
    \label{eqn:proof_thm:local_minimax_lower_bound_v2_v7}
    \begin{split}
        \textsf{TV} \left( \bbP_{\left( \Xi^*, \Gamma_i (s) \right)}^{n}, \bbP_{\left( \Xi^*, \Gamma_i (-s) \right)}^{n} \right) \leq \frac{1}{2}.
    \end{split}
\end{equation}
At this point, we should note for every $(i, z, x, a) \in [n] \times \left\{ \pm 1 \right\} \times \bbX \times \bbA$ that
\begin{equation}
    \label{eqn:proof_thm:local_minimax_lower_bound_v2_v8}
    \begin{split}
        \left| \mu^* (x, a) - \mu_i (sz) (x, a) \right| = \ & \frac{s \left| g (x, a) \right| \sigma^2 (x, a)}{\overline{\pi}_i (x, a)} \\
        = \ & \frac{1}{2 \sqrt{K}} \cdot \frac{\left| g (x, a) \right| \sigma^2 (x, a)}{\sqrt{n} \overline{\pi}_i (x, a) \left\| \sigma \right\|_{(n)}} \\
        \stackrel{\textnormal{(b)}}{\leq} \ & \frac{\delta (x, a)}{2 \sqrt{K}} \\
        \stackrel{\textnormal{(c)}}{\leq} \ & \delta (x, a),
    \end{split}
\end{equation}
where the step (b) holds due to Assumption \ref{assumption:ln}, and the step (c) utilizes the fact that $K \geq 1$, which establishes that $\Gamma_i (zs) \in \calN_{\delta} \left( \Gamma^* \right)$ for $z \in \left\{ \pm 1 \right\}$ and thus the local minimax lower bound \eqref{eqn:proof_thm:local_minimax_lower_bound_v2_v2} turns out to be valid.
\medskip

\indent Next, it's time to aim at establishing a lower bound on the gap between the functional values $\tau \left( \Xi^*, \Gamma_i (s) \right)$ and $\tau \left( \Xi^*, \Gamma_i (-s) \right)$. One can observe that
\begin{equation}
    \begin{split}
        \label{eqn:proof_thm:local_minimax_lower_bound_v2_v9}
        \tau \left( \Xi^*, \Gamma_i (s) \right) - \tau \left( \Xi^*, \Gamma_i (-s) \right) = \ & \bbE_{X \sim \Xi^*} \left[ \left\langle g (X, \cdot), \mu_i (s) (X, \cdot) - \mu_i (-s) (X, \cdot) \right\rangle_{\lambda_{\bbA}} \right] \\
        = \ & 2s \cdot \bbE_{\calI^*} \left[ \int_{\bbA} \frac{g^2 \left( X_i, a \right) \sigma^2 \left( X_i, a \right)}{\overline{\pi}_i \left( X_i, a \right)} \mathrm{d} \lambda_{\bbA} (a) \right] \\
        \stackrel{\textnormal{(d)}}{\geq} \ & \frac{2s}{K} \cdot \bbE_{\calI^*} \left[ \int_{\bbA} \frac{g^2 \left( X_i, a \right) \sigma^2 \left( X_i, a \right)}{\pi_{i}^* \left( X_i, \bfO_{i-1}; a \right)} \mathrm{d} \lambda_{\bbA} (a) \right] \\
        = \ & \frac{2s}{K} \cdot \bbE_{\calI^*} \left[ \frac{g^2 \left( X_i, A_i \right) \sigma^2 \left( X_i, A_i \right)}{\left( \pi_{i}^* \right)^2 \left( X_i, \bfO_{i-1}; A_i \right)} \right] \\
        = \ & \frac{1}{K^{2} \sqrt{n} \left\| \sigma \right\|_{(n)}} \bbE_{\calI^*} \left[ \frac{g^2 \left( X_i, A_i \right) \sigma^2 \left( X_i, A_i \right)}{\left( \pi_{i}^* \right)^2 \left( X_i, \bfO_{i-1}; A_i \right)} \right],
    \end{split}
\end{equation}
where the step (d) holds by the assumption \eqref{eqn:assumption_reference_Markov_policies}. Taking three pieces \eqref{eqn:proof_thm:local_minimax_lower_bound_v2_v2}, \eqref{eqn:proof_thm:local_minimax_lower_bound_v2_v7}, and \eqref{eqn:proof_thm:local_minimax_lower_bound_v2_v9} collectively, we have
\begin{equation}
    \label{eqn:proof_thm:local_minimax_lower_bound_v2_v10}
    \begin{split}
        \calM_n \left( \calC_{\delta} \left( \calI^* \right) \right) \geq \frac{1}{8 K^4 n \left\| \sigma \right\|_{(n)}^2} \left( \bbE_{\calI^*} \left[ \frac{g^2 \left( X_i, A_i \right) \sigma^2 \left( X_i, A_i \right)}{\left( \pi_{i}^* \right)^2 \left( X_i, \bfO_{i-1}; A_i \right)} \right] \right)^2
    \end{split}
\end{equation}
for all $i \in [n]$. Hence, one can conclude by taking an average of the local minimax lower bound \eqref{eqn:proof_thm:local_minimax_lower_bound_v2_v10} over $i \in [n]$ that
\begin{equation}
    \label{eqn:proof_thm:local_minimax_lower_bound_v2_v11}
    \begin{split}
        \calM_n \left( \calC_{\delta} \left( \calI^* \right) \right) = \ & \frac{1}{n} \sum_{i=1}^{n} \calM_n \left( \calC_{\delta} \left( \calI^* \right) \right) \\
        \geq \ & \frac{1}{8 K^4 n^2 \left\| \sigma \right\|_{(n)}^2} \sum_{i=1}^{n} \left( \bbE_{\calI^*} \left[ \frac{g^2 \left( X_i, A_i \right) \sigma^2 \left( X_i, A_i \right)}{\left( \pi_{i}^* \right)^2 \left( X_i, \bfO_{i-1}; A_i \right)} \right] \right)^2 \\
        \stackrel{\textnormal{(e)}}{\geq} \ & \frac{1}{8 K^4 n^3 \left\| \sigma \right\|_{(n)}^2} \underbrace{\left( \sum_{i=1}^{n} \bbE_{\calI^*} \left[ \frac{g^2 \left( X_i, A_i \right) \sigma^2 \left( X_i, A_i \right)}{\left( \pi_{i}^* \right)^2 \left( X_i, \bfO_{i-1}; A_i \right)} \right] \right)^2}_{= \ n^2 \left\| \sigma \right\|_{(n)}^4} \\
        = \ & \frac{1}{8 K^4} \cdot \frac{\left\| \sigma \right\|_{(n)}^2}{n},
    \end{split}
\end{equation}
where the step (e) makes use of the Cauchy-Schwarz inequality.

\end{document}